\newcommand{\re}[1]{{#1}}
\setlist{leftmargin=*, itemsep=0pt}
\title{Personalized Collaborative Learning with Affinity-Based Variance Reduction}
\author{Chenyu Zhang\\
Massachusetts Institute of Technology\\
\texttt{zcysxy@mit.edu}\\
\And
Navid Azizan\\
Massachusetts Institute of Technology\\
\texttt{azizan@mit.edu}\\
}
\renewcommand{\paragraph}[1]{\textbf{#1.}\ }
\begin{document}
\maketitle

\begin{abstract}
	Multi-agent learning faces a fundamental tension: leveraging distributed collaboration without sacrificing the personalization needed for diverse agents.
	This tension intensifies when aiming for full personalization while adapting to unknown heterogeneity levels---gaining collaborative speedup when agents are similar, without performance degradation when they are different. 
	Embracing the challenge, we propose personalized collaborative learning (PCL), a novel framework for heterogeneous agents to collaboratively learn personalized solutions with seamless adaptivity.
	Through carefully designed bias correction and importance correction mechanisms, our method \PCL robustly handles both environment and objective heterogeneity.
	We prove that \PCL reduces sample complexity over independent learning by a factor of $\max\{n^{-1}, \delta\}$, where $n$ is the number of agents and $\delta\in[0,1]$ measures their heterogeneity. 
	This \emph{affinity-based} acceleration automatically interpolates between the linear speedup of federated learning in homogeneous settings and the baseline of independent learning, without requiring prior knowledge of the system.
	Our analysis further reveals that an agent may obtain linear speedup even by collaborating with arbitrarily dissimilar agents, unveiling new insights into personalization and collaboration in the high heterogeneity regime.
\end{abstract}

\section{Introduction} 
\label{sec:intro}

Heterogeneity is a defining yet formidable characteristic of multi-agent systems.
When agents differ significantly, their incentives to collaborate diminish, as leveraging experience from others can introduce bias and impede their own learning. This challenge intensifies in scenarios where strategic agents seek highly accurate, tailored solutions.
Collaborative multi-agent systems commonly adopt a federated learning (FL) setup, where agents communicate via a central server to jointly learn a unified solution. However, in the presence of heterogeneity, such unified solutions often prove suboptimal or even irrelevant for individual agents.
Consequently, effective personalization becomes essential for collaborative learning among heterogeneous agents.

This need is evident in real-world applications:
personalized recommendations drive user engagement \citep{%
	good1999Combiningcollaborative,
	anand2005IntelligentTechniques,%
	khribi2008Automaticrecommendations%
},
autonomous transportation must accommodate local traffic conditions \citep{%
	huang2021safepersonalized,%
	you2024personalizedautonomous,%
    jalota2023online%
},
robots need to adapt to different operating environments \citep{%
    richards2023control,%
    tang2025meta%
},
diverse patient profiles require tailored treatments \citep{%
	chen2022Personalizedretrogressresilient,%
	tang2024Personalizedfederated%
},
and agentic language models need to adapt to specific user styles and task contexts \citep{li2024PersonalizedLanguage,wozniak2024Personalizedlarge,bose2025LoRePersonalizing,nobari2025activation}.

These considerations motivate the following multi-agent decision-making setup.
\begin{enumerate}
	\item \textbf{Personalized}. Agents are intrinsically heterogeneous, each with arbitrarily distinct environments and objectives, and act strategically to optimize their own goals.
	\item \textbf{Collaborative}. Agents communicate through a central server that aggregates information from agents and broadcasts back the aggregated result.
	\item \textbf{Learning}. Agents have no prior knowledge of their systems and interact only with local environments that generate stochastic observations of system parameters.
\end{enumerate}

Such a complex, stochastic, and heterogeneous multi-agent system demands, but also challenges, the design of a \emph{personalized collaborative learning} algorithm that can
\begin{enumerate*}[label=(\arabic*)]
	\item find fully personalized solutions for all agents,
	\item achieve performance gains through collaboration,
	\item and adapt to unknown heterogeneity among agents without prior knowledge, automatically harnessing greater collaboration benefits when agents are similar, and falling back to, while ensuring no worse performance than, non-collaborative independent learning when agents are markedly different.
\end{enumerate*}

This work reveals that the key to achieving these goals lies in identifying and exploiting \emph{affinity}, i.e., similarity among agents.
Formally, we capture agent heterogeneity through a score $\delta\in[0,1]$, with $\delta=0$ indicating homogeneous agents and larger values of $\delta$ indicating greater heterogeneity.
For any agent, our method finds its personalized solution with a mean squared error of
\[\label{eq:rate}
	O( t^{-1} \cdot \max \{  n^{-1} , \delta\} )
	,\]
where $t$ is the number of samples collected by each agent and $n$ is the number of agents.
This finite-sample complexity enjoys federated speedup linear in $n$ when agents are similar, while it gracefully reduces to the baseline rate of independent learning $O(t^{-1})$ when agents are highly heterogeneous, but never worse.
In intermediate regimes, \emph{affinity-based} acceleration manifests.

We summarize our main contributions:
\begin{enumerate}
	\item We formulate a novel multi-agent decision-making paradigm of personalized collaborative learning (PCL), encompassing applications and problems in supervised learning, reinforcement learning (\cref{sec:td}), and statistical decision-making.
	\item We develop a simple yet effective method that realizes the vision of PCL, called \PCL, which finds fully personalized solutions and adaptively harnesses collaboration benefits when agents are similar while ensuring no worse performance than independent learning when they are highly heterogeneous.
	      Our method robustly handles arbitrary objective and environment heterogeneity through principled personalized bias correction and importance correction mechanisms.
	\item We establish finite-sample convergence guarantees for \PCL, achieving the rate in \cref{eq:rate} and thus demonstrating the desired phenomenon of \emph{affinity-based variance reduction}.
	      This rate adaptively interpolates between the linear speedup of FL and the minimax optimal rate of independent learning.
	      This is the first result that proves efficiency gains for learning fully personalized solutions through collaboration among arbitrarily heterogeneous agents.
	\item We further enhance \PCL with features including asynchronous importance estimation and agent-specific update schemes. 
	      Our agent-specific analysis reveals that an agent may achieve linear speedup even when it is dissimilar to all others, a phenomenon unattainable in prior frameworks.
\end{enumerate}

\subsection{Related work} 

We focus on the most relevant works in heterogeneous collaborative learning that motivate this study.

\paragraph{Personalization falls short in federated learning}
Classical FL methods \citep{mcmahan2017Communicationefficientlearning} aim for a unified solution for all agents without personalization guarantees.
The unified objective mitigates heterogeneity; for instance, bias correction in heterogeneous FL \citep{karimireddy2021SCAFFOLDStochastic,yongxin2022fedbr,sai2020mime,liang2022feddc}, which prevents local updates from drifting away from the central update direction, is averaged across all agents and thus enjoys federated variance reduction (see also \cref{sec:fl}).
In contrast, personalization requires adjusting the central update relative to each agent's unique local direction, which precludes federated variance reduction.

The growing literature on (partially) personalized FL highlights the importance of personalization.
A common strategy combines global and local models through regularization or mixtures \citep{li2020Federatedoptimization,hanzely2021FederatedLearning,tdinh2020Personalizedfederated,li2021DittoFair,deng2020AdaptivePersonalized}, but such methods offer only \emph{partial} personalization and the trade-offs may be heuristic.
Similarly, clustering-based methods \citep{sattler2020Clusteredfederated,mansour2020ThreeApproaches,ghosh2020efficientframework,briggs2020Federatedlearning,chai2020TiFLTierbased,grimberg2021OptimalModel} do not offer personalization within each cluster and may be sensitive to hyperparameter tuning or prior knowledge.
In contrast, PCL aims for \emph{full} personalization and seamless adaptivity, requiring neither prior knowledge of heterogeneity nor hyperparameter tuning.


\paragraph{Slower rates in independent learning}
Other personalized learning approaches combine FL and independent learning. A sequential strategy uses FL as a warm start followed by independent fine-tuning \citep{fallah2020PersonalizedFederated,cheng2021Finetuningfine}; while effective in some cases, this approach is generally rate-suboptimal, as the small initialization error through FL diminishes faster than the variance from independent learning, making its change in finite-time complexity marginal.
A parallel approach simultaneously learns a shared global component and a personalized local component \citep{pillutla2022Federatedlearning,xiong2024LinearSpeedup,liang2020ThinkLocally}; this approach requires certain global-local structures, and similarly, the independent learning component dominates the overall complexity, obscuring collaborative speedup.
In contrast, PCL imposes no structural assumptions, accommodates arbitrarily heterogeneous agents, and aims for provably faster rates than independent learning.

\paragraph{Curse of heterogeneity in collaborative learning}
Closest to our setup, \citet{chayti2022LinearSpeedup,even2022SampleOptimalitya} also study full personalization with arbitrarily heterogeneous systems, but with fundamentally different approaches from ours in handling heterogeneity to achieve collaborative variance reduction.
First, they selectively collaborate with similar agents, effectively reducing to clustering-based methods or low heterogeneity regimes, whereas \PCL enables collaboration among all agents regardless of similarity.
With \PCL, an agent may attain linear speedup even when it's not similar to any other agent (\cref{sec:asa}), which is unattainable in their frameworks.
Second, achieving optimal speedup in their setting requires either knowledge of objective heterogeneity \citep{even2022SampleOptimalitya} or access to a bias estimation oracle whose variance reduces linearly in the number of agents \citep{chayti2022LinearSpeedup}, which is a strong assumption as bias estimation for personalization is inherently agent-specific, and its variance does not reduce with more agents.
In contrast, \PCL requires no prior knowledge or bias estimation oracle, and enjoys affinity-based variance reduction fully adaptively.

\subsection{Problem formulation} 

We consider a general multi-agent linear system:
\[\label{eq:sys}
	\bar{A}^{i} x\is = \bar{b}^{i}, \quad i=1,\ldots,n
	,\]
where $\operatorname{sym}(\bar{A}^{i}) = \frac{1}{2}(\bar{A}^{i} + (\bar{A}^{i})^{T}) \succ 0$. Each agent aims to find the fixed point $x\is$ of its system with access to only stochastic observations $A(s\it)\in\R^{d\times d}$ and $b^{i}(s\it)\in\R^{d}$ evaluated at its local random state $s\it\in \mathcal{S}$ independently sampled from its distinct environment distribution $\mu^{i}\in\Delta(\mathcal{S})$ at time step $t$.
The stochastic observations are unbiased such that $\bar{A}^{i} = \mathbb{E}_{\mu^{i}}A(s\i)$ and $\bar{b}^{i} = \mathbb{E}_{\mu^{i}}b^{i}(s\i)$.

\paragraph{Terminology and notation}
Our system modeling draws inspiration from various fields, including supervised learning, reinforcement learning, and statistical decision-making, where $(A, b, \mu)$ are commonly referred to as (feature, label, covariate distribution), (function approximation, reward, stationary distribution), and (measurement, response, data distribution), respectively.
To appeal to a broader audience and align with our setup, we refer to $A$ as the \emph{feature} embedding matrix, $b$ as the \emph{objective} vector, and $\mu$ as the \emph{environment} distribution. As is common in practice, we assume that all agents share the same feature extractor $A$, but may have different objectives $b^{i}$ and environments $\mu^{i}$, referred to as \emph{objective heterogeneity} and \emph{environment heterogeneity}, respectively.

Throughout the paper,
superscript $i$ denotes quantities related to agent $i$ and superscript $0$ denotes the \emph{averaged} quantity across all agents, i.e., $f^{0} = \frac{1}{n}\sum_{i=1}^{n}f^{i}$ for any quantity $f$.
The averaged quantity may be explicitly aggregated by the central server, or it can represent a virtual quantity only used for analysis.
We write $[n] \coloneqq \{1,\ldots,n\}$.
For any function $f^{i}$ on $\S$, $\bar{f}^{i}$ denotes the expectation of $f^{i}$ under the corresponding environment distribution $\mu^{i}$, i.e., $\bar{f}^{i} = \mathbb{E}_{\mu^{i} }f^{i}(s^{i})$.
For an unknown quantity $f$, its estimate learned at time step $t$ is denoted by $\hat{f}_{t}$.
The default norm is the Euclidean norm for vectors, operator norm for matrices, and total variation norm for distribution differences.
\cref{apx:nota} contains a complete list of notation.

\paragraph{Roadmap}
This paper adopts a progressive approach to first develop insights in stylized settings and then incrementally extend to more complex scenarios.
We start with a simplified FL setup (\cref{sec:fl}), then gradually introduce personalization (\cref{sec:het-known-objective}), adaptivity (\cref{sec:objective}), environment heterogeneity (\cref{sec:revist}), and finally arrive at the most general setup \eqref{eq:sys} in \cref{sec:k-het-pcl}.
Several theoretical extensions are discussed in \cref{sec:extension} and numerical results are presented in \cref{sec:exp}.

\section{Warm-up: Heterogeneous federated learning} \label{sec:fl} 

We start by reviewing heterogeneous FL, a variant of \cref{eq:sys} where agents with distinct objectives collaborate to find a unified solution $x\cs$ satisfying
\[\label{eq:sys-central}
	\bar{A}^{0} x\cs= \bar{b}^{0}
	,\]
where $\bar{A}^{0}=\frac{1}{n}\sum_{i=1}^{n}\mathbb{E}_{\mu^{i}}[A(s)]$ and $\bar{b}^{0}=\frac{1}{n}\sum_{i=1}^{n}\mathbb{E}_{\mu^{i}}[b^{i}(s)]$.\footnote{The unified-solution setting is also related to distributed linear-system solvers \citep{azizan2019distributed,velasevic2023effects}, but those assume that the equations are partitioned across agents.} 
This warm-up section assumes homogeneous environment distributions $\mu^{i} \equiv \mu$ for all $i\in[n]$, and thus we can drop the superscript of $\bar{A}$.
Since $\operatorname{sym}(\bar{A})$ is positive definite, in a federated stochastic approximation setting, each agent adopts the following fixed-point iteration:
\[
	x\itt = x\it - \alpha_t g\it(x\it),\quad \text{where}\quad
	g\it(x\it) \coloneqq A(s\it)x\it - b(s\it)
	,\]
where $\alpha_t$ is the step size and the update direction $g\it$ is the stochastic residual at time step $t$.

To focus on the main ideas, this work considers a simplified communication scheme, where agents communicate with a central server at every time step. In FL, agents send their local updates to the server, which aggregates them to get the central decision variable $x\ctt$ and broadcasts it back $x\itt\gets x\ctt$. The resultant central update rule is then given by
\[\label{eq:fl}
	x^{c}_{t+1} = x^{c}_t - \alpha_t g^{0}_t (x^{c}_t),\ \text{where}\ \
	g^{0}_t(x^{c}_t) \coloneqq\! \frac{1}{n}\sum_{i=1}^{n} g\it(x^{c}_{t}) = \frac{1}{n}\sum_{i=1}^{n}\!A(s\it) x^{c}_t -  \frac{1}{n}\sum_{i=1}^{n}b\i(s\it)
	.\]
We note that in this FL setting, the local decision variables are always synced with the central one, and thus we have $g\it(x\it)=g\it(x\ct)$. Moreover, we can write the central decision variable as the average of the local ones: $x\ct = \frac{1}{n}\sum_{i=1}^{n}x\it = x\ot$. However, this equivalence becomes obsolete when we introduce heterogeneous environments and personalization.

\paragraph{Constants}
We define the following constants used throughout.
$\lambda \coloneqq \min_{i}\lambda_{\min}(\operatorname{sym}(\bar{A}^{i} )) > 0$ ensures strong monotonicity of the fixed-point iteration and controls the convergence rate; an analogous condition in optimization is $\lambda$-strong convexity or $\lambda$-PL condition of the objective function \citep{nesterov2013Introductorylectures}.
$G_{A}\coloneqq \max_{i}\sup_{s}\|A^{i}(s)\|$, $G_{b}\coloneqq \max_{i}\sup_{s}\|b^{i}(s)\|$, and $G_{x}\coloneqq \max_{i}\|x\is\|$ upper bound the system parameters.
Let $\sigma\coloneqq 2 \max\{ G_{A}G_{x} , G_{b}\}$ represent the scale of the system,
which can also be thought of as the variance proxy of the update direction at the solution point, since $\|g\it(x\is)\| \le \|A(s\it)\|\|x\is\| + \|b^{i} (s\it)\| \le G_{A}G_{x} + G_{b} \le \sigma$; its analogy in optimization is the objective function gradient's Lipschitz constant.
We then define $\kappa \coloneqq\sigma /\lambda$ as the condition number of the stochastic system.
Without loss of generality, we use 1 as the variance proxy of the environment distributions, in the sense that 
$\tr\Var_{\mu}(f(s)) = \mathbb{E}_{\mu}\|f(s)\|^{2} \le G_{f}^{2}$, which holds for any zero-mean operator $f$ with $\esssup_{s \sim \mu}\|f(s)\|\le G_{f}$.

We have the following convergence guarantee for heterogeneous FL.%
\footnote{All proofs are deferred to \cref{apx:objective,apx:central,apx:local}, where we progressively establish the main result \cref{thm} and cover all the propositions in the main text.}

\begin{proposition} \label{lem:fl}
	With a constant step size $\alpha \equiv \ln t /(\lambda t)$, \cref{eq:fl} satisfies
	\[
		\mathbb{E}\|x\ct - x\cs\|^{2} = \widetilde{O}( \kappa^{2} t^{-1} n^{-1})
		,\]
	where $\widetilde{O}$ suppresses the logarithmic dependence on $\ln t$.\footnote{The $\ln t$ dependence can be removed by using a linearly diminishing step size and considering a convex combination of the iterates $\{x^{c}_{\tau}\}_{\tau=0}^{t}$, as specified in \cref{lem:step}. This refinement applies to all results in the main text. For brevity, we defer the related discussion to the appendix and omit this remark in subsequent results.}
\end{proposition}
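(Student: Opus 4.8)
The plan is to track the central error $e_t \coloneqq x\ct - x\cs$ and reduce the claim to a standard stochastic-approximation recursion whose only non-routine ingredient is a variance bound carrying an explicit factor $n^{-1}$ from averaging conditionally independent per-agent noise. Subtracting $x\cs$ from the central update \cref{eq:fl} and using $\bar A x\cs = \bar b^{0}$ from \cref{eq:sys-central}, I would write $e_{t+1} = (I - \alpha \bar A)e_t - \alpha \xi_t$, where $\xi_t \coloneqq g^{0}_t(x\ct) - \bar A e_t$ collects the aggregated noise. Since the observations are unbiased, $\mathbb{E}[\xi_t \mid \mathcal{F}_t] = 0$ for the natural filtration $\mathcal{F}_t$, so squaring and taking conditional expectation annihilates the cross term and leaves $\mathbb{E}[\|e_{t+1}\|^2 \mid \mathcal{F}_t] = \|(I-\alpha\bar A)e_t\|^2 + \alpha^2\,\mathbb{E}[\|\xi_t\|^2 \mid \mathcal{F}_t]$.

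\emph{The two bounds.} For the drift, strong monotonicity $\operatorname{sym}(\bar A)\succeq\lambda I$ gives $\langle\bar A e_t,e_t\rangle\ge\lambda\|e_t\|^2$ together with $\|\bar A\|\le G_A$, hence $\|(I-\alpha\bar A)e_t\|^2\le(1-2\alpha\lambda+\alpha^2 G_A^2)\|e_t\|^2$. The essential step is the variance bound. Writing $\xi_t=\tfrac1n\sum_{i}\xi\it$ with $\xi\it \coloneqq g\it(x\ct)-(\bar A x\ct-\bar b^{i})$, each $\xi\it$ is conditionally zero-mean, and because the states $s\it$ are sampled independently across agents these noises are conditionally independent; therefore $\mathbb{E}[\|\xi_t\|^2\mid\mathcal{F}_t]=\tfrac1{n^2}\sum_i\mathbb{E}[\|\xi\it\|^2\mid\mathcal{F}_t]\le\tfrac1n\max_i\mathbb{E}[\|\xi\it\|^2\mid\mathcal{F}_t]$, which is exactly where the $n^{-1}$ appears. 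I would then decompose $\xi\it=(A(s\it)-\bar A)e_t+(g\it(x\cs)-\mathbb{E}[g\it(x\cs)\mid\mathcal{F}_t])$: the first, error-proportional, part is at most $4G_A^2\|e_t\|^2$, while the second is the noise at the optimum, whose second moment is at most $\sigma^2$ once one observes that $x\cs=\tfrac1n\sum_i x\is$ (since $\bar A x\is=\bar b^{i}$), so $\|x\cs\|\le G_x$ and $\|g\it(x\cs)\|\le G_A\|x\cs\|+G_b\le\sigma$. This yields $\mathbb{E}[\|\xi\it\|^2\mid\mathcal{F}_t]\le C(G_A^2\|e_t\|^2+\sigma^2)$ for an absolute constant $C$.

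\emph{Combine and unroll.} Substituting the two bounds gives $\mathbb{E}\|e_{t+1}\|^2\le(1-2\alpha\lambda+\alpha^2 G_A^2(1+C/n))\mathbb{E}\|e_t\|^2+\tfrac{C\alpha^2\sigma^2}{n}$; for $t$ large enough that $\alpha=\ln t/(\lambda t)$ falls below the threshold $\alpha G_A^2(1+C/n)\le\lambda$, the error-proportional noise is absorbed into the drift and the contraction factor is $\le 1-\alpha\lambda$. Unrolling the resulting geometric recursion over $t$ steps of this constant step size gives
\[
	\mathbb{E}\|e_t\|^2\le(1-\alpha\lambda)^t\|e_0\|^2+\frac{C\alpha\sigma^2}{n\lambda}.
\]
With $\alpha=\ln t/(\lambda t)$ the variance term equals $C\sigma^2\ln t/(n\lambda^2 t)=\widetilde O(\kappa^2 t^{-1}n^{-1})$ via $\kappa=\sigma/\lambda$, while the bias term obeys $(1-\alpha\lambda)^t\le e^{-\alpha\lambda t}=t^{-1}$ and $\|e_0\|=O(G_x)=O(\kappa)$, so it is $O(\kappa^2 t^{-1})$---a factor $n/\ln t\to0$ smaller than the variance term, hence asymptotically subsumed into $\widetilde O(\kappa^2 t^{-1}n^{-1})$.

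\emph{Main obstacle.} The crux is the variance step: obtaining $n^{-1}$ requires the cross-agent conditional independence (from independent sampling under homogeneous environments), and the decomposition of $\xi\it$ around $x\cs$ must be clean enough that the error-proportional noise merges into the contraction while only the $\sigma^2$ noise-at-the-optimum term survives as the persistent $\sigma^2/n$ floor. A secondary delicacy is the step-size calibration: the logarithmic inflation of $\alpha$ is precisely what forces the initialization term to decay like $t^{-1}$, so that it is dominated by the variance term as $t\to\infty$ and the advertised $n^{-1}$ speedup surfaces at leading order.
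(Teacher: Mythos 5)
Your proof is correct, and its skeleton matches the paper's: a one-step MSE recursion combining a strong-monotonicity drift bound with a variance bound carrying the $n^{-1}$ factor, followed by geometric unrolling under the constant step size (the paper executes this via \cref{lem:cdl-gd-1,lem:cdl-var-1,lem:central-1,lem:step} in \cref{apx:central}, of which the proposition is the special case with homogeneous environments and no estimation error). Where you genuinely depart is the variance step. The paper decomposes the aggregated direction around the central solution, $g^{0}_t(x^{c}_t) = \frac{1}{n}\sum_i A(s^{i}_t)(x^{c}_t - x^{c}_*) + g^{0}_t(x^{c}_*)$, and must then confront the fact that each agent's term $g^{i}_t(x^{c}_*)$ is \emph{biased} under objective heterogeneity ($\mathbb{E}\,g^{i}_t(x^{c}_*) = \bar{A}x^{c}_* - \bar{b}^{i} \neq 0$); the cross terms are killed only collectively, via the identity $\sum_{i\neq j}\bigl\langle \mathbb{E}g^{i}, \mathbb{E}g^{j}\bigr\rangle = \bigl\|\sum_i \mathbb{E}g^{i}\bigr\|^2 - \sum_i\bigl\|\mathbb{E}g^{i}\bigr\|^2 \le 0$ (the ``federated'' trick made explicit in the proof of \cref{lem:coe-var}). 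You sidestep this entirely by centering each agent's noise at its own conditional mean, $\xi^{i}_t = g^{i}_t(x^{c}_t) - (\bar{A}x^{c}_t - \bar{b}^{i})$, so the per-agent noises are exactly zero-mean and independent across agents, and the variance of the average is exactly $n^{-2}\sum_i$ of per-agent variances---no cross-term subtlety ever arises, and as a bonus your error-proportional noise also inherits the $n^{-1}$ factor. The trade-off: your centering exploits the structure of this homogeneous-environment warm-up and is cleaner here, whereas the paper's bias-tolerant bookkeeping is the form that survives the generalization to heterogeneous environments, where evaluation at the solution point is what later produces the affinity-dependent terms. One shared caveat, which you at least make explicit: the initialization term $O(\kappa^2 t^{-1})$ is subsumed into $\widetilde{O}(\kappa^2 t^{-1}n^{-1})$ only once $\ln t \gtrsim n$, a regime the paper's \cref{lem:step} also implicitly assumes when it absorbs $\mathcal{L}_0$ into the $C\ln t$ term.
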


The mean squared error (MSE) of FL vanishes linearly as $t$ goes to infinity, with the rate scaled by the problem scale $\sigma$ and controlled by $\lambda$.
The federated collaboration contributes linear speedup in terms of the number of agents $n$.
\cref{lem:fl} is tight in $\kappa$, $t$, and $n$ \citep{woodworth2020localSGD,karimireddy2021SCAFFOLDStochastic,glasgow2022Sharpbounds}, and serves as a baseline for our subsequent results.

\section{Introducing personalization: Personalized bias correction} 
\label{sec:het-known-objective}

Due to heterogeneity, the unified solution described in \cref{sec:fl} is generally suboptimal for individual agents, and becomes less relevant as the heterogeneity level grows.
More realistically, strategic agents seek \emph{personalized} solutions:
\[\label{eq:sys-pcl}
	\bar{A}x\is = \bar{b}^{i}, \quad i\in[n]
	.\]
To build intuition, this section makes two simplifications to be relaxed in the next two sections: agents have the same environment distribution, and the central objective $b^{0}(s\it) = \frac{1}{n}\sum_{i=1}^{n}b^{i}(s\it)$ is known to agent $i$ upon observing $s\it$.
With access to the central objective, we propose affinity-aware personalized collaborative learning (\PCL), a simple yet effective update rule for each agent:
\[
	\label{eq:het-pcl-alg}
	x\itt = x\it - \alpha_t \tilde{g}\it, \quad\text{where}\quad
	\tilde{g}_t^i = g_t^i(x_t^i) + g^{0}_{t}(x^{0}_{t}) - g^{0\tos i}_{t}(x^{0}_t)
	,\]
where the update direction consists of three components:
\[
	g_t^i(x_t^i) = A(s_t^i)x_t^i - b^{i}(s_t^i), \quad
	g^{0}_{t}(x^{0}_{t}) = \ts\frac{1}{n}\sum_{i=1}^{n}g_t^i(x^{0}_{t}), \quad
	g^{0\tos i}_{t}(x^{0}_t) = A(s_t^i)x^{0}_{t} - b^{0}(s_t^i)
	.\]

Recall that $x\ot = \frac{1}{n}\sum_{i=1}^{n}x\it$ is synced with the central server. Alternatively, inspired by \cref{sec:fl}, we can replace $x\ot$ with an explicitly maintained central decision variable $x\ct$ and update it using \cref{eq:fl} within the same communication round for computing the central update direction. Both implementations have the same convergence guarantee in current setting, while the latter proves robust to heterogeneous environment distributions, as detailed in \cref{sec:k-het-pcl}. See \cref{apx:cdv} for further discussion.


Unlike FL, the convergence of \PCL depends on how \emph{similar} the objectives of agents are.

\begin{definition}[Objective heterogeneity] \label{def:r-aff}
	The objective heterogeneity level is defined as
	\[
		\delta_{\obj} \coloneqq \max_{i,j\in[n]}\sup_{s\in\S} \|b\i(s) - b\j(s)\|_{2} / (2G_{b}) \in [0,1]
		.\]
\end{definition}

\begin{proposition} \label{lem:het-pcl}
	With a constant step size $\alpha \equiv \ln t /(\lambda t)$, \cref{eq:het-pcl-alg} satisfies
	\[
		\mathbb{E}\|x\it - x\is\|^{2} = \widetilde{O}( \kappa^2t^{-1}\cdot \max\{n^{-1} , \tilde{\delta}_{\obj}\}),\quad \forall i\in[n]
		,\]
	where $\tilde{\delta}_{\obj}\le \min\left\{ 1, \kappa\delta_{\obj} \right\}$ is the \emph{effective} objective heterogeneity level.
\end{proposition}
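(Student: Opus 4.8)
The plan is to show that the bias correction turns $\tilde{g}_t^i$ into an unbiased estimator of the \emph{personalized} residual, then to split its fluctuation into a federated part that scales like $n^{-1}$ and a part limited by $\delta_{\obj}$. First I would verify the correction. Conditioning on the history $\mathcal{F}_t$ and using $\mathbb{E}[A(s_t^i)\mid\mathcal{F}_t]=\bar{A}$, $\mathbb{E}[b^i(s_t^i)\mid\mathcal{F}_t]=\bar{b}^i$ and $\mathbb{E}[b^0(s_t^i)\mid\mathcal{F}_t]=\bar{b}^0$, one checks $\mathbb{E}[g_t^0(x_t^0)\mid\mathcal{F}_t]=\bar{A}x_t^0-\bar{b}^0=\mathbb{E}[g_t^{0\to i}(x_t^0)\mid\mathcal{F}_t]$, so the two correction terms cancel in expectation and
\[\mathbb{E}[\tilde{g}_t^i\mid\mathcal{F}_t]=\bar{A}x_t^i-\bar{b}^i=\bar{A}(x_t^i-x_*^i).\]
Hence $\tilde{g}_t^i$ points on average toward $x_*^i$, and positive definiteness of the symmetric part of $\bar{A}$ (giving $\langle\bar{A}v,v\rangle\ge\lambda\|v\|^2$ for all $v$) yields the one-step bound $\mathbb{E}[\|x_{t+1}^i-x_*^i\|^2\mid\mathcal{F}_t]\le(1-2\alpha_t\lambda)\|x_t^i-x_*^i\|^2+\alpha_t^2\,\mathbb{E}[\|\tilde{g}_t^i\|^2\mid\mathcal{F}_t]$.

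Second, I would bound the centered direction $\xi_t^i:=\tilde{g}_t^i-\bar{A}(x_t^i-x_*^i)$. Subtracting conditional means term by term, it decomposes into four zero-mean groups: (i) $(A(s_t^i)-\bar{A})(x_t^i-x_t^0)$, (ii) $\frac{1}{n}\sum_j(A(s_t^j)-\bar{A})x_t^0$, (iii) $-\frac{1}{n}\sum_j(b^j(s_t^j)-\bar{b}^j)$, and (iv) $(b^0(s_t^i)-b^i(s_t^i))-(\bar{b}^0-\bar{b}^i)$. Groups (ii) and (iii) are averages of $n$ conditionally independent samples, so their cross terms vanish and their second moments are $O(G_A^2\|x_t^0\|^2/n)$ and $O(G_b^2/n)$ --- this is the origin of the federated $n^{-1}$ speedup. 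Group (iv) is bounded pointwise by $2G_b\delta_{\obj}$ (\cref{def:r-aff}), hence has second moment $O(G_b^2\delta_{\obj}^2)$. Group (i) is governed by the consensus gap: writing $x_t^i-x_t^0=(x_t^i-x_*^i)-(x_t^0-x_*^0)+(x_*^i-x_*^0)$ and using $\|x_*^i-x_*^0\|=\|\bar{A}^{-1}(\bar{b}^i-\bar{b}^0)\|\le\min\{2G_b\delta_{\obj}/\lambda,\,2G_x\}$, its irreducible part is again $O(\delta_{\obj})$ and geometrically capped. Collecting terms, $\mathbb{E}[\|\tilde{g}_t^i\|^2\mid\mathcal{F}_t]$ is of order $G_A^2\|x_t^i-x_*^i\|^2+V$ with variance floor $V=O(\sigma^2\max\{n^{-1},\tilde{\delta}_{\obj}\})$, where the (squared, hence capped) heterogeneity contributions of (i) and (iv) satisfy $\tilde{\delta}_{\obj}\le\min\{1,\kappa\delta_{\obj}\}$.

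Third, the recursion is coupled because $x_t^0=\frac{1}{n}\sum_i x_t^i$ enters the consensus gap. To close it I would average the per-agent inequality over $i$ and apply Jensen's inequality, $\mathbb{E}\|x_t^0-x_*^0\|^2\le\frac{1}{n}\sum_i\mathbb{E}\|x_t^i-x_*^i\|^2$, so that $\Phi_t:=\frac{1}{n}\sum_i\mathbb{E}\|x_t^i-x_*^i\|^2$ obeys the self-contained recursion $\Phi_{t+1}\le(1-2\alpha_t\lambda+O(\alpha_t^2 G_A^2))\Phi_t+O(\alpha_t^2 V)$. The step-size lemma (\cref{lem:step}) then yields $\Phi_t=\widetilde{O}(V/(\lambda^2 t))$. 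Feeding the resulting bound on $\mathbb{E}\|x_t^0-x_*^0\|^2$ back into the individual recursion for each fixed $i$ --- where the $x_t^0$ terms are now a vanishing, lower-order input --- gives $\mathbb{E}\|x_t^i-x_*^i\|^2=\widetilde{O}(V/(\lambda^2 t))=\widetilde{O}(\kappa^2 t^{-1}\max\{n^{-1},\tilde{\delta}_{\obj}\})$ for every $i$, which is the claim.

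The step I expect to be the main obstacle is the variance separation in the second part: one must use conditional independence across agents to keep the $n^{-1}$ gain in groups (ii)--(iii), while simultaneously showing that the non-averaged, agent-specific fluctuations (groups (i) and (iv)) enter only through $\delta_{\obj}$ and are capped by the geometric bounds $\|x_*^i-x_*^0\|\le2G_x$ and $\delta_{\obj}\le1$, so that the rate is never worse than the independent-learning baseline $\kappa^2 t^{-1}$. Closing the $x_t^i$--$x_t^0$ coupling is the secondary difficulty, handled cleanly by the Jensen bound on the averaged potential; the remaining step-size bookkeeping is standard.
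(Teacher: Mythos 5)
Your proof is sound and establishes the proposition as stated, but it follows a genuinely different route from the paper's. The paper never proves this proposition in isolation: it falls out as a special case of the fully general analysis in \cref{apx:objective,apx:central,apx:local}, where an \emph{explicitly maintained} central iterate $x_t^c$ (updated by \cref{eq:fl}) and the estimated central objective each obey their own one-step contraction (\cref{lem:central,lem:local}), the variance of $\tilde{g}_t^i$ is split into the aggregated direction and the bias-corrected local direction (\cref{lem:vd-fed,lem:vd-aff}), and the resulting triangular system of coupled recursions is closed by the weighted Lyapunov function of \cref{lem:lyap}. You instead analyze \cref{eq:het-pcl-alg} literally as written, with $x_t^0=\frac{1}{n}\sum_i x_t^i$ the average of the local iterates: you center the update into four zero-mean groups, extract the federated $n^{-1}$ gain from the conditionally independent averages, and close the $x_t^i$--$x_t^0$ coupling by Jensen's inequality on the averaged potential $\Phi_t$ followed by a feedback step. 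This is self-contained, arguably more faithful to the statement (the paper's appendix analyzes the $x_t^c$ variant and appeals to the remark after \cref{eq:het-pcl-alg} that the two implementations are equivalent in this homogeneous-environment setting), and your Jensen-plus-feedback device replaces the paper's Lyapunov machinery at no loss for this special case. What the paper's heavier setup buys is reuse: the same analysis survives environment heterogeneity, where averaging the iterates no longer targets $x_*^c$ and your closing step would break down.

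One caveat on the affinity term. You bound the irreducible consensus gap by the \emph{naive} solution-difference bound $\|x_*^i-x_*^0\|\le\min\{2G_b\delta_{\obj}/\lambda,\,2G_x\}$ (the bound the paper explicitly labels ``Naive'' in \cref{lem:aff} and deliberately avoids), which gives a variance floor of order $\sigma^2\min\{1,\kappa^2\delta_{\obj}^2\}$. The paper instead bounds the gap only in feature space, $\mathbb{E}\|A(s)(x_*^i-x_*^c)\|^2\le 2\sigma^2\min\{1,\nu\delta\}$ (\cref{lem:eff-aff}), with $\nu\le\kappa$ the stochastic condition number. Your floor does satisfy the proposition as literally stated, since $(\min\{1,\kappa\delta_{\obj}\})^2\le\min\{1,\kappa\delta_{\obj}\}$, so you may take $\tilde{\delta}_{\obj}=(\min\{1,\kappa\delta_{\obj}\})^2$; but it does not recover the paper's actual definition $\tilde{\delta}_{\obj}=\min\{1,\nu\delta_{\obj}\}$ from \cref{sec:noise}, and the two are incomparable: for well-aligned noise ($\nu\approx 1$) with $\kappa\delta_{\obj}$ of order one, your floor is $\Theta(\sigma^2)$ while the paper's is $\Theta(\sigma^2/\kappa)$. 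So your argument is a correct and shorter proof of the stated rate, while the paper's feature-space lemma is the ingredient that makes the effective heterogeneity scale with $\nu$ rather than $\kappa$, which is what the later discussion of noise alignment relies on.
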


The precise definition of the effective heterogeneity level is $\tilde{\delta}_{\obj}=\min\{1, \nu \delta_{\obj}\}$, where $\nu$ is the \emph{stochastic condition number} that is trivially bounded by $\kappa$.
We defer the definition of $\nu$ and the discussion of how the stochastic conditioning affects the effective collaboration gain to \cref{sec:noise} and \ref{apx:noise}.
In most cases of interest, $\nu$ is close to 1, reducing $\tilde{\delta}_{\obj}$ to the raw heterogeneity level $\delta_{\obj}$.
Thus, the following discussion of $\tilde{\delta}_{\obj}$ can be understood as applying to $\delta_{\obj}$ as well.

\cref{lem:het-pcl} previews the phenomenon of \emph{affinity-based variance reduction}. Compared to independent learning, \cref{lem:het-pcl} achieves a convergence rate accelerated by a factor of $\max\{n^{-1},\tilde{\delta}_{\obj}\}$, capturing speedup from both federated collaboration and agent similarity.
When agents have similar objectives ($\tilde{\delta}_{\obj}\le n^{-1}$), this factor recovers the linear speedup $n^{-1}$ from FL (\cref{lem:fl});
with abundant collaborating agents ($n \ge \tilde{\delta}_{\obj}^{-1}$), objective affinity dominates variance reduction. Importantly, since $\tilde{\delta}_{\obj}\in[0,1]$, \PCL's worst-case complexity is always upper bounded by that of independent learning, $\widetilde{O}(\kappa^{2}t^{-1})$, ensuring collaboration never degrades performance. As agents have markedly different objectives ($\tilde{\delta}_{\obj}\uparrow 1$), collaboration benefits vanish and \PCL falls back to independent learning, as expected.
\cref{lem:het-pcl} showcases that \PCL seamlessly interpolates between FL and independent learning, offering full adaptivity without imposing artificial restrictions.


To provide intuitions for affinity-based variance reduction, we discuss three interpretations of \PCL.

\paragraph{Bias correction} $g\it(x\it) - g^{0\tos i}_{t}(x\ot)$ in \cref{eq:het-pcl-alg} corrects the bias in the aggregated update direction $g\ot(x\ot)$ to achieve personalization. Specifically, one can verify that $\mathbb{E}_{\mu}[\tilde{g}\it] = \mathbb{E}_{\mu}[g\it(x\it)]$.
In collaborative learning, agents want to leverage the aggregated update direction for its lower variance, but also need to correct its bias towards the central solution rather than the personalized solution.
We remark that this bias correction is fundamentally different from those used in the heterogeneous FL literature \citep{karimireddy2021SCAFFOLDStochastic,mangold2024ScafflsaTaming}, which correct for local drift away from the central direction.
In other words, our novel bias correction term is \emph{personalized} for each agent.

\paragraph{Control variates}
Although $g\it(x\it)-g^{0\tos i}(x\ot)$ is personalized and thus cannot benefit from federated averaging, it enjoys affinity-based variance reduction via control variates \citep{defazio2014SAGAfast,rubinstein2016SimulationMonte}.
Specifically, $g^{0\tos i}_t(x\ot)$ serves as a control variate that positively correlates with the local update direction $g\it(x\it)$, and thus reduces the variance of the overall update. Then, a low-variance version (sample average) of the control variate, $g\ot(x\ot)$, is added to correct the introduced bias.
The variance reduction effect scales with the correlation in the control variate, which in turn scales with the affinity between the local and central systems.
This control variate perspective offers a clear explanation for affinity-based variance reduction in \PCL, and motivates our design choice of the bias correction term $g^{0\tos i}_{t}(x\ot)$, which correlates with $g\it(x\it)$ through the underlying sample $s\it$, unlike other potential candidates (e.g., $A\it x\ot-b\ot$) that would be nearly independent of $s\it$.

\paragraph{Central-local decomposition}
To perceive how this variance reduction scales with affinity, we can view \cref{eq:het-pcl-alg} as performing \emph{central} and \emph{local} learning in parallel. The central learning happens at the server side, seeking a unified point $x^{\mathrm{cen}}_{*}$ that solves the \emph{central system} \cref{eq:sys-central}, and the local learning happens at the client side, solving the \emph{local residual system} $\bar{A}x^{i,\mathrm{loc}}_{*} = \bar{b}^{i} - \bar{b}^{0}$.
Then, $x^{\mathrm{cen}}_{*}+x^{i,\mathrm{loc}}_{*}$ gives the personalized solution to \cref{eq:sys}.
Specifically, $g\ot(x\ot)=A\ot x\ot-b\ot$ drives the central learning and $g\it(x\it) - g^{0\tos i}_{t}(x^{0}_t)=A\it(x\it-x\ot) - (b^{i}-b^{0})(s\it)$ drives the local learning.
Intuitively, the local residual system is simpler to solve when agent's objective is close to the central one, leading to affinity-based variance reduction.
Identifying this low-complexity local residual system is key to \PCL's success. If the local learning problem were as complex as the original one (e.g., fine-tuning), such a central-local decomposition would offer only marginal speedup over independent learning.

\section{Introducing adaptivity: Central objective estimation} 
\label{sec:objective}

Knowing the central objective amounts to knowing other agents' objectives, which may not be realistic in practice. This section removes this assumption by enabling agents to \emph{adaptively} learn the central objective while learning their personalized solutions.
A practical challenge is that when the state space $\S$ is large or infinite, $b^{0}$ becomes high- or infinite-dimensional, and learning it inevitably dominates the overall complexity. To match the dimension of other system parameters, we consider a linear parametrization of the objective function: $b^{i}(s) = \Phi(s) \theta_{*}^{i}$ for all $i\in[n]$,
where $\Phi\in\R^{d\times d}$ is a feature embedding function such that $\operatorname{sym}(\mathbb{E}_{\mu}\Phi(s))\succ 0$, and $\theta\is\in\R^{d}$ is the weight.
This structure covers finite state spaces as a special case;
see \cref{apx:lfa} for more discussion. 


Interestingly, central objective estimation (\COE) is a special case of heterogeneous FL in \cref{sec:fl}:
\[\label{eq:sys-objective}
	\bar{\Phi}^0\theta\cs = \bar{b}^{0}
	,\]
where $\bar{\Phi}^0 = \mathbb{E}_{\mu}\Phi(s)$ and $\bar{b}^0 = \frac{1}{n}\sum_{i=1}^n \mathbb{E}_{\mu}b^{i}(s) = \mathbb{E}_{\mu}b^0(s)$.
Therefore, agents can federatedly estimate the central objective using the same algorithm in \cref{sec:fl}:
\[\label{eq:obj-alg}
	\theta\ctt = \theta\ct - \alpha_t g^{0,b}_t (\theta\ct),\ \text{where}\ \
	g^{0,b}_t(\theta\ct) \coloneqq \frac{1}{n}\sum_{i=1}^{n}\Phi(s\it) \theta\ct -  \frac{1}{n}\sum_{i=1}^{n}b\i(s\it)
	.\]

Without loss of generality, we use normalized features $\|\Phi(s)\|_{2} \le 1$ for all $s\in\S$. With linear parametrization, we redefine the objective bound $G_{b} \coloneqq \max\{\max_{i}\|\theta_{*}^{i}\|,\|\theta\cs\|\}$ and heterogeneity level $\delta_{\obj} \coloneqq \max_{i,j\in[n]}\|\theta_{*}^{i} - \theta_{*}^{j}\|_{2} / (2G_{b}) \in [0,1]$,
which imply the original bound and \cref{def:r-aff}.
Then, \COE directly enjoys the same guarantee as in \cref{lem:fl}, with $\lambda$ replaced by $\lambda_{\min}(\operatorname{sym}(\bar{\Phi}^{0}))$.

We denote $\hat{b}\ot\coloneqq \Phi(s)\theta\ct$ as the estimated central objective at time $t$. Then, agents use $\hat{b}\ot(s\it)$ in place of $b^{0}(s\it)$ in \PCL \cref{eq:het-pcl-alg}, asynchronously with \COE in \cref{eq:obj-alg}. This scheme enjoys the same convergence guarantee as in \cref{lem:het-pcl} as proven in \cref{apx:local}.

\section{Introducing environment heterogeneity:\texorpdfstring{\,}{ }Importance correction} \label{sec:k-het-fl}

\subsection{Central learning revisited} \label{sec:revist} 

\cref{sec:het-known-objective} discusses the central-local decomposition of \PCL. With homogeneous environments, central learning happens \emph{implicitly} by considering the dynamics of the averaged decision variable $x\ot = \frac{1}{n}\sum_{i=1}^{n}x\it$, which converges to the solution $x\cs =\frac{1}{n}\sum_{i=1}^{n}x\is$ to \cref{eq:sys-central}.
However, this is no longer true with heterogeneous environment distributions ($\mu^{i}\ne \mu^{j}$), because
\[
	\ts x\os = \frac{1}{n}\sum_{i=1}^{n}x\is = \frac{1}{n}\sum_{i=1}^{n}\left( (\bar{A}^{i})^{-1}\bar{b}^{i} \right) \not\equiv \left(\frac{1}{n}\sum_{i=1}^{n}\bar{A}^{i}\right)^{-1}\left(\frac{1}{n}\sum_{i=1}^{n}\bar{b}^{i}\right) = (\bar{A}^{0}) ^{-1} \bar{b}^{0} = x\cs
	.\]
That is, as $x\it$ converges to the personalized solution $x\is$ for all $i\in[n]$, their average will not converge to $x\cs$, invalidating the implicit central learning through $x\ot$.

Fortunately, we manage to show that if agents \emph{explicitly} maintain a unified \emph{central} decision variable $x\ct$ ($\not\equiv x\ot$) and update it federatedly using \cref{eq:fl}, then $x\ct$ still converges to $x\cs$ with the same convergence rate in \cref{lem:fl}, even in the presence of environment heterogeneity.
We refer to this explicit approach as central decision learning (\CDL).
The same argument applies to \COE in \cref{sec:objective}, i.e., \cref{eq:obj-alg} converges to the solution to \cref{eq:sys-objective} with the same rate under heterogeneous environment distributions.
We defer the proof to \cref{apx:central,apx:objective}.
Intuitively, this works because a sample from the mixture distribution $\mu^{0}$ is equivalent to first sampling an index $i$ uniformly and then sampling $s$ from $\mu^{i}$. Therefore, a federated update direction that equally weights local sample information from all agents is unbiased towards the central solution.

Beyond algorithmic implications, we remark that in \cref{sec:fl,sec:objective}, the averaged decision variable naturally corresponds to a \emph{virtual} system with parameters $\mu^{0}  = \frac{1}{n}\sum_{i=1}^{n}\mu^{i}$ and $b^{0}= \frac{1}{n}\sum_{i=1}^{n}b^{i}$.
The affinity among agents directly translates to the affinity between each agent and this ``central agent'' with index $0$.
Environment heterogeneity perplexes this concept: who is the ``central agent'' now, and does it inherit the affinity among agents?
Our central objective characterization in \cref{eq:sys-objective} helps answer the first question by defining $b^{c}(s)\coloneqq \Phi(s)\theta\cs$, 
and then the ``central system'' \cref{eq:sys-central} corresponds to a virtual system with environment distribution $\mu^{0}$ and objective $b^{c}$ ($\not\equiv b^{0}$).
Pinpointing this relocated central agent is crucial for deriving agent-specific affinity-based variance reduction in \cref{sec:asa}.

The second question is more subtle, as now the central agent, unlike the ``averaged agent'', can have a drastically different objective $b^{c}$ from all actual agents, even when the latter have similar objectives.
For instance, an ill-conditioned system can amplify a small $\delta_{\obj}$ (\cref{def:r-aff}) such that $\|b^{c}-b^{i}\|_{\infty}$ reaches its maximum possible value $2G_{b}$ for some agent $i$.
This divergence, which also applies to the relationship between the central and personalized decision variables, presents a fundamental challenge introduced by environment heterogeneity in achieving affinity-based variance reduction.
Fortunately, our analysis reveals that what is crucial is the affinity in \emph{feature space}, e.g., terms like $\|A(s)(x\is-x\cs)\|$ and $\|\bar{A}^{i}(x\is-x\cs)\|$, which are well controlled by the raw affinity among actual agents.
Please refer to \cref{lem:aff,lem:eff-aff} in \cref{apx:central} for more discussion.

\subsection{PCL with importance correction} \label{sec:k-het-pcl} 

We now arrive at the most general setup \eqref{eq:sys}, where agents have heterogeneous environment distributions and objectives and seek personalized solutions.
In addition to the challenges posed by environmental heterogeneity discussed in \cref{sec:revist}, a further obstacle emerges in the design of \PCL: the bias correction mechanism in \cref{sec:het-known-objective} alone is no longer sufficient.
To overcome this, we propose integrating a novel \emph{importance correction} to the central update direction before it gets sent to each agent, resulting in the following \PCL update rule:
\[\label{eq:ker-het-pcl-alg}
	x\itt = x\it - \alpha_t \tilde{g}\it, \quad\text{where}\quad
	\tilde{g}\it = g\it(x\it) + \rg_t (x^{c}_{t})-g^{c\tos i}_{t}(x^{c}_t)
	,\]
where the bias correction term $g^{c\tos i}_{t}(x) = A(s\it)x - \hat{b}^{c}_t (s\it)$ now uses the estimated central objective $\hat{b}^{c}_t(s\it)$ from \COE \cref{eq:obj-alg},
and $\rg_t$ is the importance-corrected central update direction:
\[
	\rg_t(x)  \coloneqq \frac{1}{n} \sum_{j=1}^{n} \rho\i(s\jt) g^{c\tos j}_{t}(x) \coloneqq \frac{1}{n}\sum_{j=1}^{n} \frac{\mu^{i}(s\jt)}{\mu^{0} (s\jt)} \left( A(s\jt)x - \hat{b}^{c}_{t}(s\jt) \right)
	.\]
\PCL \cref{eq:ker-het-pcl-alg} with asynchronous \COE \cref{eq:obj-alg} and \CDL \cref{eq:fl} gives the complete algorithm for solving \eqref{eq:sys}.
We provide the pseudocode and discuss implementation details in \cref{apx:cdv}.

\PCL effectively handles environment heterogeneity by
\begin{enumerate*}[label=(\roman*)]
	\item correcting bias: $\mathbb{E}[ \rg_t(x)  - g^{c\tos i}_{t}(x)]\!=\!0$ (\cref{lem:correction}), and
	\item reducing variance based on agents' \emph{environment affinity}
	      (\cref{lem:vd-fed,lem:vd-aff}).
\end{enumerate*}

\begin{definition}[Environment heterogeneity] \label{def:k-aff}
	The environment heterogeneity level is defined as
	\[
		\delta_{\env} \coloneqq \max_{i,j\in[n]} \|\mu^{i}-\mu^{j}\|_{\mathrm{TV}} \in [0,1]
		.\]
\end{definition}

The interpretations discussed in \cref{sec:het-known-objective} still account for a portion of the variance reduction effect, especially w.r.t. objective affinity.
For the newly introduced importance-corrected central update direction $\rg_t$, its variance has three key properties:
\begin{enumerate*}[label=(\roman*)]
	\item it decomposes into a federated term, an affinity-dependent term similar to the variance of the bias correction term, and a term that characterizes the environment heterogeneity: $
		      \frac{\sigma^{2}}{n}\chi^{2}(\mu^{i},\mu^{0})
		      ,$
	      where $\chi^{2}$ is the chi-square divergence; \label{li:add-var}
	\item the chi-square divergence is bounded by the total variation distance, which defines the environment heterogeneity: $\chi^{2}(\mu^{i},\mu^{0})
		      \le \max\{\|\rho^{i} \|_{\infty}, 1\}\delta_{\env}
	      $; 
	\item the density ratio $\rho^{i}$ has a natural upper bound: $\rho^{i}(s) = \mu^{i}(s)/(\frac{1}{n}\sum_{j=1}^{n}\mu^{j}(s))
		      \le \mu^{i}(s)/(\frac{1}{n}\mu^{i}(s)) = n
		      .$\label{li:bound}
\end{enumerate*}
Combining the three observations gives an upper bound of the additional variance from environment heterogeneity: $\frac{\sigma^{2}}{n}\chi^{2}(\mu^{i},\mu^{0}) \le \sigma^{2}\delta_{\env}$.
This means that our method automatically adapts to the level of environment heterogeneity, enjoys affinity-based variance reduction, and never performs worse than independent learning,
since $\delta_{\env}\le 1$.

These observations motivate the design of \emph{server-side} importance correction. If this correction were performed on the client side, the additional variance term in \labelcref{li:add-var} would lack the mitigating $n^{-1}$ factor, and the density ratio $\mu^{0}(s)/\mu^{i}(s)$ would not be bounded by $n$ as in \labelcref{li:bound}, which could result in potentially unbounded variance that degrades performance.

We are now ready to present the main result, which shows that \PCL achieves affinity-based variance reduction characterized by both environment and objective affinities, generalizing \cref{sec:het-known-objective}.

\begin{theorem}\label{thm}
	With a constant step size $\alpha \equiv \ln t /(\lambda t)$, \PCL \cref{eq:ker-het-pcl-alg} with \COE \cref{eq:obj-alg} and \CDL \cref{eq:fl} satisfies
	\[
		\mathbb{E}\|x\i_{t} - x\is\|^{2} = \widetilde{O}( \kappa^{2}t^{-1} \cdot \max\{n^{-1} , \tilde{\delta}_{\env}, \tilde{\delta}_{\obj}\} ),\quad \forall i\in[n]
		,\]
	where $\tilde{\delta}_{\env} \!\!\le\! \min\{1,\! \kappa\delta_{\env}\}$, $\tilde{\delta}_{\obj} \!\!\le\! \min\{1,\! \kappa\delta_{\obj}\}$ are effective environment and objective heterogeneity.
\end{theorem}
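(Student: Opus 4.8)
The plan is to cast \PCL as a stochastic approximation scheme that is conditionally unbiased toward each agent's personalized solution $x\is$, and then to control its variance proxy by the affinity quantity $\max\{n^{-1}, \tilde{\delta}_{\env}, \tilde{\delta}_{\obj}\}$. First I would establish unbiasedness in the idealized case where $b^{c}$ and $x\cs$ are exact: conditioning on the iterates, $\mathbb{E}[g\it(x\it)] = \bar{A}^{i}x\it - \bar{b}^{i}$ while the importance-corrected pair $\rg_t(x\cs) - g^{c\tos i}_t(x\cs)$ is mean-zero by \cref{lem:correction}, so the expected update direction is exactly the residual of the personalized system \eqref{eq:sys}. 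Writing $e\it \coloneqq x\it - x\is$ and expanding, $\|e\itt\|^{2} = \|e\it\|^{2} - 2\alpha_t \langle e\it, \tilde{g}\it\rangle + \alpha_t^{2}\|\tilde{g}\it\|^{2}$, the cross term contributes a contraction $-2\lambda\alpha_t\|e\it\|^{2}$ in expectation through the strong monotonicity $\langle e\it, \bar{A}^{i}e\it\rangle \ge \lambda\|e\it\|^{2}$, leaving the step-size-squared variance term to be bounded.

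The heart of the proof is the variance proxy of $\tilde{g}\it$ at the solution point, which I would split into the local residual part $g\it - g^{c\tos i}_t$ and the importance-corrected central part $\rg_t$. Evaluated at the fixed points, the local part reduces to $A(s\it)(x\is - x\cs) - (b^{i} - b^{c})(s\it)$, whose magnitude I would control not through the raw divergence of $b^{c}$ from $b^{i}$ (which can be large) but through the feature-space affinity estimates of \cref{lem:aff,lem:eff-aff}; this yields the $\tilde{\delta}_{\obj}$ term and explains why the environment-induced relocation of the central objective does no harm. For $\rg_t$ I would follow the three-property decomposition previewed before \cref{def:k-aff}: its variance splits into a federated $n^{-1}$ piece, an affinity piece analogous to the bias-correction variance, and the environment piece $\tfrac{\sigma^{2}}{n}\chi^{2}(\mu^{i},\mu^{0})$; invoking $\chi^{2}(\mu^{i},\mu^{0}) \le \max\{\|\rho^{i}\|_{\infty}, 1\}\delta_{\env}$ together with the server-side ceiling $\rho^{i} \le n$ bounds this last piece by $\sigma^{2}\delta_{\env}$, producing the $\tilde{\delta}_{\env}$ term. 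Summing gives a variance proxy of order $\sigma^{2}\max\{n^{-1}, \tilde{\delta}_{\env}, \tilde{\delta}_{\obj}\}$.

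Because \PCL actually runs with the estimate $\hat{b}^{c}_t$ from \COE \eqref{eq:obj-alg} and the learned $x\ct$ from \CDL \eqref{eq:fl} rather than their exact counterparts, I would next argue these errors are strictly lower order. Both are instances of heterogeneous FL and, by the revisited central analysis of \cref{sec:revist}, converge at rate $\widetilde{O}(\kappa^{2}t^{-1}n^{-1})$ even under environment heterogeneity. Substituting $x\cs \mapsto x\ct$ and $b^{c} \mapsto \hat{b}^{c}_t$ in the bias and variance computations introduces discrepancies bounded, via Lipschitzness of the update in $A$ and $\Phi$, by $\|x\ct - x\cs\|^{2}$ and $\|\theta\ct - \theta\cs\|^{2}$, which are no larger than the target rate and hence absorbed into the $\widetilde{O}$. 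The asynchronous coupling of the three recursions I would treat by viewing the \COE and \CDL errors as exogenous decaying perturbations that feed additively into the personalized recursion.

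Finally I would close the recursion: the descent inequality $\mathbb{E}\|e\itt\|^{2} \le (1 - 2\lambda\alpha_t + O(\alpha_t^{2}))\mathbb{E}\|e\it\|^{2} + \alpha_t^{2}\cdot\widetilde{O}(\sigma^{2}\max\{n^{-1}, \tilde{\delta}_{\env}, \tilde{\delta}_{\obj}\})$ with $\alpha_t \equiv \ln t/(\lambda t)$ is solved by the standard recursion lemma \cref{lem:step}, giving $\mathbb{E}\|e\it\|^{2} = \widetilde{O}(\kappa^{2}t^{-1}\max\{n^{-1}, \tilde{\delta}_{\env}, \tilde{\delta}_{\obj}\})$. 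I expect the main obstacle to be the variance control of $\rg_t$ in tandem with the feature-space affinity bounds: the importance weights $\rho^{i}$ threaten to inflate the corrected central direction's variance, and it is precisely the server-side placement of the correction (supplying both the $n^{-1}$ prefactor and the $\rho^{i} \le n$ ceiling) together with the conversion of raw affinity into feature-space affinity that salvage the $\delta_{\env}$ and $\delta_{\obj}$ dependence. Getting these constants to combine cleanly---and confirming that the stochastic condition number $\nu$ enters only through the effective levels $\tilde{\delta}$---is the delicate part.
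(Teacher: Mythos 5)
Your proposal is correct and follows the paper's own route in every essential step: exact unbiasedness of the importance/bias correction (\cref{lem:correction}), the strong-monotonicity descent (\cref{cor:gd}), the split of the variance at the solution into the bias-corrected local residual (\cref{lem:vd-aff}) and the importance-corrected central direction (\cref{lem:vd-fed}), the chi-square-to-TV bound with the server-side ceiling $\|\rho^i\|_\infty \le n$ (\cref{lem:chi}), and the conversion of raw heterogeneity into feature-space affinity through \cref{lem:aff,lem:eff-aff}, which is exactly where $\nu$ enters. The one genuine divergence is how the three coupled recursions are assembled. The paper forms a single weighted Lyapunov function $\mathcal{L}_t = \sum_k w^k \mathbb{E}\|\Delta z_t^k\|^2$ (\cref{lem:lyap}), with weights tuned so that the cross-dependency terms are absorbed by the contraction of the upstream modules; this yields one recursion with a \emph{constant} perturbation, to which \cref{lem:step} applies verbatim. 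You instead first establish the $\widetilde{O}(\kappa^2 t^{-1} n^{-1})$ rates for \CDL{} and \COE{} (\cref{lem:objective-k-het,lem:central}) and feed them into the local recursion as exogenous decaying perturbations. This is legitimate because the coupling is one-directional (the central modules never see the local iterates), but it is not literally covered by \cref{lem:step}: with the constant step size $\ln t/(\lambda t)$ the central error at intermediate times $\tau < t$ still carries an un-decayed initial-error component of order $e^{-\tau \ln t / t}$, so you must solve a recursion with a time-varying perturbation and check that the resulting convolution of exponentials contributes only a higher-order term---routine, but an extra step your sketch glosses over. The paper's Lyapunov weighting avoids this bookkeeping and, more importantly, extends gracefully to the asynchronous \DRE{} variant, where the estimation error $\mathcal{E}$ enters the local recursion with an $O(\alpha_t)$ rather than $O(\alpha_t^2)$ coefficient and naive substitution degrades the constants. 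One minor attribution slip: at the fixed points the local residual $A(s)(x_*^i - x_*^c) - (b^i - b^c)(s)$ depends on \emph{both} objective and environment heterogeneity (the central agent is relocated when environments differ), so in the paper both variance pieces are bounded by the mixed quantity $\tilde{\delta}^i_{\cen}$ rather than splitting cleanly into a $\tilde{\delta}_{\obj}$ piece and a $\tilde{\delta}_{\env}$ piece; the theorem's $\max\{n^{-1}, \tilde{\delta}_{\env}, \tilde{\delta}_{\obj}\}$ form is then recovered via $\delta^i_{\cen} \le \delta_{\env} + \delta_{\obj}$ from \cref{lem:aff}.
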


\section{Discussion} \label{sec:extension} 

\subsection{Density ratio estimation} \label{sec:dre}

\cref{sec:k-het-pcl} requires that the density ratios $\rho^{i}(s) \coloneqq \frac{\mu^{i}(s) }{\mu^{0} (s) }$ of environment distributions are known to the central server.
This is a common assumption in supervised learning \citep{cortes2010LearningBounds,ma2023Optimallytackling}, controlled sampling \citep{rubinstein2016SimulationMonte}, and off-policy reinforcement learning \citep{Precup2000EligibilityTF,thomas2016Dataefficientoffpolicy}. It is satisfied, for example, when data are pre-collected or the covariate shift is induced by known mechanisms.
When $\rho^{i}$ is unknown, we can incorporate asynchronous density ratio estimation (\DRE) into \PCL.
Similar to \COE in \cref{sec:objective}, \DRE with linear parametrization \citep{sugiyama2012Densityratio,zhang2025stochastic} is also a special variant of \cref{eq:sys} (see \cref{apx:dre}).
However, unlike \COE, which enjoys affinity-based variance reduction without importance correction, \DRE seeks personalized solutions, which, according to our previous analysis, requires a known density ratio for importance correction to achieve affinity-based variance reduction. This creates a chicken-and-egg problem, settled by the following information-theoretic lower bound.

\begin{theorem}\label{thm:lower}
	Let $\hat{\rho}\it$ be any estimator of the true density ratio $\rho^{i}$, given $n$ agents, $t$ independent samples per agent, and no communication or computation constraint. There exists a system such that
	\[
		\ts\inf_{\hat{\rho}\it} \mathbb{E}_{\mu^{0}}|\hat{\rho}\it(s)-\rho^{i}(s)|^{2}
		\ge \min \left\{ (96t)^{-1}, \delta_{\env}^{2}  \right\}
		.\]
\end{theorem}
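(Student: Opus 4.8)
The plan is to prove this minimax lower bound by Le Cam's two-point method. I would construct two systems $P_0,P_1$ (each a full specification of $(\mu^1,\dots,\mu^n)$ sharing the common feature map) that (i) have the same heterogeneity level $\delta_{\env}$, (ii) induce density ratios $\rho^i$ that are well separated in the evaluation norm $\mathbb{E}_{\mu^0}|\cdot|^2$, yet (iii) generate data whose laws are statistically close. Since an estimator cannot know which system it faces, no $\hat{\rho}^i_t$ can be accurate on both. The $\min$ in the bound will emerge from a single scalar perturbation $\gamma$ taken as $\gamma=\min\{c_1\delta_{\env},\,c_2/\sqrt{t}\}$: when heterogeneity is the binding constraint we recover the $\delta_{\env}^2$ term, and when the per-agent sample budget binds we recover the $t^{-1}$ term.

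Concretely, I would take a two-point state space $\mathcal{S}=\{0,1\}$ and perturb a symmetric pair of agents while leaving the rest fixed: under $P_0$ set $\mu^1=(1/2+\gamma,\,1/2-\gamma)$, $\mu^2=(1/2-\gamma,\,1/2+\gamma)$, all other $\mu^j$ uniform, and swap $\mu^1\leftrightarrow\mu^2$ under $P_1$. The key structural feature is that $\mu^1+\mu^2$ is invariant, so $\mu^0$ equals the uniform distribution under both hypotheses; this makes the evaluation measure common to the two models and turns $\rho^i=\mu^i/\mu^0$ into a linear function of the perturbation. A direct computation then gives heterogeneity $\delta_{\env}=2\gamma$ and a density-ratio separation $\mathbb{E}_{\mu^0}|\rho^1_{P_0}-\rho^1_{P_1}|^2=\Theta(\gamma^2)=\Theta(\delta_{\env}^2)$, so the two-point separation radius is $s=\Theta(\gamma)$.

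For indistinguishability, I would observe that only the two perturbed agents carry information about the hypothesis, so the divergence between the full data laws is $2t$ times a single binary KL, $\mathrm{KL}(P_0^{\otimes}\|P_1^{\otimes})=2t\,\mathrm{KL}(\mathrm{Ber}(1/2+\gamma)\|\mathrm{Ber}(1/2-\gamma))=O(t\gamma^2)$; crucially this is independent of $n$, which is exactly why the final bound carries no $n$. Pinsker's inequality converts this to $\|P_0^{\otimes}-P_1^{\otimes}\|_{\mathrm{TV}}=O(\gamma\sqrt{t})$, and Le Cam's two-point inequality yields $\inf_{\hat{\rho}^i_t}\max_{j}\mathbb{E}_{P_j}|\hat{\rho}^i_t-\rho^i_{P_j}|^2\ge c\,\gamma^2\,(1-\|P_0^{\otimes}-P_1^{\otimes}\|_{\mathrm{TV}})$. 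Substituting $\gamma=\min\{c_1\delta_{\env},\,c_2/\sqrt{t}\}$ and choosing $c_2$ so that the total variation stays below a fixed constant separates into the two regimes and delivers the claimed $\min\{(96t)^{-1},\,\delta_{\env}^2\}$.

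I expect the main obstacle to be quantitative rather than structural: pinning down the explicit constant $96$ requires non-asymptotic control of the binary KL (replacing the $O(\gamma^2)$ heuristic by an exact bound valid for all admissible $\gamma$) together with a tight version of the testing/Le Cam inequality, and then balancing these against the exact separation constant so that the heterogeneity-limited term comes out with the stated normalization. A secondary point needing care is guaranteeing that the evaluation measure is genuinely common to both hypotheses—which the symmetric perturbation ensures—so that the separation is measured in one well-defined norm; had I perturbed a single agent, $\mu^0$ would differ across hypotheses by $O(\gamma/n)$ and that discrepancy would have to be tracked throughout.
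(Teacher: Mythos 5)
Your proposal is correct and rests on the same engine as the paper's proof---Le Cam's two-point method with a construction that keeps the mixture $\mu^{0}$ invariant across the two hypotheses, followed by balancing the perturbation $\gamma \asymp \min\{\delta_{\env}, t^{-1/2}\}$ between the sample-limited and heterogeneity-limited regimes---but the construction itself is genuinely different and arguably cleaner. The paper perturbs agent $i$'s \emph{own} distribution into two mirrored alternatives $\mu_{m}(s)=d^{-1}+d^{-3/2}\epsilon(-1)^{s+m}$ on a $d$-ary state space and lets the remaining agents implicitly compensate so that $\mu^{0}$ stays uniform; its data are modeled as coupled pairs (agent-$i$ sample, mixture sample), and the price is a conditional-KL term ($H_2$ in the paper's proof) accounting for the information that the compensating crowd leaks about the hypothesis, handled via the decomposition $\mu^{0}=\frac{1}{n}\delta_{s}+\frac{n-1}{n}\mu_{m}'$. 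You instead swap a symmetric \emph{pair} of agents on a binary state space, so $\mu^{1}+\mu^{2}$ is invariant by fiat, the evaluation measure is automatically common, and the KL between the full data laws splits cleanly as $2t\,\mathrm{KL}(\mathrm{Ber}(1/2+\gamma)\,\|\,\mathrm{Ber}(1/2-\gamma))=O(t\gamma^{2})$ with no conditional term; your model also matches the theorem's data model ($t$ samples from each of the $n$ agents) more literally than the paper's paired-sample formulation, and makes the $n$-independence of the bound transparent. The one caveat is the quantitative point you flag yourself: with heterogeneity measured pairwise as in the paper's Definition 4, your separation is $\mathbb{E}_{\mu^{0}}|\rho^{1}_{P_0}-\rho^{1}_{P_1}|^{2}=16\gamma^{2}=4\delta_{\env}^{2}$, and Le Cam then yields $\delta_{\env}^{2}/4$ rather than $\delta_{\env}^{2}$ in the heterogeneity-limited regime; the paper avoids this factor only by measuring heterogeneity as distance to the center $\|\mu_{m}-\mu^{0}\|_{\mathrm{TV}}$, which under the pairwise definition incurs a comparable slippage of its own. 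So your route establishes the theorem up to an absolute constant---which is all the result is used for---but not with the literal constants stated.
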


\cref{thm:lower} rules out the possibility of achieving variance reduction linear in the environment heterogeneity level $\delta_{\env}$ without knowing the density ratio a priori.
This hardness result can be circumvented if additional structure presents, such as sparsity (environment distributions differ only in a few dimensions) or coupling (environment distributions are dependent).
That is, the key difference from previous problems is that affinity in \DRE should be measured by criteria other than total variation distance.
Our analysis of \PCL assumes access to a \DRE oracle capable of exploiting such structure to achieve affinity-based variance reduction, thereby proving \cref{thm} in full generality and further showcasing the adaptivity of \PCL.
\cref{apx:lower} proves \cref{thm:lower}, and
\cref{apx:dre} contains an extended discussion on \DRE in our setting.

\subsection{Noise alignment} \label{sec:noise}

We formally define the effective heterogeneity levels in \cref{lem:het-pcl,thm} as $\tilde{\delta}_{\env} \!\!=\!\min\{ 1,\!\nu\delta_{\env}\}$ and $\tilde{\delta}_{\obj} \!=\! \min\{ 1,\nu\delta_{\obj}\}$, where $\nu$ characterizes the system's ``stochastic conditioning''.

\begin{definition}[Stochastic condition number] \label{def:noise}
	$\nu \coloneqq \ts\max_{i}\|\bar{D}^{i}(\bar{A}^{i})^{-1}\|$ where $D(s) \coloneqq \sqrt{A(s)^{T}A(s)}$.
\end{definition}

$\nu$ is trivially upper bounded by $\kappa$ (see \cref{apx:noise}), and we refer to $\nu^{-1} \ge \kappa^{-1}$ as the \emph{noise alignment} constant.
%
Note that the polar decomposition gives $A(s) = U(s)D(s)$, where the positive semidefinite matrix $D(s)$ defined as above stretches the vector it acts on, and the orthogonal matrix $U(s)$ rotates it.
If $U(s)$ maintains a similar orientation for almost all $s\in\S$, $A(s)$ is ``well-aligned'' and one can see that $\nu^{-1}$ is large.
Conversely, if $U(s)$ varies significantly, $\nu^{-1}$ tends to be small.
The impact of affinity on variance reduction is thus modulated by this noise alignment (\cref{lem:eff-aff}).

We remark that while $\nu$ bears resemblance to the matrix condition number and is upper bounded by $\kappa$, the condition number $\kappa$ pertains solely to the deterministic parameters of the system, whereas $\nu$ captures the conditioning of the system's stochastic structure. Consequently, a large $\kappa$ does not necessarily imply a large $\nu$, and vice versa.
Fortunately, in many cases of interest, the noise alignment constant $\nu^{-1}$ is large. A particularly relevant example is a positive semidefinite $A(s)$, a property often imposed on feature embedding matrices by design. In this case, $\nu^{-1} = 1$.
See \cref{apx:noise} for three more examples with large $\nu^{-1}$ and further discussion on noise alignment.

\subsection{Agent-specific affinity-based variance reduction} \label{sec:asa}

For ease of exposition, previous sections use the \emph{worst-agent} heterogeneity levels (\cref{def:r-aff,def:k-aff}) to characterize the \emph{worst-agent} performance (\cref{thm}).
Intuitively, agents closer to the ``center'' should enjoy greater affinity-based variance reduction. In \cref{apx:local}, we analyze \PCL in full generality and obtain an \emph{agent-specific} convergence guarantee:
\[\label{eq:asa}
	\mathbb{E}\|x\i_{t} - x\is\|^{2} = \widetilde{O}( (\kappa\i)^{2}t^{-1} \cdot \max\{n^{-1} , \tilde{\delta}\i_{\cen}\} ),\quad \forall i\in[n]
	,\]
where $\kappa^{i} = \sigma / \lambda^{i}$, $\lambda^{i} = \lambda_{\min}(\operatorname{sym}(\bar{A}^{i}))$, $\tilde{\delta}_{\cen}^{i} = \min\{1, \nu\delta_{\cen}^{i}\}$, and $\delta\i_{\cen}$ is a more natural measure of agent $i$'s closeness to the ``center agent'', defined as
\[
	\delta\i_{\cen} \coloneqq \max \{ \|\mu^{i}-\mu^{0}\|_{\mathrm{TV}} ,\|\bar{b}^{i} - \bar{b}^{0}\| / (2G_{b})\}  \in [0,1]
	.\]
Notably, $\delta\i_{\cen}$ is affected by both objective and environment heterogeneity, and admits a trivial bound $\delta\i_{\cen}\le \min\{1,\delta_{\env}+\delta_{\obj}\}$ (\cref{lem:aff}).
\cref{eq:asa} confirms that \PCL inherently offers agent-specific affinity-based variance reduction, with agents closer to the center benefiting more from collaboration.
An interesting consequence is that in the high heterogeneity regime, an agent that is not close to any other actual agent ($\delta_{\env}\approx \delta_{\obj} \approx 1$) may still get a ``free ride'' by being close to the virtual central agent ($\tilde{\delta}\i_{\cen}\ll 1$), thereby gaining significant speedup.
Taking this a step further, an agent can collaborate with agents that are arbitrarily heterogeneous to it but still benefit from collaboration maximally and obtain linear speedup when $\tilde{\delta}\i_{\cen}\le n^{-1}$.
These insights are not captured by works that focus on linear speedup only in the low heterogeneity regime \citep{chayti2022LinearSpeedup,even2022SampleOptimalitya}.

\section{Numerical simulations} \label{sec:exp} 

\paragraph{Synthetic data}
We first compare \PCL, independent learning, federated averaging \citep[\FedAvg]{mcmahan2017Communicationefficientlearning}, 
\re{
fine-tuning (\FedAvg followed by local independent learning),
regularized \citep[\pFedMe and \Ditto]{tdinh2020Personalizedfederated,li2021DittoFair},
and clustered \citep{ghosh2020efficientframework} FL methods,
}%
in a synthetic system with $20$ agents at different heterogeneity levels $\delta_{\env}=\delta_{\obj}\in \{0,0.05,\re{0.3,0.8}\}$, with results presented in \cref{fig:exp}.
The average $\mathrm{MSE}^{0}=\frac{1}{n}\sum_{i=1}^{n}\mathrm{MSE}^{i}$ is reported. For \PCL, we also report the MSE of the agent closest to the center to highlight the agent-specific speedup effect.

\begin{figure}[th]
	\centering
	\includegraphics[width=0.94\linewidth]{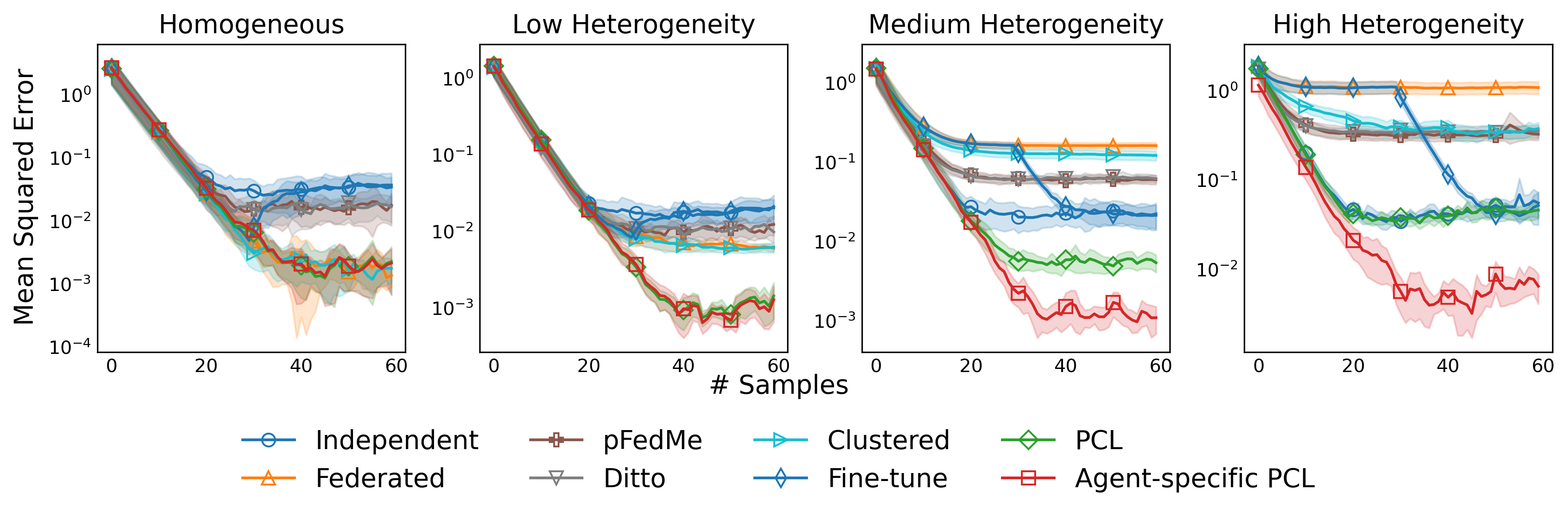}
	\vspace{-0.8em}
	\caption{
		\PCL matches \FedAvg in the homogeneous setting and independent learning in the high heterogeneity regime.
		Across all scenarios, \PCL consistently achieves the lowest MSE, while other methods' relative performance varies with the heterogeneity level.
		In the high heterogeneity regime where all agents are dissimilar, the agent closest to the center still enjoys significant speedup.
	} \label{fig:exp}
\end{figure}

\re{%
\paragraph{Real-world data}
We further evaluate \PCL on the real-world \href{https://huggingface.co/datasets/flwrlabs/femnist}{FEMNIST} dataset. For clarity, we use only independent learning and \FedAvg as baselines. To introduce objective heterogeneity, we consider a task where each user determines if a handwritten character is a digit and if it is a curved letter, with different users potentially having different preferences on these two objectives. We train 10 users across four levels of objective heterogeneity and report the average test MSE in \cref{fig:exp_real}.}

\begin{figure}[th]
	\centering
	\includegraphics[width=0.94\linewidth]{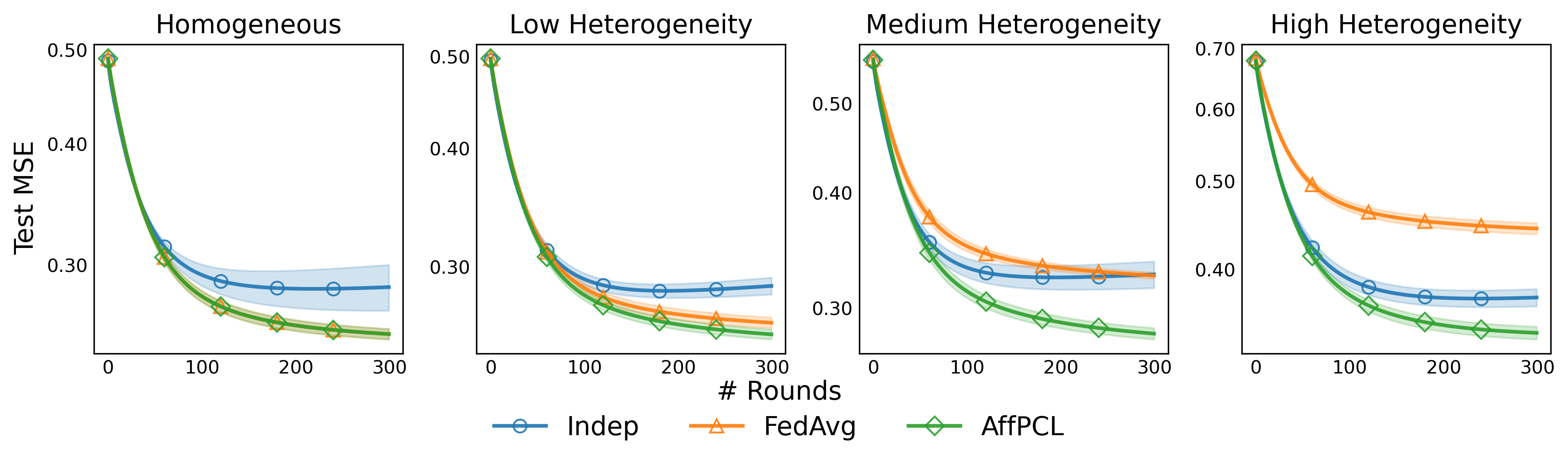}
	\vspace{-0.8em}
	\caption{
		\re{%
		\PCL matches \FedAvg in the homogeneous setting and consistently achieves the lowest test MSE across all heterogeneity levels.
		The relative performance of \FedAvg and independent learning varies with the heterogeneity level.
	}} \label{fig:exp_real}
\end{figure}

\re{%
\paragraph{Reinforcement learning}
We extend our method to the fundamental reinforcement learning algorithm \SARSA, a temporal difference method that encompasses \TD; see \cref{sec:td} for details.
\SARSA solves a \textbf{non-linear} policy optimization problem, showcasing the versatility of \PCL beyond linear systems.
We consider 10 agents with different reward functions (objectives) and transition kernels (environments). 
We incorporate in this experiment the asynchronous \DRE module discussed in \cref{sec:dre}. 
Specifically, we have $\hat{\rho}^{i}_{t}(s,a) = \frac{\hat{\mu}^{i}_{t}(s) \pi^{i}_{t}(a\given s)}{\frac{1}{n}\sum_{j=1}^{n} \hat{\mu}^{j}_{t}(s) \pi^{j}_{t}(a\given s)}$, where $\hat{\mu}^{i}_{t}(s)$ is the estimated state distribution of agent $i$ at time $t$ via naive Monte Carlo, and $\pi^{i}_{t}(a\given s)$ is the behavior policy of agent $i$ at time $t$. The average MSE with respect to the optimal Q-function is reported in \cref{fig:exp_rl}.}

\begin{figure}[H]
	\centering
	\includegraphics[width=0.94\linewidth]{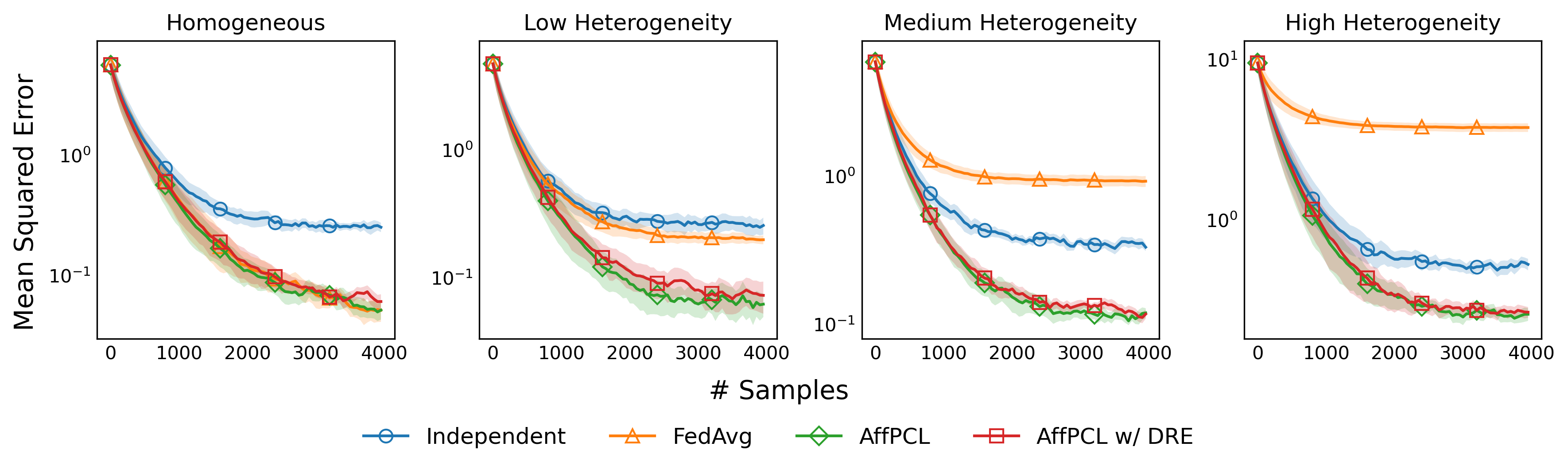}
	\vspace{-0.8em}
	\caption{
		\re{%
			Consistent with other experiments, \PCL achieves the lowest MSE across all heterogeneity levels in the reinforcement learning setting.
			Incorporating asynchronous \DRE does not hinder the performance of \PCL, suggesting that density estimation is of relatively low complexity compared to policy optimization.
	}} \label{fig:exp_rl}
	\vspace{-1em}
\end{figure}

\re{Please refer to \cref{apx:exp} for the detailed setup and additional results.
These simulations validate the superiority and practicality of \PCL and our theory of adaptive affinity-based variance reduction.}

\section{Conclusion and future directions} 

\PCL affirms that collaboration among arbitrarily heterogeneous agents can yield fully personalized solutions with adaptive affinity-based speedup,
opening new avenues for harmonizing personalization and collaboration in multi-agent learning.
We advocate for future endeavors in the following topics:
\begin{enumerate*}[label=(\roman*)]
	\item \re{personalized feature embeddings};
	\item trade-offs between collaboration benefit and communication sophistication such as cost, privacy, and security;
	\item lower bounds on information exchange to achieve collaborative speedup;
	\item \re{nonlinear systems, regret minimization,} and stochastic optimization problems; and
	\item other affinity structures such as sparsity, correlation, and low-rankness.
\end{enumerate*}

\subsubsection*{Acknowledgments}
This work was supported in part by Jane Street, the MIT-IBM Watson AI Lab, the MIT-Amazon Science Hub, the MIT-Google Program for Computing Innovation, and MathWorks.

\bibliography{main}

\begin{thebibliography}{55}
\providecommand{\natexlab}[1]{#1}
\providecommand{\url}[1]{\texttt{#1}}
\expandafter\ifx\csname urlstyle\endcsname\relax
  \providecommand{\doi}[1]{doi: #1}\else
  \providecommand{\doi}{doi: \begingroup \urlstyle{rm}\Url}\fi

\bibitem[Anand \& Mobasher(2003)Anand and
  Mobasher]{anand2005IntelligentTechniques}
Sarabjot~Singh Anand and Bamshad Mobasher.
\newblock Intelligent techniques for web personalization.
\newblock In \emph{IJCAI Workshop on Intelligent Techniques for Web
  Personalization}, pp.\  1--36. Springer, 2003.

\bibitem[Azizan-Ruhi et~al.(2019)Azizan-Ruhi, Lahouti, Avestimehr, and
  Hassibi]{azizan2019distributed}
Navid Azizan-Ruhi, Farshad Lahouti, Amir~Salman Avestimehr, and Babak Hassibi.
\newblock Distributed solution of large-scale linear systems via accelerated
  projection-based consensus.
\newblock \emph{IEEE Transactions on Signal Processing}, 67\penalty0
  (14):\penalty0 3806--3817, 2019.

\bibitem[Bhandari et~al.(2018)Bhandari, Russo, and Singal]{bhandari2018finite}
Jalaj Bhandari, Daniel Russo, and Raghav Singal.
\newblock A finite time analysis of temporal difference learning with linear
  function approximation.
\newblock In \emph{Conference on Learning Theory}, pp.\  1691--1692. PMLR,
  2018.

\bibitem[Bose et~al.(2025)Bose, Xiong, Chi, Du, Xiao, and
  Fazel]{bose2025LoRePersonalizing}
Avinandan Bose, Zhihan Xiong, Yuejie Chi, Simon~Shaolei Du, Lin Xiao, and
  Maryam Fazel.
\newblock {LoRe}: Personalizing {LLMs} via low-rank reward modeling.
\newblock \emph{arXiv preprint arXiv:2504.14439}, 2025.

\bibitem[Briggs et~al.(2020)Briggs, Fan, and
  Andras]{briggs2020Federatedlearning}
Christopher Briggs, Zhong Fan, and Peter Andras.
\newblock Federated learning with hierarchical clustering of local updates to
  improve training on non-{IID} data.
\newblock In \emph{2020 international joint conference on neural networks
  (IJCNN)}, pp.\  1--9. IEEE, 2020.

\bibitem[Chai et~al.(2020)Chai, Ali, Zawad, Truex, Anwar, Baracaldo, Zhou,
  Ludwig, Yan, and Cheng]{chai2020TiFLTierbased}
Zheng Chai, Ahsan Ali, Syed Zawad, Stacey Truex, Ali Anwar, Nathalie Baracaldo,
  Yi~Zhou, Heiko Ludwig, Feng Yan, and Yue Cheng.
\newblock {TiFL}: A tier-based federated learning system.
\newblock In \emph{Proceedings of the 29th international symposium on
  high-performance parallel and distributed computing}, pp.\  125--136, 2020.

\bibitem[Chayti et~al.(2021)Chayti, Karimireddy, Stich, Flammarion, and
  Jaggi]{chayti2022LinearSpeedup}
El~Mahdi Chayti, Sai~Praneeth Karimireddy, Sebastian~U Stich, Nicolas
  Flammarion, and Martin Jaggi.
\newblock Linear speedup in personalized collaborative learning.
\newblock \emph{arXiv preprint arXiv:2111.05968}, 2021.

\bibitem[Chen et~al.(2022)Chen, Yang, Zhu, Peng, and
  Yuan]{chen2022Personalizedretrogressresilient}
Zhen Chen, Chen Yang, Meilu Zhu, Zhe Peng, and Yixuan Yuan.
\newblock Personalized retrogress-resilient federated learning toward
  imbalanced medical data.
\newblock \emph{IEEE Transactions on Medical Imaging}, 41\penalty0
  (12):\penalty0 3663--3674, 2022.

\bibitem[Cheng et~al.(2021)Cheng, Chadha, and Duchi]{cheng2021Finetuningfine}
Gary Cheng, Karan Chadha, and John Duchi.
\newblock Fine-tuning is fine in federated learning.
\newblock \emph{arXiv preprint arXiv:2108.07313}, 3, 2021.

\bibitem[Cortes et~al.(2010)Cortes, Mansour, and
  Mohri]{cortes2010LearningBounds}
Corinna Cortes, Yishay Mansour, and Mehryar Mohri.
\newblock Learning bounds for importance weighting.
\newblock \emph{Advances in neural information processing systems}, 23, 2010.

\bibitem[Defazio et~al.(2014)Defazio, Bach, and
  {Lacoste-Julien}]{defazio2014SAGAfast}
Aaron Defazio, Francis Bach, and Simon {Lacoste-Julien}.
\newblock {{SAGA}}: A fast incremental gradient method with support for
  non-strongly convex composite objectives.
\newblock \emph{Advances in neural information processing systems}, 27, 2014.

\bibitem[Deng et~al.(2020)Deng, Kamani, and
  Mahdavi]{deng2020AdaptivePersonalized}
Yuyang Deng, Mohammad~Mahdi Kamani, and Mehrdad Mahdavi.
\newblock Adaptive personalized federated learning.
\newblock \emph{arXiv preprint arXiv:2003.13461}, 2020.

\bibitem[Even et~al.(2022)Even, Massouli{\'e}, and
  Scaman]{even2022SampleOptimalitya}
Mathieu Even, Laurent Massouli{\'e}, and Kevin Scaman.
\newblock Sample optimality and all-for-all strategies in personalized
  federated and collaborative learning.
\newblock \emph{arXiv preprint arXiv:2201.13097}, 2022.

\bibitem[Fallah et~al.(2020)Fallah, Mokhtari, and
  Ozdaglar]{fallah2020PersonalizedFederated}
Alireza Fallah, Aryan Mokhtari, and Asuman Ozdaglar.
\newblock Personalized federated learning with theoretical guarantees: A
  model-agnostic meta-learning approach.
\newblock \emph{Advances in neural information processing systems},
  33:\penalty0 3557--3568, 2020.

\bibitem[Gao et~al.(2022)Gao, Fu, Li, Chen, Xu, and Xu]{liang2022feddc}
Liang Gao, Huazhu Fu, Li~Li, Yingwen Chen, Ming Xu, and Cheng-Zhong Xu.
\newblock {FedDC}: Federated learning with non-{IID} data via local drift
  decoupling and correction.
\newblock In \emph{Proceedings of the IEEE/CVF conference on computer vision
  and pattern recognition}, pp.\  10112--10121, 2022.

\bibitem[Ghosh et~al.(2020)Ghosh, Chung, Yin, and
  Ramchandran]{ghosh2020efficientframework}
Avishek Ghosh, Jichan Chung, Dong Yin, and Kannan Ramchandran.
\newblock An efficient framework for clustered federated learning.
\newblock \emph{Advances in neural information processing systems},
  33:\penalty0 19586--19597, 2020.

\bibitem[Glasgow et~al.(2022)Glasgow, Yuan, and Ma]{glasgow2022Sharpbounds}
Margalit~R Glasgow, Honglin Yuan, and Tengyu Ma.
\newblock Sharp bounds for federated averaging (local {SGD}) and continuous
  perspective.
\newblock In \emph{International Conference on Artificial Intelligence and
  Statistics}, pp.\  9050--9090. PMLR, 2022.

\bibitem[Good et~al.(1999)Good, Schafer, Konstan, Borchers, Sarwar, Herlocker,
  Riedl, et~al.]{good1999Combiningcollaborative}
Nathaniel Good, J~Ben Schafer, Joseph~A Konstan, Al~Borchers, Badrul Sarwar,
  Jon Herlocker, John Riedl, et~al.
\newblock Combining collaborative filtering with personal agents for better
  recommendations.
\newblock \emph{Aaai/iaai}, 439\penalty0 (10.5555):\penalty0 315149--315352,
  1999.

\bibitem[Grimberg et~al.(2021)Grimberg, Hartley, Karimireddy, and
  Jaggi]{grimberg2021OptimalModel}
Felix Grimberg, Mary-Anne Hartley, Sai~P Karimireddy, and Martin Jaggi.
\newblock Optimal model averaging: Towards personalized collaborative learning.
\newblock \emph{arXiv preprint arXiv:2110.12946}, 2021.

\bibitem[Guo et~al.(2023)Guo, Tang, and Lin]{yongxin2022fedbr}
Yongxin Guo, Xiaoying Tang, and Tao Lin.
\newblock {FedBR}: Improving federated learning on heterogeneous data via local
  learning bias reduction.
\newblock In \emph{International conference on machine learning}, pp.\
  12034--12054. PMLR, 2023.

\bibitem[Hanzely \& Richt{\'a}rik(2020)Hanzely and
  Richt{\'a}rik]{hanzely2021FederatedLearning}
Filip Hanzely and Peter Richt{\'a}rik.
\newblock Federated learning of a mixture of global and local models.
\newblock \emph{arXiv preprint arXiv:2002.05516}, 2020.

\bibitem[Huang et~al.(2021)Huang, Lv, Hang, and
  Xing]{huang2021safepersonalized}
Chao Huang, Chen Lv, Peng Hang, and Yang Xing.
\newblock Toward safe and personalized autonomous driving: Decision-making and
  motion control with {DPF} and {CDT} techniques.
\newblock \emph{IEEE/ASME Transactions on Mechatronics}, 26\penalty0
  (2):\penalty0 611--620, 2021.

\bibitem[Jalota et~al.(2023)Jalota, Gopalakrishnan, Azizan, Johari, and
  Pavone]{jalota2023online}
Devansh Jalota, Karthik Gopalakrishnan, Navid Azizan, Ramesh Johari, and Marco
  Pavone.
\newblock Online learning for traffic routing under unknown preferences.
\newblock In \emph{Proceedings of The 26th International Conference on
  Artificial Intelligence and Statistics}, volume 206 of \emph{Proceedings of
  Machine Learning Research}, pp.\  3210--3229. PMLR, 2023.

\bibitem[Karimireddy et~al.(2020{\natexlab{a}})Karimireddy, Jaggi, Kale, Mohri,
  Reddi, Stich, and Suresh]{sai2020mime}
Sai~Praneeth Karimireddy, Martin Jaggi, Satyen Kale, Mehryar Mohri, Sashank~J
  Reddi, Sebastian~U Stich, and Ananda~Theertha Suresh.
\newblock Mime: Mimicking centralized stochastic algorithms in federated
  learning.
\newblock \emph{arXiv preprint arXiv:2008.03606}, 2020{\natexlab{a}}.

\bibitem[Karimireddy et~al.(2020{\natexlab{b}})Karimireddy, Kale, Mohri, Reddi,
  Stich, and Suresh]{karimireddy2021SCAFFOLDStochastic}
Sai~Praneeth Karimireddy, Satyen Kale, Mehryar Mohri, Sashank Reddi, Sebastian
  Stich, and Ananda~Theertha Suresh.
\newblock Scaffold: Stochastic controlled averaging for federated learning.
\newblock In \emph{International conference on machine learning}, pp.\
  5132--5143. PMLR, 2020{\natexlab{b}}.

\bibitem[Khribi et~al.(2008)Khribi, Jemni, and
  Nasraoui]{khribi2008Automaticrecommendations}
Mohamed~Kouthea{\"\i}r Khribi, Mohamed Jemni, and Olfa Nasraoui.
\newblock Automatic recommendations for e-learning personalization based on web
  usage mining techniques and information retrieval.
\newblock In \emph{2008 Eighth IEEE International Conference on Advanced
  Learning Technologies}, pp.\  241--245. IEEE, 2008.

\bibitem[Li et~al.(2020)Li, Sahu, Zaheer, Sanjabi, Talwalkar, and
  Smith]{li2020Federatedoptimization}
Tian Li, Anit~Kumar Sahu, Manzil Zaheer, Maziar Sanjabi, Ameet Talwalkar, and
  Virginia Smith.
\newblock Federated optimization in heterogeneous networks.
\newblock \emph{Proceedings of Machine learning and systems}, 2:\penalty0
  429--450, 2020.

\bibitem[Li et~al.(2021)Li, Hu, Beirami, and Smith]{li2021DittoFair}
Tian Li, Shengyuan Hu, Ahmad Beirami, and Virginia Smith.
\newblock Ditto: Fair and robust federated learning through personalization.
\newblock In \emph{International conference on machine learning}, pp.\
  6357--6368. PMLR, 2021.

\bibitem[Li et~al.(2024)Li, Zhou, Lipton, and Leqi]{li2024PersonalizedLanguage}
Xinyu Li, Ruiyang Zhou, Zachary~C Lipton, and Liu Leqi.
\newblock Personalized language modeling from personalized human feedback.
\newblock \emph{arXiv preprint arXiv:2402.05133}, 2024.

\bibitem[Li \& Azizan(2024)Li and Azizan]{li2024safe}
Zeyang Li and Navid Azizan.
\newblock Safe multi-agent reinforcement learning with convergence to
  generalized nash equilibrium.
\newblock \emph{arXiv preprint arXiv:2411.15036}, 2024.

\bibitem[Liang et~al.(2020)Liang, Liu, Ziyin, Allen, Auerbach, Brent,
  Salakhutdinov, and Morency]{liang2020ThinkLocally}
Paul~Pu Liang, Terrance Liu, Liu Ziyin, Nicholas~B Allen, Randy~P Auerbach,
  David Brent, Ruslan Salakhutdinov, and Louis-Philippe Morency.
\newblock Think locally, act globally: Federated learning with local and global
  representations.
\newblock \emph{arXiv preprint arXiv:2001.01523}, 2020.

\bibitem[Ma et~al.(2023)Ma, Pathak, and Wainwright]{ma2023Optimallytackling}
Cong Ma, Reese Pathak, and Martin~J Wainwright.
\newblock Optimally tackling covariate shift in {RKHS}-based nonparametric
  regression.
\newblock \emph{The Annals of Statistics}, 51\penalty0 (2):\penalty0 738--761,
  2023.

\bibitem[Mangold et~al.(2024)Mangold, Samsonov, Labbi, Levin, Alami, Naumov,
  and Moulines]{mangold2024ScafflsaTaming}
Paul Mangold, Sergey Samsonov, Safwan Labbi, Ilya Levin, Reda Alami, Alexey
  Naumov, and Eric Moulines.
\newblock {SCAFFLSA}: Taming heterogeneity in federated linear stochastic
  approximation and {TD} learning.
\newblock \emph{Advances in Neural Information Processing Systems},
  37:\penalty0 13927--13981, 2024.

\bibitem[Mansour et~al.(2020)Mansour, Mohri, Ro, and
  Suresh]{mansour2020ThreeApproaches}
Yishay Mansour, Mehryar Mohri, Jae Ro, and Ananda~Theertha Suresh.
\newblock Three approaches for personalization with applications to federated
  learning.
\newblock \emph{arXiv preprint arXiv:2002.10619}, 2020.

\bibitem[McMahan et~al.(2017)McMahan, Moore, Ramage, Hampson, and
  y~Arcas]{mcmahan2017Communicationefficientlearning}
Brendan McMahan, Eider Moore, Daniel Ramage, Seth Hampson, and Blaise~Aguera
  y~Arcas.
\newblock Communication-efficient learning of deep networks from decentralized
  data.
\newblock In \emph{Artificial intelligence and statistics}, pp.\  1273--1282.
  PMLR, 2017.

\bibitem[Nesterov(2013)]{nesterov2013Introductorylectures}
Yurii Nesterov.
\newblock \emph{Introductory Lectures on Convex Optimization: A Basic Course},
  volume~87.
\newblock Springer Science \& Business Media, 2013.

\bibitem[Nobari et~al.(2025)Nobari, Alim, ArjomandBigdeli, Srivastava, Ahmed,
  and Azizan]{nobari2025activation}
Amin~Heyrani Nobari, Kaveh Alim, Ali ArjomandBigdeli, Akash Srivastava, Faez
  Ahmed, and Navid Azizan.
\newblock Activation-informed merging of large language models.
\newblock \emph{Advances in neural information processing systems}, 2025.

\bibitem[Pillutla et~al.(2022)Pillutla, Malik, Mohamed, Rabbat, Sanjabi, and
  Xiao]{pillutla2022Federatedlearning}
Krishna Pillutla, Kshitiz Malik, Abdel-Rahman Mohamed, Mike Rabbat, Maziar
  Sanjabi, and Lin Xiao.
\newblock Federated learning with partial model personalization.
\newblock In \emph{International Conference on Machine Learning}, pp.\
  17716--17758. PMLR, 2022.

\bibitem[Precup et~al.(2000)Precup, Sutton, and Singh]{Precup2000EligibilityTF}
Doina Precup, Richard~S. Sutton, and Satinder Singh.
\newblock Eligibility traces for off-policy policy evaluation.
\newblock In \emph{International Conference on Machine Learning}, 2000.

\bibitem[Richards et~al.(2023)Richards, Azizan, Slotine, and
  Pavone]{richards2023control}
Spencer~M Richards, Navid Azizan, Jean-Jacques Slotine, and Marco Pavone.
\newblock Control-oriented meta-learning.
\newblock \emph{The International Journal of Robotics Research}, 42\penalty0
  (10):\penalty0 777--797, 2023.

\bibitem[Rubinstein \& Kroese(2016)Rubinstein and
  Kroese]{rubinstein2016SimulationMonte}
Reuven~Y. Rubinstein and Dirk~P. Kroese.
\newblock \emph{Simulation and the {{Monte Carlo}} Method}.
\newblock John Wiley \& Sons, 2016.

\bibitem[Sattler et~al.(2020)Sattler, M{\"u}ller, and
  Samek]{sattler2020Clusteredfederated}
Felix Sattler, Klaus-Robert M{\"u}ller, and Wojciech Samek.
\newblock Clustered federated learning: Model-agnostic distributed multitask
  optimization under privacy constraints.
\newblock \emph{IEEE transactions on neural networks and learning systems},
  32\penalty0 (8):\penalty0 3710--3722, 2020.

\bibitem[Sugiyama et~al.(2012)Sugiyama, Suzuki, and
  Kanamori]{sugiyama2012Densityratio}
Masashi Sugiyama, Taiji Suzuki, and Takafumi Kanamori.
\newblock \emph{Density Ratio Estimation in Machine Learning}.
\newblock Cambridge University Press, 2012.

\bibitem[T~Dinh et~al.(2020)T~Dinh, Tran, and
  Nguyen]{tdinh2020Personalizedfederated}
Canh T~Dinh, Nguyen Tran, and Josh Nguyen.
\newblock Personalized federated learning with {Moreau} envelopes.
\newblock \emph{Advances in neural information processing systems},
  33:\penalty0 21394--21405, 2020.

\bibitem[Tang et~al.(2025)Tang, Sun, and Azizan]{tang2025meta}
Sunbochen Tang, Haoyuan Sun, and Navid Azizan.
\newblock Meta-learning for adaptive control with automated mirror descent.
\newblock In \emph{Proceedings of the 7th Annual Learning for Dynamics \&amp;
  Control Conference}, volume 283 of \emph{Proceedings of Machine Learning
  Research}, pp.\  1025--1037. PMLR, 2025.

\bibitem[Tang et~al.(2024)Tang, Han, Cai, Yu, Zhou, Oseni, and
  Das]{tang2024Personalizedfederated}
Tao Tang, Zhuoyang Han, Zhen Cai, Shuo Yu, Xiaokang Zhou, Taiwo Oseni, and
  Sajal~K Das.
\newblock Personalized federated graph learning on non-{IID} electronic health
  records.
\newblock \emph{IEEE Transactions on Neural Networks and Learning Systems},
  35\penalty0 (9):\penalty0 11843--11856, 2024.

\bibitem[Thomas \& Brunskill(2016)Thomas and
  Brunskill]{thomas2016Dataefficientoffpolicy}
Philip Thomas and Emma Brunskill.
\newblock Data-efficient off-policy policy evaluation for reinforcement
  learning.
\newblock In \emph{International conference on machine learning}, pp.\
  2139--2148. PMLR, 2016.

\bibitem[Velasevic et~al.(2023)Velasevic, Parasnis, Brinton, and
  Azizan]{velasevic2023effects}
Boris Velasevic, Rohit Parasnis, Christopher~G. Brinton, and Navid Azizan.
\newblock On the effects of data heterogeneity on the convergence rates of
  distributed linear system solvers.
\newblock In \emph{2023 62nd IEEE Conference on Decision and Control (CDC)},
  pp.\  8394--8399, 2023.

\bibitem[Wang et~al.(2024)Wang, Mitra, Hassani, Pappas, and
  Anderson]{wang2024FederatedTemporal}
Han Wang, Aritra Mitra, Hamed Hassani, George~J. Pappas, and James Anderson.
\newblock Federated {TD} learning with linear function approximation under
  environmental heterogeneity.
\newblock \emph{Transactions on Machine Learning Research}, 2024.
\newblock ISSN 2835-8856.

\bibitem[Woodworth et~al.(2020)Woodworth, Patel, Stich, Dai, Bullins, Mcmahan,
  Shamir, and Srebro]{woodworth2020localSGD}
Blake Woodworth, Kumar~Kshitij Patel, Sebastian Stich, Zhen Dai, Brian Bullins,
  Brendan Mcmahan, Ohad Shamir, and Nathan Srebro.
\newblock Is local {SGD} better than minibatch {SGD}?
\newblock In \emph{International Conference on Machine Learning}, pp.\
  10334--10343. PMLR, 2020.

\bibitem[Wo{\'z}niak et~al.(2024)Wo{\'z}niak, Koptyra, Janz, Kazienko, and
  Koco{\'n}]{wozniak2024Personalizedlarge}
Stanis{\l}aw Wo{\'z}niak, Bart{\l}omiej Koptyra, Arkadiusz Janz, Przemys{\l}aw
  Kazienko, and Jan Koco{\'n}.
\newblock Personalized large language models.
\newblock In \emph{2024 IEEE International Conference on Data Mining Workshops
  (ICDMW)}, pp.\  511--520. IEEE, 2024.

\bibitem[Xiong et~al.(2025)Xiong, Wang, Jiang, and Li]{xiong2024LinearSpeedup}
Guojun Xiong, Shufan Wang, Daniel Jiang, and Jian Li.
\newblock On the linear speedup of personalized federated reinforcement
  learning with shared representations.
\newblock In \emph{The Thirteenth International Conference on Learning
  Representations}, 2025.

\bibitem[You et~al.(2024)You, Hao, Sun, Wang, Rong, Yuen, Santi, and
  Ratti]{you2024personalizedautonomous}
Linlin You, Mai Hao, Jian Sun, Yunpeng Wang, Chunming Rong, Chau Yuen, Paolo
  Santi, and Carlo Ratti.
\newblock Toward a personalized autonomous transportation system: Vision,
  challenges, and solutions.
\newblock \emph{The Innovation}, 5\penalty0 (6), 2024.

\bibitem[Zhang et~al.(2024)Zhang, Wang, Mitra, and
  Anderson]{zhang2024finitetime}
Chenyu Zhang, Han Wang, Aritra Mitra, and James Anderson.
\newblock Finite-time analysis of on-policy heterogeneous federated
  reinforcement learning.
\newblock In \emph{The Twelfth International Conference on Learning
  Representations}, 2024.

\bibitem[Zhang et~al.(2025)Zhang, Chen, and Di]{zhang2025stochastic}
Chenyu Zhang, Xu~Chen, and Xuan Di.
\newblock Stochastic semi-gradient descent for learning mean field games with
  population-aware function approximation.
\newblock In \emph{The Thirteenth International Conference on Learning
  Representations}, 2025.

\end{thebibliography}
\bibliographystyle{iclr2026_conference}

\newpage
\doparttoc
\faketableofcontents
\appendix
\part{Appendix} 
\parttoc
\newpage

\section*{Organization of Appendix}
\addcontentsline{toc}{section}{Organization of Appendix}
The appendix is organized as follows.
Notation and symbols are summarized in \cref{apx:nota}.
The detailed setup and additional results of numerical simulations are provided in \cref{apx:exp}.
\cref{apx:discuss} supplements omitted discussions in the main text.

The remaining sections are dedicated to proving \cref{thm} in full generality, incorporating asynchronous density ratio estimation and agent-specific step sizes, and subsuming all other results in the main text.
We first collect several useful lemmas in \cref{apx:aux} to facilitate later analysis.
The three components of \PCL are then examined in sequence: central objective estimation (\cref{apx:objective}), central decision learning (\cref{apx:central}), and personalized local learning (\cref{apx:local}).

\section{Notation} \label{apx:nota}

We summarize the key notation and symbols used throughout this paper in \cref{tbl:nota,tbl:symb}.
We reiterate that the superscript $0$ denotes the \emph{averaged} quantity across all agents, i.e., $f^{0} = \frac{1}{n}\sum_{i=1}^{n}f^{i}$ for any quantity $f$.
Due to symmetry, $f^{0}$ usually satisfies the same property as $f^{i}$ for all $i\in[n]$. Therefore, in addition to the notation $[n]$, we also use $[n^{0}]$ to denote $\{0,1,\ldots,n\}$.
The overline denotes the \emph{mean} quantity under the corresponding environment distribution, i.e., $\bar{f}^{i} = \mathbb{E}_{\mu^{i}}f^{i} (s^{i})$ for any operation $f^{i}$.
We remark that the aggregation of the mean values is not necessarily equal to the mean under the aggregated environment distribution:
\[
	\bar{f}^{0} = \frac{1}{n}\sum_{i=1}^{n} \mathbb{E}_{\mu^{i}}f^{i}(s^{i}) \not\equiv \mathbb{E}_{\mu^{0} }\left[ \frac{1}{n} \sum_{i=1}^{n} f^{i}(s) \right]
	.\]
However, we have two special cases where the equality holds:
\begin{enumerate*}[label=(\roman*)]
	\item $f^{i}=f^{j}$ for all $i,j\in[n]$; or
	\item $\mu^{i}=\mu^{j}$ for all $i,j\in[n]$.
\end{enumerate*}
The superscript $c$ denotes the explicitly maintained \emph{central} quantity that aims to bridge the above discrepancy, e.g., the central objective $b^{c}$ and central decision variable $x\ct$. Generally, $f^{c}\not\equiv f^{0}$ for any quantity $f$, but the equality may hold in the two special cases above.

For the ease of presentation, we use the following shorthand notation throughout the analysis:
$\Delta z_{t}$ represents the optimality gap $z_{t}-z_{*}$ at time step $t$ for any decision variable $z$;
for a function $f$ on $\S$, $f\it$ represents its evaluation at the observation $s\it$, and $f\ot = \frac{1}{n}\sum_{i=1}^{n}f(s\it)$;
$\mathbb{E}\it \coloneqq \mathbb{E}_{s\it \sim \mu^{i}}$ and $\mathbb{E}_{t} \coloneqq \mathbb{E}_{s\it \sim \mu^{i}, i\in[n]}$;
$\mathbb{E}_{\mathcal{F}_{t-1}}$ represents the conditional expectation given the history filtration $\mathcal{F}_{t-1}$ that contains all the randomness up to time step $t-1$.

We use $\succ,\succeq,\preceq,\prec$ to denote the Loewner order and $\gtrsim,\asymp,\lesssim$ to denote the asymptotic order as $t\to\infty$.

\begin{table}[th]
	\caption{Notation.}
	\label{tbl:nota}
	\begin{center}
		\begin{tabular}{cl}
			\multicolumn{1}{c}{\bf Notation} & \multicolumn{1}{c}{\bf Description}
			\\ \midrule
			$[n]$,$[n^{0}]$                  & $\{1,2,\ldots,n\}$, $\{0,1,\ldots,n\}$                                                      \\
			$\Delta z_t$                     & decision variable optimality gap $z_t-z_{*}$                                                \\
			\multirow{2}{*}{ $f\i,f\it$ }    & $i$-th agent's quantity, and its realization at time step $t$ (if a random variable)        \\
			                                 & or evaluation at observation $s\it$ (if a function)                                         \\
			$\agg{f}$                        & averaged quantity across agents $\frac{1}{n}\sum_{i=1}^{n}f^{0}$                            \\
			$f^{c}$                          & explicitly maintained central quantity                                                      \\
			$\bar{f}^{i}$                    & expected quantity under agent $i$'s environment distribution $\mathbb{E}_{\mu^{i}}f^{i}(s)$ \\
			$\agg{\bar{f}}$                  & aggregated expected quantity $\frac{1}{n}\sum_{i=1}^{n}\mathbb{E}_{\mu^{i}}f^{i}(s)$        \\
			$\hat{f}_{t}$                    & estimation of $f$ at time step $t$                                                          \\
			$g^{0\tos i}, g^{c\tos i}$       & bias correction from aggregated/central update direction to agent $i$                       \\
			$\rg$                            & importance-corrected update direction from central to agent $i$                             \\
		\end{tabular}
	\end{center}
\end{table}

\begin{table}[th]
	\caption{Symbols.}
	\label{tbl:symb}
	\begin{center}
		\begin{tabular}{cp{0.3\linewidth}cp{0.3\linewidth}}
			\multicolumn{1}{c}{\bf Symbol} & \multicolumn{1}{c}{\bf Description} & \multicolumn{1}{c}{\bf Symbol} & \multicolumn{1}{c}{\bf Description} \\
			\midrule
			$A$                            & feature matrix                      & $\alpha$                       & step size                           \\
			$b$                            & objective                           & $\beta$                        & Young's inequality parameter        \\
			$C$                            & constant                            & $\chi^{2}$                     & chi-square divergence               \\
			$d$                            & system dimension                    & $\delta$                       & heterogeneity level                      \\
			$\mathcal{E}$                  & Estimation error                    & $\Delta(\S)$                   & probability measure space           \\
			$\mathcal{F}$                  & Filtration                          & $\eta$                         & density ratio weight                \\
			$g$                            & update direction                    & $\gamma$                       & reward discount factor              \\
			$G$                            & system parameter bound              & $\kappa$                       & condition number                    \\
			$i,j,k$                        & agent index                         & $\lambda$                      & minimal eigenvalue                  \\
			$\mathcal{L}$                  & Lyapunov function                   & $\mu$                          & environment distribution            \\
			$n$                            & number of agents                    & $\nu$                          & stochastic condition number         \\
			$s$, $\S$                      & state, state space                  & $\phi,\Phi,\psi,\Psi$          & feature map                         \\
			$t,\tau$                       & time step                           & $\rho$                         & density ratio                       \\
			$w$                            & weight                              & $\sigma$                       & problem scale/ variance proxy       \\
			$x,z$                          & decision variable                   & $\theta$                       & objective weight                    \\
		\end{tabular}
	\end{center}
\end{table}

\section{Additional numerical simulations} \label{apx:exp}

\paragraph{Synthetic setup}
We run our simulations in a synthetic multi-agent linear system with $n=20$ agents in a $d=5$ dimensional space.
Agents possess distinct multivariate Gaussian distributions $\mu^{i} = \mathcal{N}(m_{i},I_{d})$ as their environments, and their personalized objectives are given by linear models $b^{i}(s) = \Phi(s)\theta\is$.
We construct the stochastic feature embedding matrices $A(s)$ and $\Phi(s)$ to have a multiplicative noise structure (\cref{ex:multip}): $A(s) = (I_d + \epsilon_A \cdot ss^T) A_{\mathrm{base}}$ and $\Phi(s) = (I_d + \epsilon_b \cdot ss^T) \Phi_{\mathrm{base}}$, where $A_{\mathrm{base}}$ and $\Phi_{\mathrm{base}}$ are randomly generated positive definite matrices for each problem instance with condition numbers of $O(1)$. We set $\sigma_A=1$ and $\sigma_b=0.5$ to control the level of stochastic noise alignment.
The reference personalized solutions $x_*^i$ are calculated using Monte Carlo estimation with 5000 samples.

To generate heterogeneous environments, we set $m_{i} = \delta_{\env} C_{A} v_i$, where $v_{i}$ is a random unit vector and $C_{A}=4$ satisfies that $\|\mathcal{N}(0,I_{d}) - \mathcal{N}(C_{A}\mathbf{1},I_{d})\|_{\mathrm{TV}} \ge 0.9$. This ensures that the environment heterogeneity level goes to $1$ as $\delta_{\env}$ approaches $1$ (\cref{def:k-aff}).
Similarly, heterogeneous objectives are generated by setting $\theta\is = \theta_{\mathrm{base}} + \delta_{\obj} u_i$, where $u_i$ is a random unit vector and $\theta_{\mathrm{base}}\sim \mathcal{N}(0,I_{d})$.
This construction ensures that the objective heterogeneity level is of $O_{P}(\delta_{\obj})$ (\cref{def:r-aff}).
For reference, we fix the first agent's environment as $\mu^{1} = \mathcal{N}(0,I_{d})$ and $\theta^{1}_{*}=\theta_{\mathrm{base}}$, making it close to the ``center'' when the number of agents $n$ is large.

We compare our proposed \PCL algorithm against \re{the following baselines: 
\begin{enumerate}
	\item \textbf{Independent learning}, where each agent learns its own solution using its local data without communication.
	\item \textbf{Federated averaging} (\FedAvg), where all agents collaboratively learn a unified solution by averaging their update directions.
	\item \textbf{Fine-tuning}, where agents first run \FedAvg for $30$ steps to learn a common model, and then fine-tune their personalized models independently for another $30$ steps;
	\item \textbf{Regularized FL}, \pFedMe \citep{tdinh2020Personalizedfederated} and \Ditto \citep{li2021DittoFair} specifically, where agents collaboratively learn a global model, and then learn personalized models by solving a regularized objective that penalizes the distance to the global model. The regularization parameter is set to $15$ for both methods.
	\item \textbf{Clustered FL} \citep{ghosh2020efficientframework}, where agents are iteratively clustered based on the similarity of their local systems, and then collaboratively learn a model within each cluster.
		The number of clusters is set to $10$.
\end{enumerate}
		}%

The results are reported in \cref{fig:exp}.
All algorithms are run for $t=60$ steps with a fixed learning rate of $\alpha=0.01$.
All experiments are repeated for $10$ runs, and we report the mean squared error averaged over all agents $\mathrm{MSE}^{0}  = \frac{1}{n}\sum_{i=1}^{n}\|x\it-x\is\|^{2}$, along with the 90\% confidence region.
To showcase the agent-specific affinity-based variance reduction, we also report the error of the first agent $\mathrm{MSE}^{1}=\|x^{1}_{t} -x_{*}^{1}\|^{2}$.

\paragraph{Comparison with baselines}
We evaluate the performance of all algorithms under different heterogeneity levels $(\delta_{\env}, \delta_{\obj})$.
\cref{fig:exp} reports the homogeneous setting $(0.0, 0.0)$, low heterogeneity $(0.05, 0.05)$, medium heterogeneity $(0.2, 0.2)$, and high heterogeneity $(0.5, 0.5)$.
Results of exhaustive sweeps over $(\delta_{\env}, \delta_{\obj})$ are presented in \cref{fig:all}, where we report the improvement of \PCL over independent learning and \FedAvg, measured by the average $\mathrm{MSE}^{0}$ over the last $10$ time steps.
We remark that when agents' environment distributions vary greatly $(\delta_{\env}\ge 0.9)$, all algorithms experience high variance and thus the results may not be statistically significant.
\cref{fig:all-1} demonstrates that \PCL consistently outperforms independent learning, with the affinity-based speedup increasing as the heterogeneity level decreases.
\cref{fig:all-2} shows that \PCL matches \FedAvg in the homogeneous setting, while \FedAvg fails to provide any personalization in the presence of heterogeneity.

\begin{figure}[ht]
	\centering
	\begin{subfigure}[b]{0.48\linewidth}
		\centering
		\includegraphics[width=\linewidth]{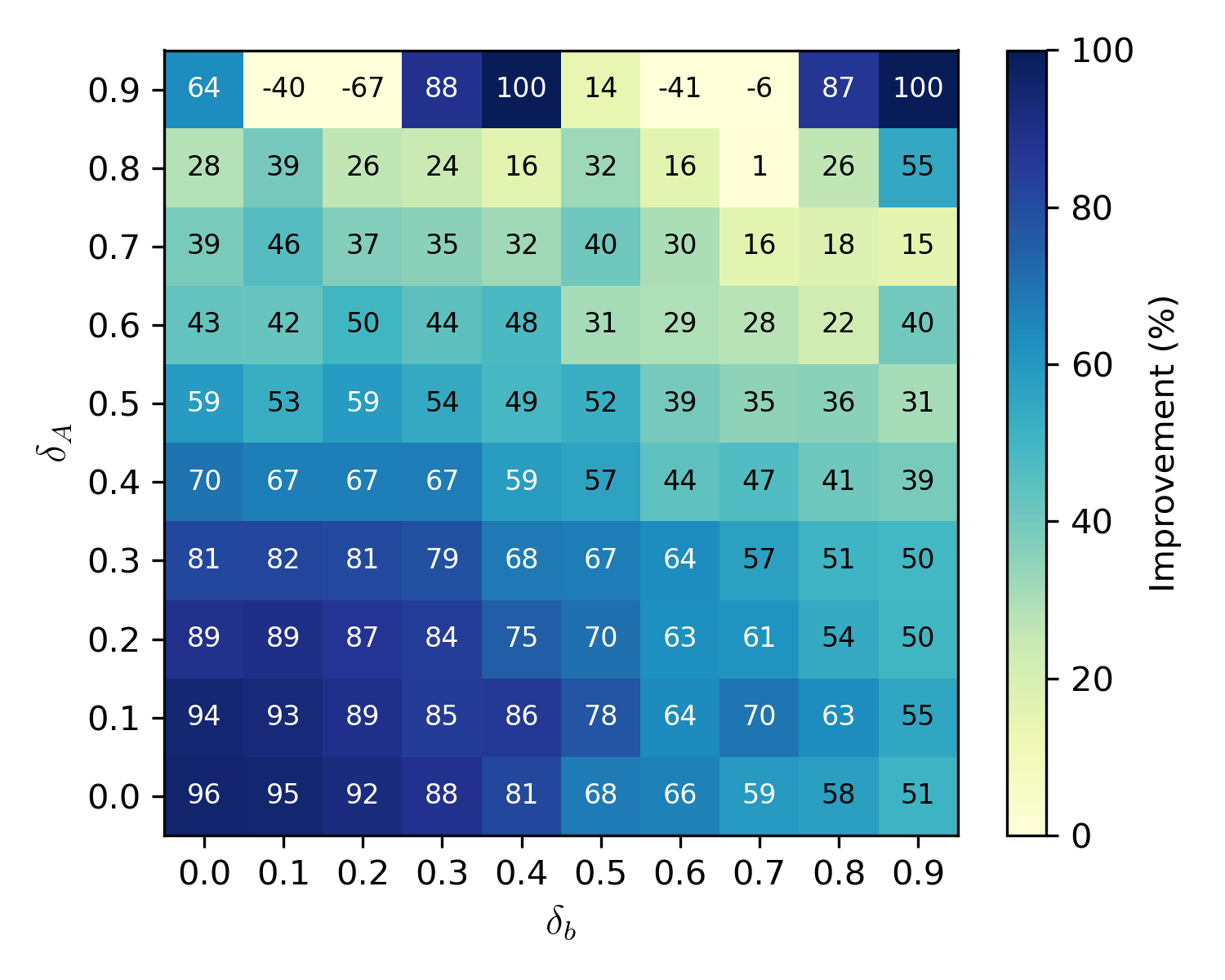}
		\caption{Over independent learning.}
		\label{fig:all-1}
	\end{subfigure}
	\begin{subfigure}[b]{0.48\linewidth}
		\centering
		\includegraphics[width=\linewidth]{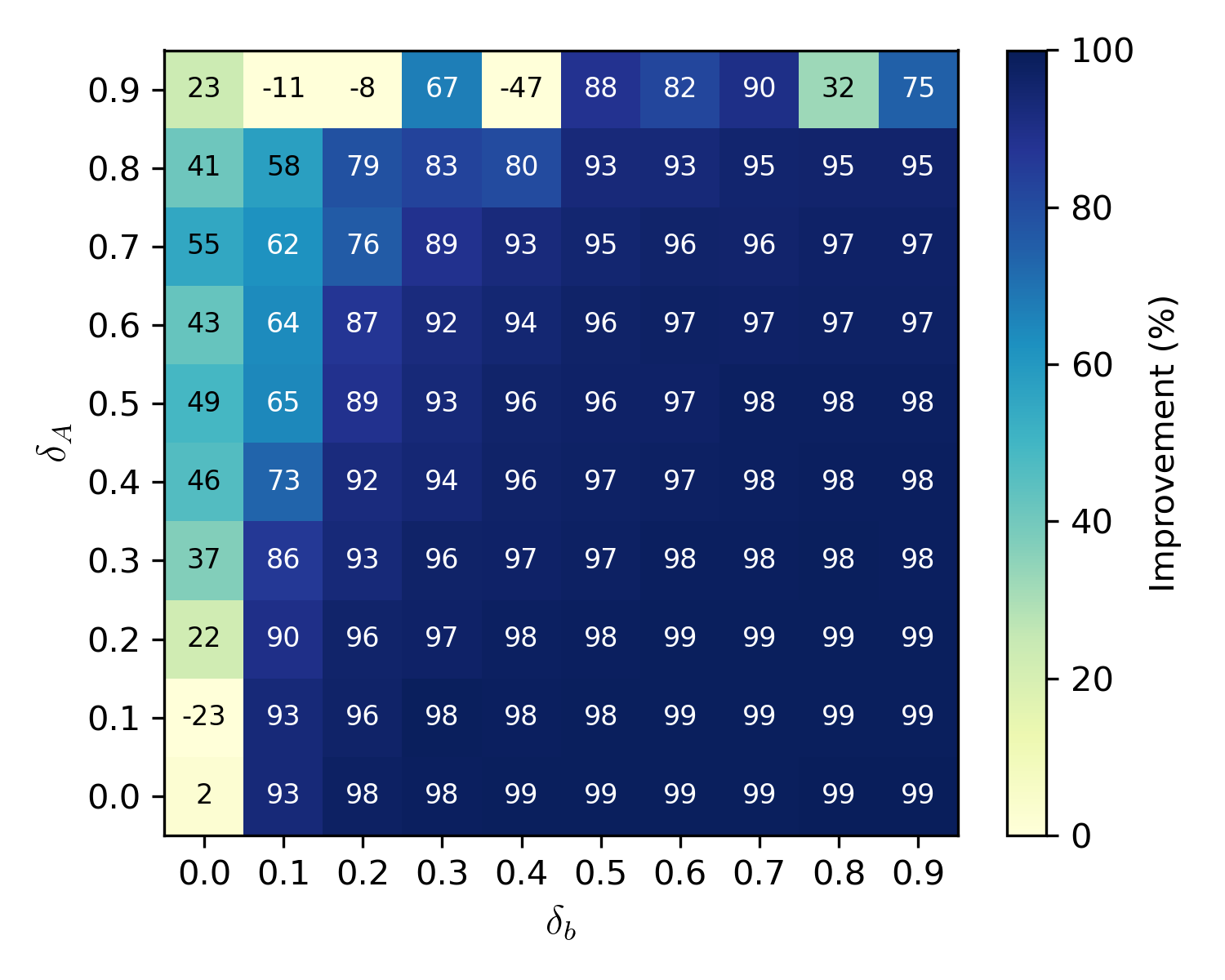}
		\caption{Over \FedAvg.}
		\label{fig:all-2}
	\end{subfigure}
	\caption{Improvement of \PCL.}
	\label{fig:all}
\end{figure}

\paragraph{Federated vs. affinity-based speedup}
Our theory identifies two factors in the variance reduction of \PCL: federated speedup $n^{-1}$ and heterogeneity level $\delta$. We conduct exhaustive sweeps over the number of agents $n\in[2,50]$ and heterogeneity level $\delta=\delta_{\env}=\delta_{\obj}\in[0.02,0.5]$.
Iso-performance contours of \PCL are plotted in \cref{fig:pareto}, where each curve represents the combinations of $(n^{-1},\delta)$ that yield the same average $\mathrm{MSE}^{0}$ over the last $10$ steps.
As expected, the contours form Pareto-type curves, confirming that $\max \{n^{-1},\delta\}$ characterizes the trade-off between collaboration and affinity in \PCL.

\begin{figure}[ht]
	\centering
	\includegraphics[width=0.45\linewidth]{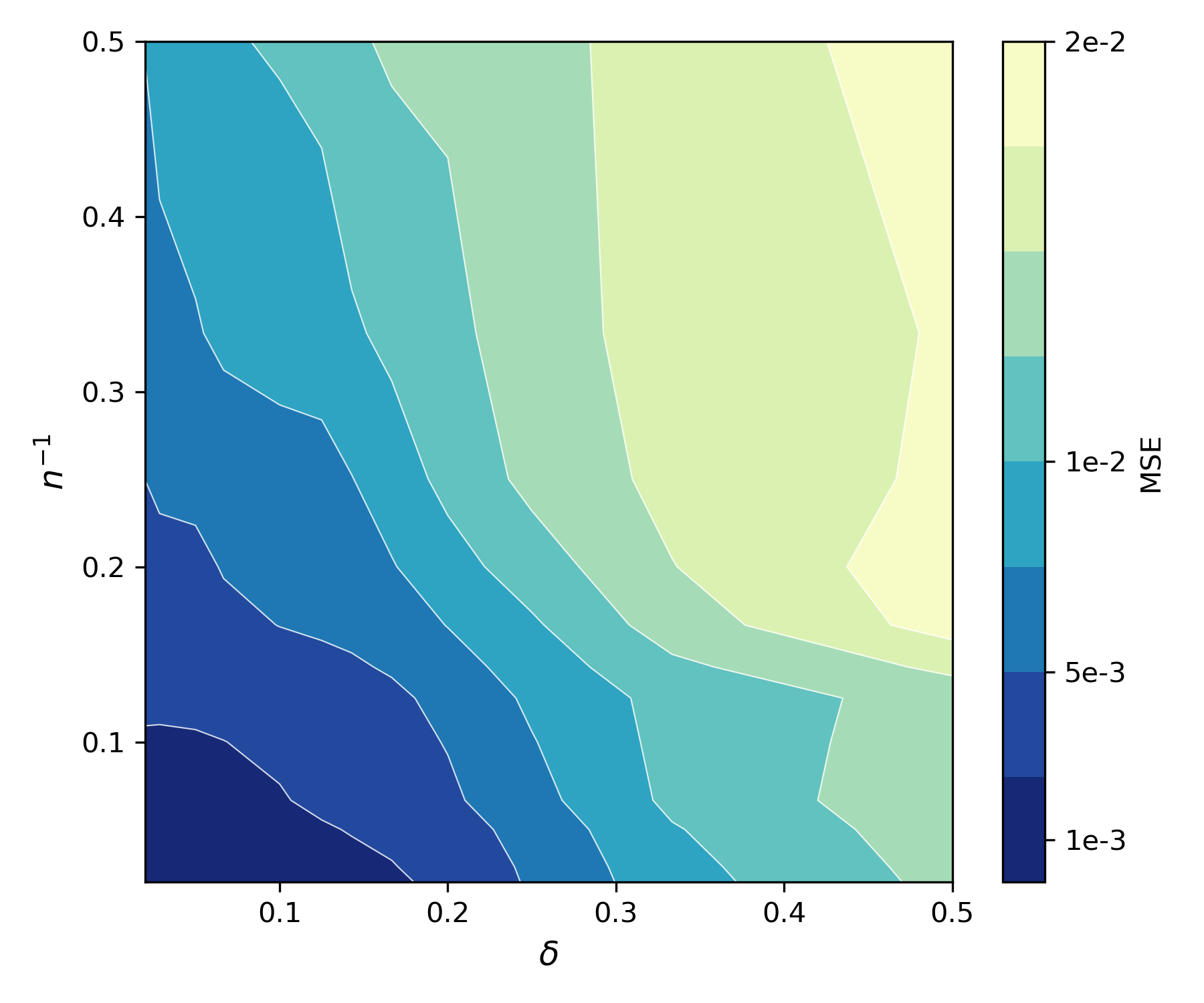}
	\caption{Iso-performance contours of \PCL.}
	\label{fig:pareto}
\end{figure}

\paragraph{Agent-specific performance}
Another highlight of our theory is the agent-specific affinity-based variance reduction effect, where agents closer to the center benefit more from collaboration (\cref{sec:asa}).
We examine this phenomenon in a high heterogeneity setting $(\delta_{\env},\delta_{\obj})=(0.7,0.7)$ and report the performance of independent learning and \PCL for a generic agent and for the agent closest to the center in \cref{fig:specific}.
Looking at the performance of generic agent, \PCL performs similarly to independent learning, as the collaboration benefit gets diminished by the high heterogeneity.
However, the agent closest to the center still gets a ``free ride'' and achieves significant speedup, compared to learning on its own, through collaborating with other agents.
We remark that in the high heterogeneity regime, the agent closest to the center may not be close to any other agents, yet the collaboration benefit remains.

\begin{figure}[ht]
	\centering
	\includegraphics[width=0.7\linewidth]{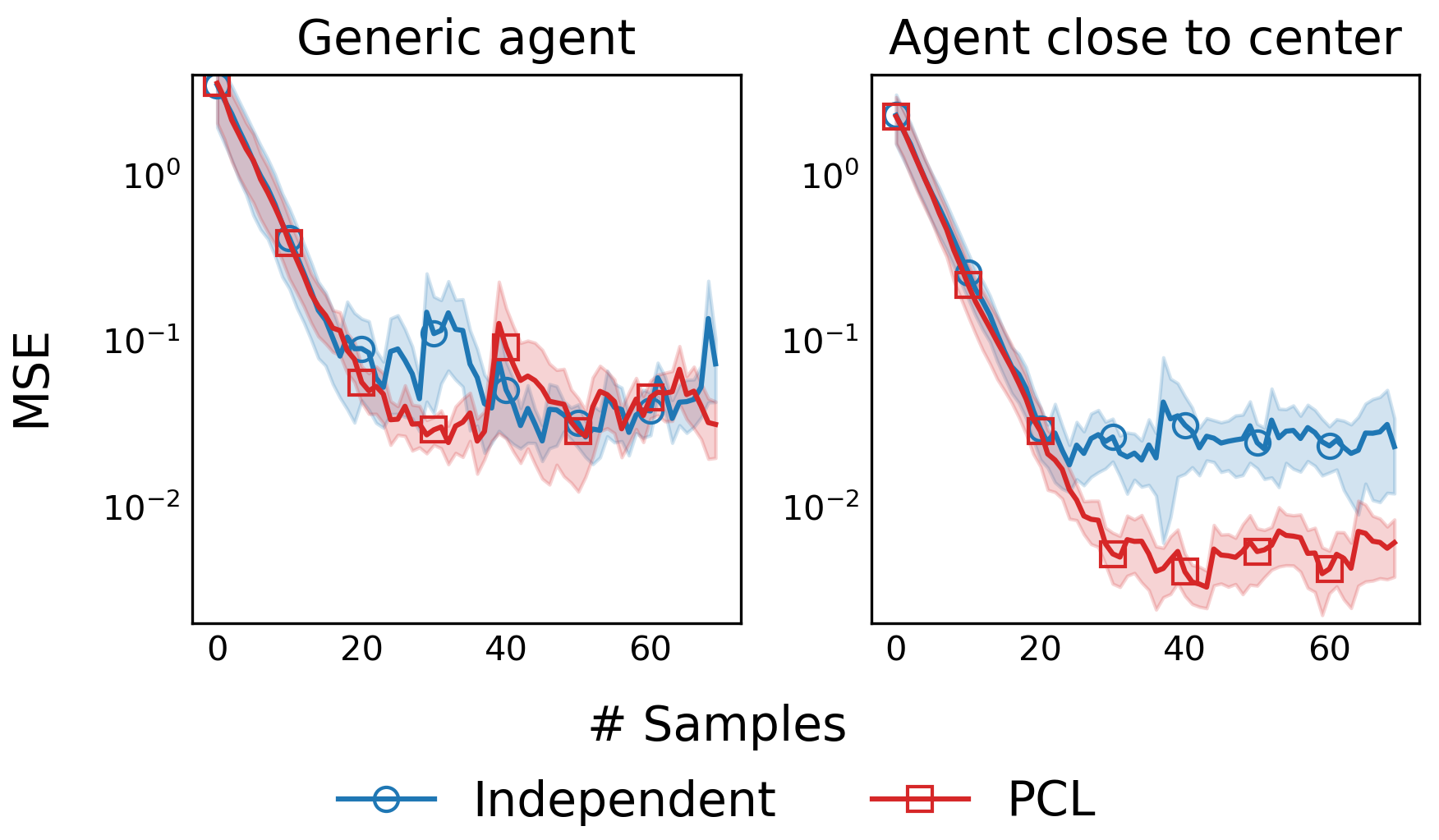}
	\caption{Agent-specific performance.}
	\label{fig:specific}
\end{figure}

\re{
\paragraph{Real-world setup}
For the real-world \href{https://huggingface.co/datasets/flwrlabs/femnist}{FEMNIST} dataset, we first pre-train a numerical network using some training data as the feature extractor $\phi$ shared across all users.
The dimensionality of the extracted feature is $d=84$.
To introduce varying levels of objective heterogeneity, we consider the following label for input character image $z$:
\[
y(z;\lambda) = \lambda \cdot \mathbbm{1}\{\text{character }
z \text{ is a digit}\} + (1-\lambda) \cdot \mathbbm{1}\{\text{character } z \text{ is a curved letter}\}
.\]
Then, the least squares linear regression problem corresponds to the following linear system:
\[
  \mathbb{E}_{\mu^{i}} \left[ \phi(z)\phi(z)^T \right] x^{i} = \mathbb{E}_{\mu^{i}} \left[ \phi(z) y(z;\lambda^{i}) \right], \quad i=1,\ldots,n
,\]
where $\mu^{i}$ is the data distribution of user $i$.
In our implementation of \PCL for this task, we omit the importance correction.
We test four levels of objective heterogeneity by setting the range of $\lambda$ as $[0.5-\delta_{\obj} /2, 0.5+\delta_{\obj} /2]$ with $\delta_{\obj}\in\{0,0.2,0.6,1\}$.
10 users have evenly distributed $\lambda^{i}$ in the specified range.
All three algorithms share the same hyperparameters: learning rate $\alpha=0.002$, batch size is $32$, and the number of samples between two communication rounds is $70$.
The test MSE averaged over all users is reported in \cref{fig:exp_real}.
The results are consistent with those in the synthetic experiments, validating the practicality of \PCL.
}

\re{
\paragraph{Reinforcement learning setup}
We follow the derivation in \cref{sec:td} and setup in \citet{zhang2024finitetime} to implement \PCL, independent learning, and \FedAvg versions of \SARSA.
We consider 10 agents, 10 states, 5 actions, and a feature dimension of $d=20$ for the state-action space.
The other RL hyperparameters are set as follows: reward discount factor $\gamma=0.1$, behavior policy is softmax with temperature $10$, and step size $\alpha=0.1$.
We perturb a nominal reward function and transition kernel to generate heterogeneous objectives and environments, with relative perturbation levels $\delta_{\obj},\delta_{\env}\in\{0.0,0.2,0.5,1.0\}$.
The reference optimal Q-function is calculated using offline value iteration with the true model.
For the asynchronous \DRE module, we use the following density ratio estimator:
\[
  \hat{\rho}^{i}_{t}(s,a) = \frac{\hat{\mu}^{i}_{t}(s) \pi^{i}_{t}(a\given s)}{\frac{1}{n}\sum_{j=1}^{n} \hat{\mu}^{j}_{t}(s) \pi^{j}_{t}(a\given s)}
,\]
where $\hat{\mu}^{i}_{t}(s)$ is estimated using the naive Monte Carlo, i.e., average state visitation frequency up to time step $t$ for agent $i$, and $\pi^{i}_{t}(a\given s)$ is the behavior policy of agent $i$ corresponding to its current Q-function estimate.
The average MSE with respect to the reference optimal Q-functions is reported in \cref{fig:exp_rl}.
The results are consistent with other experiments and show that incorporating asynchronous \DRE does not hinder the performance of \PCL.
}

\section{Further discussions} \label{apx:discuss}

\subsection{Implementation details} \label{apx:cdv}

We present in \cref{alg} the pseudocode of \PCL with asynchronous \COE and \CDL.

\begin{algorithm}[th]
	\caption{Personalized collaborative learning (\PCL)}
	\label{alg}
	\SetKwInput{Init}{initialize}
	\Init{$x_0^c, \theta_0^c, x_0^i$ for $i \in [n]$.}
	\For{$t=0, 1, \dots$}{
	\ForEach{agent $i \in [n]$ in parallel}{
	sample $s_t^i \sim \mu^i$\;
	evaluate residuals $g\it(x\it),g\it(x\ct),g^{i,b}_{t}(\theta\ct),g^{c\tos i}_{t}(x\ct)$\;
	send $(s_t^i, g\it(x\ct), g_t^{i,b}(\theta\ct), g_t^{c\tos i}(x\ct))$ to the server\;
	}
	\SetKwProg{Cs}{at central server:}{}{}
	\Cs{}{
	aggregate central residuals $g_t^{0,c}(x\ct), g_t^{0,b}(\theta_t^c), \rg_t(x\ct)$ for $i\in[n]$\;
	send central residuals back to agents\;
	}
	\ForEach{agent $i \in [n]$ in parallel}{
		$x_{t+1}^c = x_t^c - \alpha_t g_t^{0,c}(x_t^c)$\; \label{alg:coe}
	$\theta_{t+1}^c = \theta_t^c - \alpha_t g_t^{0,b}(\theta_t^c)$\; \label{alg:cdl}
	$x_{t+1}^i = x_t^i - \alpha_t (g_t^i(x_t^i) + \rg_t(x_t^c) - g_t^{c\tos i}(x_t^c))$\;
	}
	}
\end{algorithm}

We provide several remarks on central decision learning (\CDL).
First, if the central server has memory, the central decision variable $x\ct$ and central objective parameter $\theta\ct$ can also be maintained and updated (Lines~\ref{alg:coe}-\ref{alg:cdl}) at the server side.

Second, when agents share the same environment distribution, the central decision variable can be replaced by the average decision variable $x\ot$, since the solution to the central system \cref{eq:sys-central} coincides with the average of personalized solutions, i.e., $x\cs = x\os$. In this way, \CDL happens implicitly without executing \cref{eq:fl}, reducing computation complexity.
However, this implementation requires an additional communication round to compute $x\ott$ after each local update, increasing communication complexity.

Third, when agents have different environment distributions, note that the central update direction $\rg_t(x\ct)$ in \cref{eq:ker-het-pcl-alg} now involves $g^{c\tos i}_{t}(x\ct)$ (which uses the estimated central objective $\hat{b}\ct(s\it)$), instead of $g\it(x\ct)$ (which uses personalized objective $b\i(s\it)$) as in \cref{eq:fl}.
This modification is necessary for the importance correction to work.
That said, it should be unsurprising that if we also use $g^{c\tos i}_{t}$ to compute the central update direction in central learning, i.e., using $g\ot(x\ct) = \frac{1}{n}\sum_{i=1}^{n}g^{c\tos i}_{t}(x\ct)$ in \cref{eq:fl}, the convergence guarantee still holds.
For completeness, we also prove convergence for both implementations in \cref{apx:central}.

\re{
	\begin{remark}[Communication and computation complexity]
		To provide a clearer picture of scalability in our stylized setup, we compare the communication and computation complexity of \PCL with federated averaging (\FedAvg), both with immediate communication after each local update.
		In \FedAvg, each round involves the communication of all local residuals and one averaged residual, resulting in a communication complexity of $\operatorname{comm}(\FedAvg) = \Theta(2nd)$ per round, where $d$ is the system dimension.
		Updating the local decision variable results in a computation complexity of $\operatorname{comp}(\FedAvg) = \Theta(d)$ per agent per round.
		In \PCL with \COE, \CDL, and \DRE, as detailed in \cref{alg}, each round has a communication complexity $\operatorname{comm}(\PCL) \le 4 \operatorname{comm}(\FedAvg)$. Suppose we calculate the central update direction using $g^{c\tos i}_{t}$ as discussed above, and suppose the density ratio is known a priori, then the communication complexity reduces to $\operatorname{comm}(\PCL) \le 2.5 \operatorname{comm}(\FedAvg) = \Theta(5nd)$.
		Similarly, the computation complexity per agent per round in \PCL (\cref{alg}) is $\operatorname{comp}(\PCL) = 6\operatorname{comp}(\FedAvg) = \Theta(6d)$, where the extra factor comes from \CDL, \COE, importance correction, and composing the personalized update direction.

		The above analysis shows that, although sharing the same asymptotic communication and computation complexity as \FedAvg, \PCL incurs a larger constant factor (specifically, up to 4 times for communication and 6 times for computation per iteration) due to the additional components and personalization.
		This inherent trade-off calls for future research into developing more sophisticated communication schemes to reduce the communication and computation overhead of \PCL, and deriving lower bounds on the information exchange to achieve collaborative benefits in personalized learning.
		Additionally, we remark that numerous techniques from federated and decentralized learning that reduce the number of communication rounds, such as multiple local updates, compression, and partial participation, can be readily incorporated into \PCL. We consider the stylized setting of immediate communication to focus on the main ideas.
	\end{remark}
}

\subsection{Linear parametrization of objective and density ratio} \label{apx:lfa}

We first note that linear parametrization covers finite state spaces as a special case. For \DRE, $\psi(s) = e_{s}$ is the one-hot encoding of state $s$, and then $\eta\is = (\rho^{i}(s_1),\ldots,\rho^{i}(s_{|\S|}))^{T}$ simply records the density ratio for all states.
For \COE, we transform the original objective as $b\i(s)\gets e_{s}\otimes b\i(s)\in\R^{d|\S|}$, where $\otimes$ denotes the Kronecker product.
Then, $\Phi(s) = e_{s}e_{s}^{T}\otimes I_{d}\in\R^{d|\S|\times d|\S|}$, and $\theta\is$ similarly records the objective vector for all states.

Linear parametrization is also widely used in supervised learning (parametric regression) and reinforcement learning (linear value function approximation).
\COE performs parametric estimation with a linear function class: $\theta\is = \argmin_{\theta\in\R^{d}} \|\hat{b}\i_{\theta}-b\i\|$, where $\hat{b}\i_{\theta}(s) = \Phi(s)\theta$, and we omit the discussion of approximation error when the model is misspecified, i.e., $b\i(s)\notin \{\Phi(s)\theta: \theta\in\R^{d}\}$.
See \citet{sugiyama2012Densityratio} for more details on \DRE with linear parametrization.

When applied to reinforcement learning, our linear parametrization subsumes linear Markov reward processes \citep{bhandari2018finite}, where $b\i(s)=\phi_{p}(s)r^{i}(s) = \phi_{p}(s)\phi_{r}(s)^{T}\theta\is$, with $r^{i}$ as agent $i$'s reward function and $\phi_{p},\phi_{r}$ as feature maps.
See \cref{sec:td} for more details on this application.

We choose linear parametrization for its simplicity, allowing us to focus on the main ideas. Our method readily extends to other (non)parametric models, provided the function class has complexity polynomial in $d$ rather than in $|\S|$.

\subsection{Proof of \texorpdfstring{\cref{thm:lower}}{Theorem 2}} \label{apx:lower}

We restate and establish the lower bound of \DRE MSE.
Our proof generally follows the standard Le Cam's method with two special treatments: \begin{enumerate*}[label=(\roman*)]
	\item we show that \DRE is lower bounded by a special density estimation problem,
	\item we show that collaborating with other agents does not help in estimating one agent's own density ratio.
\end{enumerate*}

\begin{reptheorem}{thm:lower}
	Let $\hat{\rho}\it(s)$ be the estimate of true density ratio $\rho^{i}$, given by any algorithm with $n$ agents and $t$ independent samples per agent, with no communication or computation constraint. There exists a problem instance such that
	\[
		\ts\inf_{\hat{\rho}\it} \mathbb{E}_{\mu^{0}}|\hat{\rho}\it(s)-\rho^{i}(s)|^{2}
		\ge \min \left\{ (96t)^{-1}, \delta_{\env}^{2}  \right\}
		.\]
\end{reptheorem}

\begin{proof}
	In this proof, we omit the agent index $i$ and thus $\rho = \mu / \mu^{0}$.
	We build the hard problem instance step by step.
	We first consider a finite state space $\S=[d]$ and thus $\rho$ and $\hat{\rho}_{t}$ are vectors in $\R^{d}$ (this is equivalent to a linear function approximation with feature map $\psi(s)=e_{s}$, the $s$-th standard basis vector).
	Then, the minimax risk w.r.t. the MSE loss is defined as
	\[
		R^{*} = \inf_{\hat{\rho}_{t}} \sup_{0\le\rho\le n} \sup_{\{\mu,\mu^{0}\in \Delta(\S): \mu / \mu^{0} = \rho\}} \mathbb{E}_{(\mu \times \mu^{0})^{\otimes t}} \mathbb{E}_{\mu^{0}}|\hat{\rho}_{t}(s)-\rho(s)|^{2}
		,\]
	where $\hat{\rho}_{t}$, a random vector, is the estimate learned from the sample drawn from $(\mu \times \mu^{0})^{\otimes t}$.
	We use the convention that if any constraint set is empty, the risk is zero.
	We note that although samples across time steps are i.i.d., the samples from $\mu$ and $\mu^{0}$ at the same time step are correlated as $\mu^{0} = \frac{1}{n}\sum_{j=1}^{n}\mu^{j}$. However, this correlation scales as $O(n^{-1})$ and diminishes as $n$ increases.
	We then apply several reductions. By set equivalence,
	\[
		R^{*} = \inf_{\hat{\rho}_{t}} \sup_{\mu^{0}\in \Delta(\S)} \sup_{\mu = \rho\mu^{0}, 0\le\rho\le n} \mathbb{E}_{(\mu \times \mu^{0})^{\otimes t}} \mathbb{E}_{\mu^{0}}|\hat{\rho}_{t}(s)-\rho(s)|^{2}
		.\]
	We now fix $\mu^{0} = d^{-1} \mathbf{1}\in \Delta(\S)$ and define a convex set $\mathcal{M} \coloneqq \{ 0\le\rho \le n: \rho\mu^{0}\in \Delta(\S)\}$.\footnote{Here $\rho \mu^{0}$ is the element-wise product as we treat them as functions on $\S$.} Then,
	\[
		R^{*} \ge& \inf_{\hat{\rho}_{t}} \sup_{\mu=\rho\mu^{0} ,\rho \in \mathcal{M}} \mathbb{E}_{(\mu \times \mu^{0})^{\otimes t}} \mathbb{E}_{\mu^{0}}|\hat{\rho}_{t}(s)-\rho(s)|^{2}\\
		=& d^{-1} \inf_{\hat{\rho}_{t}} \sup_{\mu \in \mathcal{M}} \mathbb{E}_{(\mu \times \mu^{0})^{\otimes t}} \|\hat{\rho}_{t}-\rho\|_{2}^{2}
		.\]
	For a sufficiently small $\epsilon$ to be determined, we can choose $\rho_1,\rho_2\in \mathcal{M}$ such that $\|\rho_1-\rho_2\|_{2}\ge 2\epsilon$.
	Following Le Cam's method, we construct a test $\hat{\varphi}_{t} = \argmin_{m\in[2]}\|\hat{\rho}_{t}-\rho_{m}\|_{2}$. Then,
	\[
		R^{*} \ge& d^{-1} \inf_{\hat{\rho}_{t}} \frac{1}{2} \sum_{m=1}^{2} \epsilon^{2} P_{m}(\hat{\varphi}_{t}\neq m) \\
		\ge& d^{-1} \epsilon^{2}  \inf_{\hat{\varphi}_{t}} \frac{1}{2} \sum_{m=1}^{2} P_{m}(\hat{\varphi}_{t}\neq m) \\
		\ge& \frac{\epsilon^{2}}{2d} \left( 1 - \|P_{1}-P_{2}\|_{\mathrm{TV}} \right) \label{eq:lower}
		,\]
	where $P_{m} = (\mu_{m}\times \mu^{0} )^{\otimes t}$.
	By Pinsker's inequality and properties of KL divergence,
	\[
		2\|P_1-P_2\|^{2}_{\mathrm{TV}} \le& D_{\mathrm{KL}}(P_1\| P_2)\\
		=& t D_{\mathrm{KL}}(\mu_1\times \mu^{0}\|\mu_{2}\times \mu^{0})\\
		=& t (\underbrace{D_{\mathrm{KL}}(\mu_1\|\mu_{2})}_{H_1} + \underbrace{D_{\mathrm{KL}}( \mu^{0}_{1} \|\mu^{0}_{2} \given \mu_1))}_{H_2} \label{eq:lower-2}
		,\]
	where $\mu^{0}_{m}$ is conditional distribution of $s^{0}$ given $s \sim \mu_{m}$ and $(s,s^{0}) \sim \mu_{m}\times \mu^{0}$.
	Recall that $\mu^{0}$ is the aggregated distribution of agents' distributions.

	We proceed to construct $\rho_1,\rho_2$ and bound the KL divergence terms. Suppose $d$ is even. Let
	\[
		\mu_{m}(s) = d^{-1} + d^{-3/2} \epsilon(-1)^{s+m}, \quad s\in[d], m\in[2]
		.\]
	Let $\epsilon \le \sqrt{d}$.
	One can verify that $\mu_{m}\in \Delta(\S)$ and
	\[
		\|\rho_{1}-\rho_{2}\|_{2} = d\|\mu_1-\mu_2\|_{2} = d \cdot 2d^{-3 /2} \epsilon \cdot \sqrt{d} = 2\epsilon
		.\]
	Further, let $\epsilon \le \sqrt{d} /2$.
	Then, for the first KL divergence term, we bound it using the chi-square divergence:
	\[
		H_1 \le \chi^{2}(\mu_1\|\mu_2)
		= \sum_{s=1}^{d} \frac{(\mu_1(s)-\mu_2(s))^{2}}{\mu_2(s)}
		\le \sum_{s=1}^{d} \frac{4d^{-3}\epsilon^{2}}{d^{-1} /2}
		= 8d^{-1}\epsilon^{2}
		.\]
	For the second KL divergence term, we first have the decomposition of $\mu^{0} = \frac{1}{n}\mu_{m} + \frac{n-1}{n}\mu_m'$, for $m\in[2]$, where $\mu_m'$ is the aggregated distribution of all agents except agent $i$ and is independent of $\mu_{m}$.
	Then, the conditional distribution given $s \sim \mu_{m}$ is simply $\mu^{0}_{m} = \frac{1}{n}\delta_{s} + \frac{n-1}{n} \mu_{m}'$, for $m\in[2]$.
	If $n\le 1$, then $\mu^{0}_{1}=\mu^{0}_{2}$ and $H_2=0$. Thus, we consider $n\ge 2$.
	By the convexity of KL divergence and Jensen's inequality,
	\[
		H_2 \le \frac{1}{n}D_{\mathrm{KL}}(\delta_{s}\|\delta_{s}\given \mu_{1}) + \frac{n-1}{n}D_{\mathrm{KL}}(\mu_{1}'\|\mu_{2}' \given \mu_{1}) = \frac{n-1}{n}D_{\mathrm{KL}}(\mu_{1}'\|\mu_{2}')
		.\]
	Again, bounding it by chi-square divergence gives
	\[
		H_2 \le \frac{n-1}{n} \sum_{s=1}^{d} \frac{(\mu_1'(s)-\mu_2'(s))^{2}}{\mu_2'(s)}
		.\]
	Notice that
	\[
		\mu^{0} = \frac{1}{n}\mu_{1} + \frac{n-1}{n}\mu_{1}' = \frac{1}{n}\mu_{2} + \frac{n-1}{n}\mu_{2}'
		\implies (\mu_{1}'-\mu_{2}')^{2} = \left( \frac{\mu_{1}-\mu_{2}}{n-1} \right)^{2} = \frac{4d^{-3}\epsilon^{2} }{(n-1)^{2}}
		.\]
	Together with $\mu_{2}'(s) = d^{-1} - \frac{1}{n-1}d^{-3 /2}\epsilon(-1)^{s} \ge d^{-1} /2$, we have
	\[
		H_2 \le \frac{8d^{-1}\epsilon^{2}}{n(n-1)}  \le 4d^{-1}\epsilon^{2}
		.\]

	Plugging $H_1$ and $H_2$ back into \cref{eq:lower-2} and combining \cref{eq:lower} gives
	\[
		R^{*}  \ge \frac{\epsilon^{2}}{2d} \left(1- \epsilon\sqrt{\frac{6t}{d}}\right) \label{eq:lower-4}
		.\]

	We are left to determine $\epsilon$. There are two cases.
	The first is that $\delta_{\env} \le \frac{1}{4\sqrt{6t}} $, i.e., $\mu$ is close to $\mu^{0}$, or we do not have many samples. In this case, one would constrain the estimator to get a smaller error.
	Note that
	\[
		\|\mu_{m}-\mu^{0} \|_{\mathrm{TV}}
		= \frac{1}{2} d \cdot  d^{-3 /2}  \epsilon, \quad m\in[2]
		.\]
	Thus, pushing $\mu_1$ and $\mu_2$ to the boundary of the ball $\left\{ \mu : \|\mu-\mu^{0}\|_{\mathrm{TV}} \le \delta_{\env}\right\}$ gives $\epsilon = 2\delta_{\env} \sqrt{d}$. Plugging this into \cref{eq:lower-4} gives
	\[
		R^{*} \ge 2\delta_{\env}^{2}(1-2\delta_{\env}\sqrt{6t}) \ge \delta_{\env}^{2}
		.\]
	The second case is that $\delta_{\env} > \frac{1}{4\sqrt{6t}}$, where we need to make $\mu_1$ and $\mu_2$ closer to make their discrimination harder. In this case, we set $\epsilon = \frac{1}{2}\sqrt{\frac{d}{6t}}$. Then $\|\mu_{m}-\mu^{0}\|_{\mathrm{TV}}= \frac{1}{4\sqrt{6t}} < \delta_{\env}$ so $\mu_{1}$ and $\mu_2$ are feasible.
	Plugging $\epsilon$ into \cref{eq:lower-4} gives
	\[
		R^{*} \ge \frac{1}{96 t}
		.\]
	Combining the two cases gives
	\[
		R^{*} \ge \min\{\delta_{\env}^{2}, (96 t)^{-1}\}
		.\]

\end{proof}

\subsection{Density ratio estimation} \label{apx:dre}

This subsection first shows that \DRE with linear parametrization is a special variant of \cref{eq:sys} and then discusses several environment affinity structures that help circumvent the lower bound in \cref{thm:lower} and enable affinity-based variance reduction in \DRE.

A linear parametrization of density ratio \citep{sugiyama2012Densityratio} takes the form
$\rho^{i}(s) = \psi(s)^{T} \eta^{i}_{*}$ for all $i\in[n]$,
where $\psi(s)\in\R_{+}^{d}$ is a measure basis and $\eta\is\in\R_{+}^{d}$ is the true weight. Let $\Psi(s)\coloneqq \psi(s)\psi(s)^{T}$. Then,
\[
	\mathbb{E}_{\mu^{0}}\Psi(s) \eta\is = \int_{\S} \psi(s) \rho\i(s) \mu^{0} (s) \d s= \int_{\S} \psi(s) \mu^{i}(s)\d s = \mathbb{E}_{\mu^{i}} \psi(s)
	.\]
Therefore, a simple stochastic fixed point iteration for \cref{eq:sys} described in the paper (see also \cref{ex:couple}) finds $\eta\is$ with an MSE of $O(t^{-1})$, but the affinity-based variance reduction is unattainable because of \cref{thm:lower}.

Alternatively, notice that $\rho^{i}(s)-1$ directly measures the affinity between $\mu^{i}$ and $\mu^{0}$ and is the quantity through which $\rho\i$ enters the analysis. Thus, we can directly apply linear parametrization to $\rho^{i}(s)-1 = \psi(s)^{T}\eta\is$.
Then at time step $t$, $\hat{\rho}\it = 1+\psi(s\it)^{T}\eta\it$. The \DRE problem becomes
\[
	\mathbb{E}_{\mu^{0}} \Psi(s)\eta\is
	= \int_{\S}\psi(s) \left( \rho^{i}(s) - 1 \right)\mu^{0}(s)  \d s
	= \int_{\S}\psi(s) \left( \mu^{i}(s) - \mu^{0}(s)  \right) \d s
	\eqqcolon \mathbb{E}_{\mu^{i}-\mu^{0}}\psi'(s)
	.\]
This problem formulation is easier to work with because $\eta\is \approx 0$ when $\mu^{i} \approx \mu^{0}$; in such cases, regularization can be applied if it is known a priori that agents' environments are similar.

Theoretically, this regularization will not work if our prior knowledge of environment similarity is measured in total variation distance, i.e., $\delta_{\env}$, due to \cref{thm:lower}.
Nonetheless, several other affinity structures can help.

\begin{example}[Sparsity]
	$\eta\is$ encodes the difference between $\mu^{i}$ and $\mu^{0}$ (recall that in the tabular case, $\eta\is(s)=\rho^{i}(s)-1=(\mu^{i}(s)-\mu^{0}(s))/\mu^{0}(s)$; see \cref{apx:lfa}).
	If $\mu^{i}$ and $\mu^{0}$ differ only in a few dimensions, i.e., $\eta\is$ is $\delta_{\env}d$-sparse such that $\|\eta\is\|_{0}\le \delta_{\env}d$, then we can use $\ell_0$-constrained or $\ell_1$-regularized least squares to estimate $\eta\is$, which can be calculated efficiently online and has a standard MSE of $\widetilde{O}(\kappa^{2} t^{-1} \cdot \delta_{\env}d)$.
\end{example}

\begin{example}[Coupling]\label{ex:couple}
	Suppose \DRE uses coupled samples from $\mu^{i}$ and $\mu^{0}$, such that $P(s\it = s\ot) = 1-\delta_{\env}$. Note that $P(s\it=s\ot)\le 1-\|\mu^{i}-\mu^{0}\|_{\mathrm{TV}}$ and the equality is attained when $\mu^{i}$ and $\mu^{0}$ are optimally coupled.
	Then, a simple fixed-point iteration
	\[
		\eta\itt = \eta\it - \alpha_{t}^{\rho} g^{i,\rho} _{t}(\eta\it)
		\coloneqq \eta\it - \alpha_{t}^{\rho} \left( \Psi(s\ot)\eta\it - (\psi(s\it) - \psi(s\ot)) \right)
	\]
	with a properly chosen step size $\alpha_{t}^{\rho} = \widetilde{O}(t^{-1} )$ has an MSE of $\widetilde{O}(\kappa^{2} t^{-1} \cdot \delta_{\env})$.
\end{example}
\begin{proof}
	We only need to show that the update is monotone and its variance at the fixed point enjoys affinity-based variance reduction; then the result follows from standard stochastic approximation analysis (see e.g., \cref{apx:objective}).
	First, we have
	\[
		\mathbb{E} g^{i,\rho} _{t}(\eta\it) = \mathbb{E}_{\mu^{0}}\Psi(s)\eta\it - \mathbb{E}_{\mu^{i}-\mu^{0}}\psi'(s) = \bar{\Psi}^{0}(\eta\it-\eta\is)
		.\]
	The montonicity follows from
	\[
		\left< \Delta \eta\it, \bar{\Psi}^{0}\Delta\eta\it \right>\ge \lambda_{\min}(\bar{\Psi}^{0})\|\Delta \eta\it\|^{2}
		.\]

	For the variance, without loss of generality, we assume normalized feature $\|\psi(s)\|\le 1$. Then,
	\[
		\mathbb{E}\|g^{i,\rho} _{t}(\eta\is)\|^{2}
		=& \mathbb{E}\|\Psi(s\ot)\eta\is - (\psi(s\it) - \psi(s\ot)) \|^{2}\\
		=& \mathbb{E}\|\psi(s\ot)(\rho^{i}(s\it)-1) - (\psi(s\it) - \psi(s\ot)) \|^{2}\\
		\le& 2\mathbb{E}\|\psi(s\ot)(1-\rho^{i}(s\ot))\|^{2} + 2\mathbb{E}\|\psi(s\it) - \psi(s\ot)\|^{2}\\
		\le& 2\chi^{2}(\mu^{i},\mu^{0}) + 2 \cdot 4P(s\it \neq s\ot)\\
		\le& 2\|\rho^{i}\|_{\infty}\|\mu^{i}-\mu^{0}\|_{\mathrm{TV}} + 8(1-P(s\it = s\ot))\\
		=& O(\delta_{\env})
		.\]
\end{proof}

These examples indicate that \DRE requires a stricter affinity measure to achieve affinity-based variance reduction.
To enable maximum generality, we make the following assumption.

\begin{assumption}[\DRE oracle]\label{asmp:dre}
	We assume access to a \DRE oracle that returns an estimate weight $\eta\it$ or density ratio $\hat{\rho}\it$ such that $|\hat{\rho}\it(s)-\rho\i(s)|=O(1)$ throughout the learning process and
	\[\label{eq:dre}
		\mathbb{E}\|\hat{\rho}\it(s)-\rho^{i}(s) \|^{2} = \widetilde{O}((\kappa^{\rho})^{2}t^{-1}\cdot \max\{n^{-1}, \tilde{\delta}\i_{\cen} \})
		,\]
	where $\kappa^{\rho}$ captures the conditioning of the \DRE problem.
\end{assumption}

\cref{asmp:dre} ensures that \DRE does not become a bottleneck for achieving affinity-based speedup.
Our analysis incorporates asynchronous \DRE through \cref{asmp:dre}; see \cref{apx:local}.

\subsection{Noise alignment} \label{apx:noise}

We first establish the trivial bound of the stochastic condition number (noise alignment constant) defined in \cref{def:noise}. In this subsection, we omit the agent index $i$ for simplicity and generality. For a general stochastic matrix $A(s)$, recall its stochastic condition number is
\[
	\nu \coloneqq \| \bar{D} \bar{A}^{-1} \|
	,\]
where $\bar{A}=\mathbb{E}A(s)$, $\bar{D} = \mathbb{E} D(s)$, and $D(s) = \sqrt{A(s)^{T}A(s)}$.
For the upper bound, we have
\[
	\nu \le \|\bar{D} \|\|\bar{A}^{-1} \|
	.\]
The ``numerator'' satisfies
\[
	\|\bar{D}\| = \|\mathbb{E} D(s)\| \le \mathbb{E}\| D(s)\| = \mathbb{E}\sigma_{\max}(D(s)) = \mathbb{E}\sigma_{\max}(A(s)) \le G_{A}
	,\]
where we use the polar decomposition $A(s)=U(s)D(s)$, where $U(s)$ is an orthogonal matrix, and thus $A(s)$ and $D(s)$ share the same singular values. The ``denominator'' satisfies $\|\bar{A}^{-1}\| = \sigma_{\min}^{-1}(\bar{A})$, and we have
\[\label{eq:sing}
	\sigma_{\min}(\bar{A}) \!=\! \min_{\|x\|\!=\!1}\|\bar{A}x\| \!=\! \min_{\|x\|\!=\!1} \|\bar{A}x\|\|x\| \ge \min_{\|x\|\!=\!1} x^{T}\bar{A}x \!=\! \min_{\|x\|\!=\!1} x^{T}\operatorname{sym}(\bar{A})x	\!=\! \lambda_{\min}(\operatorname{sym}(\bar{A}))
	.\]
Note that $\lambda \ge \lambda_{\min}(\operatorname{sym}(\bar{A}))$ and $G_{A}\le \sigma$. Thus,
\[
	\nu \le \sigma/\lambda = \kappa
	.\]

To illustrate the idea that $\nu^{-1}$ measures the alignment of noise in $A(s)$, we describe one example where $\nu^{-1}$ is small.

\begin{example}[Misaligned noise]
	Suppose $\S = [0,2\pi-\epsilon] \subset \R$ and $A(s) = U(s)I\in\R^{2\times 2}$, where $U(s)$ is a rotation matrix with angle $s$. Then, $D(s)=I$, $\bar{D}=I$, and
	\[
		\bar{A} = \int_{0}^{2\pi-\epsilon} \begin{pmatrix}
			\cos s & -\sin s \\
			\sin s & \cos s
		\end{pmatrix}\d s = \begin{pmatrix}
			-\sin \epsilon  & \cos \epsilon-1 \\
			1-\cos \epsilon & -\sin \epsilon
		\end{pmatrix} = \ts 2\sin \frac{\epsilon}{2} \begin{pmatrix}
			-\cos \frac{\epsilon}{2} & -\sin \frac{\epsilon}{2} \\
			\sin \frac{\epsilon}{2}  & -\cos \frac{\epsilon}{2}
		\end{pmatrix}
		,\]
	whose smallest singular value is $2\sin \frac{\epsilon}{2} \approx \epsilon$ when $\epsilon>0$ is small.
	Thus, $\nu\to \infty$ and $\nu^{-1} \to 0$ as $\epsilon \to 0$.
	This is an example where the orientation of $A(s)$ is uniformly random, and thus the noise is completely misaligned.
\end{example}

We then give several examples where the noise is well-aligned and the stochastic condition number $\nu$ equals or is close to $1$.

\begin{example}[Constant orientation]
	Suppose $A(s)=UD(s)$ for all $s\in\S$, where $U$ is a constant orthogonal matrix and $D(s)\succeq 0$. Then, $\bar{A} = U\bar{D}$ and $\nu=1$.
\end{example}

\begin{example}[Positive semi-definite matrix]
	Suppose $A(s)\succeq 0$ for all $s\in\S$. Then, $D(s)=A(s)$ and $\nu=1$.
\end{example}

\begin{example}[Low rank feature embedding] \label{ex:td}
	Suppose the feature embedding matrix $A(s)$ has a low-rank structure: $A(s) = (\phi(s) - \gamma \psi(s))\phi^{T}(s)$, where $\phi(s),\psi(s)\in\R^{d} $ are two normalized feature maps such that $\|\phi(s)\|=\|\psi(s)\|=1$ and $\phi(s)\overset{d}{=}\psi(s)$ for all $s\in \mathcal{S}$. This is the case of temporal difference learning with linear function approximation \citep{bhandari2018finite}, where $\gamma$ is the reward discount factor. Then, $\nu \le \frac{1+\gamma}{1-\gamma}$.
\end{example}
\begin{proof}
	We have
	\[
		D(s)^{2} =& \phi(s)\phi(s)^{T} \left( \phi(s)^{T}\phi(s) - 2\gamma \phi(s)^{T}\psi(s) + \gamma^{2} \psi(s)^{T}\psi(s)  \right) \\
		=& \phi(s)\phi(s)^{T} (\phi(s) -\gamma\psi(s))^{T}(\phi(s)-\gamma\psi(s))
		,\]
	which implies
	\[
		D(s) = \frac{\|\phi(s)-\gamma\psi(s)\|}{\|\phi(s)\|} \phi(s)\phi(s)^{T} \preceq (1+\gamma)\phi(s)\phi(s)^{T}
		.\]
	On the other hand,
	\[
		A(s) \succeq \phi(s)\phi(s)^{T} - \frac{\gamma}{2}(\phi(s)\phi(s)^{T} + \psi(s)\psi(s)^{T} ) \overset{d}{=} (1-\gamma)\phi(s)\phi(s)^{T}
		.\]
	Thus,
	\[
		\bar{A} \succeq (1-\gamma) \mathbb{E}[\phi(s)\phi(s)^{T}] \succeq \frac{1-\gamma}{1+\gamma} \bar{D}
		,\]
	which gives
	\[
		\bar{D}\bar{A}^{-1} \preceq \frac{1+\gamma}{1-\gamma} I
		.\]
	Thus, $\nu\le \frac{1+\gamma}{1-\gamma}$.
\end{proof}

\begin{remark}
	While strict normalization $\|\phi(s)\| = \|\psi(s)\| = 1$ is required to show $\nu \le \frac{1+\gamma}{1-\gamma}$, \cref{lem:eff-aff}, the only place where $\nu$ is directly used, also holds with $\|\phi(s)\|, \|\psi(s)\| \le 1$. That is, we define $\nu$ for interpretability and ease of calculation, but it can be relaxed in certain cases.
\end{remark}

\begin{example}[Multiplicative noise] \label{ex:multip}
	Suppose the noise is multiplicative: $A(s) = (I + U(s) )\bar{A}$, where $U(s)$ is zero-mean and $\|U(s)\| \le \epsilon$ for all $s\in\S$. Then, $\nu \le 1 + \epsilon$.
\end{example}
\begin{proof}
	We have
	\[
		&D(s)^{2} = \bar{A}^{T}(I + U(s))^{T}(I + U(s))\bar{A} \preceq (1+\epsilon)^{2}\bar{A}^{T}\bar{A} \\
		\implies & \bar{A}^{-T} D(s)^{2}\bar{A}^{-1}   \preceq (1+\epsilon)^{2}I \\
		\implies &  \nu = \|\mathbb{E}D(s)\bar{A}^{-1}\|  \le 1+\epsilon
		,\]
\end{proof}

Finally, we refer readers to \cref{lem:eff-aff}, which illustrates how noise alignment affects the translation of the raw affinity into an \emph{effective} affinity for variance reduction.
Intuitively, when $\bar{A}$ is ill-conditioned, a small perturbation in the objective can lead to a large perturbation in the solution. Then, if $A(s)$ does not align well with $\bar{A}$, the large perturbation in the solution may be further amplified by the large eigenvalues of $A(s)$, leading to a large variance in the update direction.
On the other hand, when $A(s)$ aligns well with $\bar{A}$, the perturbation in the solution is only stretched by the small eigenvalues in $A(s)$, maintaining a similar magnitude to the objective perturbation and preventing noise amplification in the ill-conditioned subspace.

\subsection{Application to reinforcement learning} \label{sec:td}

This subsection gives a concrete application of our method to the policy evaluation problem in reinforcement learning (RL) resulting in personalized collaborative temporal difference (TD) learning.

Heterogeneous federated RL has garnered traction recently \citep{zhang2024finitetime,wang2024FederatedTemporal,xiong2024LinearSpeedup,li2024safe} due to its practicality by accommodating heterogeneity in multi-agent decision-making.
However, existing works either fail to personalize and hence only work well in low heterogeneity regimes \citep{wang2024FederatedTemporal,zhang2024finitetime}, or deliver slower convergence rates \citep{xiong2024LinearSpeedup}.
Our framework encompasses the setting of heterogeneous federated RL and our method provides the first personalized collaborative reinforcement learning algorithm that accommodates arbitrary heterogeneous agents while achieving affinity-based variance reduction.

Consider $n$ agents with distinct Markov reward processes $(\O,P^{i},R^{i},\gamma)$, where $\O$ is the state space, $P^{i}$ is the transition kernel induced by agent $i$'s behavior policy, $R^{i}:\O \times \O\to\R$ is the reward function, and $\gamma\in[0,1)$ is the discount factor. Following \citep{bhandari2018finite}, we write $R^{i}(o) = \mathbb{E}[R^{i}(o\i_{h},o\i_{h+1})\given o\i_{h}=o]$.
Agents want to evaluate their behavior policies by calculating their infinite horizon value functions $V^{i}(s) = \mathbb{E}[\sum_{h=0}^{\infty} \gamma^{h} R^{i}(o\i_{h})\given o\i_{0}=0]$, where $o\i_{h+1} \sim P^{i}(\cdot \given o\i_{h})$. With a linear function approximation $V^{i}(o)\approx \phi(o)^{T}x\is$ for some $x\is\in\R^{d}$, the expected projected Bellman equation can be cast into \cref{eq:sys} as
\[\label{eq:bellman}
	\mathbb{E}\i[\underbrace{ \phi(s)(\phi(s) - \gamma \phi(s') )^{T} }_{A(s,s')}] x\is = \mathbb{E}\i[\underbrace{\phi(s)R^{i}(s,s')}_{b^{i}(s,s') }]
	,\]
where $\mathbb{E}^{i}=\mathbb{E}_{s\sim \mu^{i},s' \sim P^{i}(\cdot \given s)}$. The stochastic residual of \cref{eq:bellman} is the TD error, and the corresponding fixed point iteration gives the TD(0) algorithm.
Specifically within our framework, each observation tuple is $s\it=(o\i_{h_{t}},o\i_{h_{t}+1})$,\footnote{Here we assume an offline RL setting where we have i.i.d. samples from a pre-collected dataset consisting of observation tuples $(o\i_{h},o\i_{h+1}\sim P^{i}(\cdot \given o\i_{h}), R^{i}(o\i_{h},o\i_{h+1}))$.}
$A(s\it) = \phi(o\i_{h_t})(\phi(o\i_{h_t})^{T} - \gamma \phi(o\i_{h_t+1})^{T})$, $b^{i}(s\it) = \phi(o\i_{h_t}) R^{i}(o\i_{h_t},o\i_{h_t+1})$, and the environment distribution is $\mu^{i}(o,o') =  \pi^{i}(o) \times P\i(o'\given o)$, where $\pi^{i}$ is the stationary distribution of the agent $i$'s transition kernel.
Then $g\it(x\it)$ represents the TD error and \PCL \cref{eq:ker-het-pcl-alg} gives personalized collaborative TD(0).

With a normalized feature map $\|\phi(o)\|\le 1$ and constants $G_{x}\ge \max\{\max_{i\in[n]}\|x\is\|,\|x\cs\|\}$, $G_{b}\ge \max_{i\in[n]}\|R^{i}\|_{\infty}$, we have $G_{A} \le 1+\gamma\le2$ and $\sigma \le 2\max \{(1+\gamma)G_{x},G_{b}\}\lesssim G_{x}+G_{b}$. As shown in \citet[Lemma 3]{bhandari2018finite}, $\lambda^{i}\ge (1-\gamma)\lambda_{\min}(\mathbb{E}_{\pi^{i}}[\phi(s)\phi(s)^{T}])$.
As shown in \cref{ex:td}, the stochastic condition number in this example is $\nu \le \frac{1+\gamma}{1-\gamma} \le 2(1-\gamma)^{-1}$.
Therefore, by \cref{thm}, the sample complexity of personalized collaborative TD(0) reads
\[
	O\left(\frac{(G_{x}+G_{b})^{2}}{(1-\gamma)^{3}(w^{i})^{2}t} \cdot \max\{n^{-1},\delta_{\env},\delta_{\obj}\}\right)
	,\]
where $w^{i}\coloneqq\lambda_{\min}(\mathbb{E}_{\pi^{i}}[\phi(s)\phi(s)^{T}]) $ and $\delta_{\env}$, $\delta_{\obj}$ represent kernel and reward heterogeneity levels, respectively. This complexity matches the best known result for homogeneous federated TD learning \citep{wang2024FederatedTemporal}, while offering new insights in high heterogeneity regimes.

\section{Preliminary Lemmas} \label{apx:aux}

\begin{lemma}[Affinity] \label{lem:aff}
	Given the universal scores $\delta_{\env} = \max_{i,j} \|\mu^{i} - \mu^{j}\|_{\mathrm{TV}}$ and $\delta_{\obj} \coloneqq \max_{i,j} \|\theta^{i}_{s} - \theta^{j}_{s}\|_{2} / (2G_{b})$, along with the agent-specific scores $\delta^{i}_{\env} = \|\mu^{i} - \mu^{0}\|_{\mathrm{TV}}$ and $\delta^{i}_{\obj} = \|\theta^{i}_{s} - \theta^{0}_{s}\| / (2G_{b})$, we establish bounds on various parameter differences in terms of these scores.
	\begin{enumerate}[label=(\alph*)]
		\item $\|b^{i}(s)-b^{j}(s)\|\le 2G_{b}\delta_{\obj}$, for any $i,j\in[n^{0} ]$.  \label{li:diff-b}
		\item $\|b^{i}(s)-b^{0}(s)\|\le 2G_{b}\delta_{\obj}^{i}$, for any $i\in[n]$.  \label{li:diff-bc}
		\item $\|\bar{b}^{i}-\bar{b}^{j}\|\le 2G_{b}(\delta_{\env}+\delta_{\obj})$, for any $i,j\in[n^{0}]$.  \label{li:mean-diff-b}
		\item $\|\bar{b}^{i}-\bar{b}^{0}\|\le 2G_{b}\min \{1, \delta\i_{\env}+\delta_{\obj}, \delta_{\env}+\delta\i_{\obj} \}$, for any $i\in[n]$.\\ We thus define $\delta\i_{\cen}\coloneqq \max \left\{\delta\i_{\env}, \|\bar{b}\i-\bar{b}\o\|/(2G_{b})  \right\} \le \min\{ 1, \delta\i_{\env}+\delta_{\obj}, \delta_{\env}+\delta\i_{\obj} \}$. \label{li:mean-diff-bc}
		\item $\|\mathbb{E}_{\mu^{j} }[b^{i}(s)-b^{c}(s)  ]\| \le 2\sigma\delta\i_{\cen}$, where $j\in\{0,i\}$, for any $i\in[n]$.  \label{li:mean-diff-pb}
		\item (Naive) $\|\theta\is - \theta\cs\| \le 2\lambda^{-1} \sigma(\delta_{\env}+\delta_{\obj})$, for any $i\in[n]$.  \label{li:diff-theta}
		\item $\|\bar{A}^{i}-\bar{A}^{j}  \| \le 2G_{A}\delta_{\env}$, for any $i,j\in[n^{0}]$.  \label{li:diff-a}
		\item $\|\bar{A}^{i}-\bar{A}^{0}  \| \le 2G_{A}\delta\i_{\env}$, for any $i\in[n]$.  \label{li:diff-ac}
		\item $\|\mathbb{E}_{\mu^{j} }[A(s)(x\is-x\cs)]\| \le 2\sigma\delta\i_{\cen}$, where $j\in\{0,i\}$, for any $i\in[n]$. \label{li:mean-diff-ax}
		\item (Naive) $\|x\is - x\cs\| \le 2\lambda^{-1} \sigma(\delta_{\env}+\delta_{\obj})$, for any $i\in[n]$. \label{li:diff-x}
	\end{enumerate}

\end{lemma}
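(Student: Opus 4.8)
The plan is to reduce all ten bounds to two elementary building blocks and then group the parts by difficulty. The first building block is the linear parametrization $b^i(s)-b^j(s)=\Phi(s)(\theta^i_*-\theta^j_*)$, which together with $\|\Phi(s)\|\le 1$ yields (a) and (b) directly from the definitions of $\delta_{\obj}$ and $\delta^i_{\obj}$; the index-$0$ cases of (a) reduce to the $[n]$ case because $\theta^0_*=\frac1n\sum_k\theta^k_*$, so $\|\theta^i_*-\theta^0_*\|\le\frac1n\sum_k\|\theta^i_*-\theta^k_*\|\le 2G_b\delta_{\obj}$. The second building block is the measure-difference inequality $\|\mathbb{E}_{\mu^i}f-\mathbb{E}_{\mu^j}f\|\le 2(\sup_s\|f(s)\|)\,\|\mu^i-\mu^j\|_{\mathrm{TV}}$ (Jordan decomposition of the signed measure), which with $\sup_s\|A(s)\|\le G_A$ gives (g) and (h) at once. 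At this stage I would also record, for repeated later use, that $\delta^i_{\env}=\|\mu^i-\mu^0\|_{\mathrm{TV}}\le\delta_{\env}$, obtained by writing $\mu^i-\mu^0=\frac1n\sum_k(\mu^i-\mu^k)$.

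For the mean-objective bounds I would split $\bar b^i-\bar b^j$ with a mixed expectation, $\bar b^i-\bar b^j=\mathbb{E}_{\mu^i}[b^i-b^j]+(\mathbb{E}_{\mu^i}-\mathbb{E}_{\mu^j})b^j$, bounding the first term by (a) and the second by the measure-difference inequality with $\sup_s\|b^j(s)\|\le G_b$, which proves (c). For the sharper (d) I would start from $\bar b^i-\bar b^0=\frac1n\sum_j(\bar b^i-\bar b^j)$ and perform this split in the two possible orders: inserting $\mathbb{E}_{\mu^i}b^j$ collapses the objective part to $\mathbb{E}_{\mu^i}[b^i-b^0]$ (at most $2G_b\delta^i_{\obj}$ by (b)) and leaves an averaged measure term (at most $2G_b\delta_{\env}$), giving $\delta_{\env}+\delta^i_{\obj}$, whereas inserting $\mathbb{E}_{\mu^j}b^i$ collapses the measure part to $(\mathbb{E}_{\mu^i}-\mathbb{E}_{\mu^0})b^i$ (at most $2G_b\delta^i_{\env}$) and leaves an averaged objective term (at most $2G_b\delta_{\obj}$), giving $\delta^i_{\env}+\delta_{\obj}$; the trivial bound $1$ comes from $\|\bar b^i\|,\|\bar b^0\|\le G_b$. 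The displayed estimate on $\delta^i_{\cen}$ then follows by combining this with the facts $\delta^i_{\env}\le 1$, $\delta^i_{\env}\le\delta^i_{\env}+\delta_{\obj}$, and $\delta^i_{\env}\le\delta_{\env}\le\delta_{\env}+\delta^i_{\obj}$, so that both entries of the maximum obey all three bounds in the minimum.

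The crux is (e) and (i), the feature-space bounds, whose whole purpose is to avoid the conditioning blow-up of the naive bounds (f) and (j). The main idea is never to pass through $\|\theta^i_*-\theta^c_*\|$ or $\|x^i_*-x^c_*\|$ (which carry a $\lambda^{-1}$ factor), but to cancel the solution vectors against the defining equations $\bar\Phi^0\theta^c_*=\bar b^0$, $\bar A^0 x^c_*=\bar b^0$, and $\bar A^i x^i_*=\bar b^i$. Concretely, for (i) I would write $\mathbb{E}_{\mu^j}[A(s)(x^i_*-x^c_*)]=\bar A^j(x^i_*-x^c_*)$ and, for $j=i$, substitute the fixed-point equations to obtain $\bar b^i-\bar b^0-(\bar A^i-\bar A^0)x^c_*$; bounding $\bar b^i-\bar b^0$ by $2G_b\delta^i_{\cen}$ (the definition of $\delta^i_{\cen}$) and $\bar A^i-\bar A^0$ by (h), and using $2G_b\le\sigma$, $2G_AG_x\le\sigma$, $\|x^c_*\|\le G_x$, and $\delta^i_{\env}\le\delta^i_{\cen}$, gives $2\sigma\delta^i_{\cen}$, with $j=0$ symmetric (substitute $\bar A^0 x^i_*=\bar b^i+(\bar A^0-\bar A^i)x^i_*$). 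For (e) the analogue exploits the defining property $\mathbb{E}_{\mu^0}[b^c]=\bar b^0$: writing $\mathbb{E}_{\mu^0}[b^i-b^c]=(\mathbb{E}_{\mu^0}-\mathbb{E}_{\mu^i})b^i+(\bar b^i-\bar b^0)$, and for $j=i$, $\mathbb{E}_{\mu^i}[b^i-b^c]=(\bar b^i-\bar b^0)+(\mathbb{E}_{\mu^0}-\mathbb{E}_{\mu^i})b^c$, each summand is at most $2G_b\delta^i_{\cen}$, using $\sup_s\|b^c(s)\|\le\|\theta^c_*\|\le G_b$ and $\delta^i_{\env}\le\delta^i_{\cen}$.

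Finally, the naive bounds (f) and (j) follow from a standard perturbation argument. The averaged operators $\bar\Phi^0$ and $\bar A^0$ are strongly monotone, since their symmetric parts are averages of the $\operatorname{sym}(\bar\Phi^i)$, respectively $\operatorname{sym}(\bar A^i)$, and hence $\succeq\lambda I$; testing the identity $\bar A^0(x^i_*-x^c_*)=(\bar b^i-\bar b^0)+(\bar A^0-\bar A^i)x^i_*$ against $x^i_*-x^c_*$ gives $\lambda\|x^i_*-x^c_*\|\le\|\bar b^i-\bar b^0\|+\|\bar A^0-\bar A^i\|\,\|x^i_*\|$, and substituting (c) and (h) with $2G_b,2G_AG_x\le\sigma$ yields $\sigma(2\delta_{\env}+\delta_{\obj})\le 2\sigma(\delta_{\env}+\delta_{\obj})$, proving (j); (f) is identical with $\bar\Phi$ replacing $\bar A$ and $\|\theta^i_*\|\le G_b$. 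I expect the main obstacle to be precisely the pair (e), (i): the content of the lemma is that these feature-space quantities are tamed by the raw affinity $\delta^i_{\cen}$ with no $\lambda^{-1}$ dependence, which works only because the solution vectors cancel against the fixed-point equations, and this forces careful bookkeeping of which law ($\mu^i$, $\mu^0$, or the aggregated mean defining $\bar b^0$) each expectation is taken under, since under environment heterogeneity $\mathbb{E}_{\mu^0}[b^i]$, $\bar b^i$, and $\bar b^0$ no longer coincide.
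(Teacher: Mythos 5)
Your proposal is correct and follows essentially the same route as the paper's proof: linear parametrization plus $\|\Phi(s)\|\le 1$ for the pointwise objective bounds, the total-variation inequality $\|\mathbb{E}_{\mu^i}f-\mathbb{E}_{\mu^j}f\|\le 2G_f\|\mu^i-\mu^j\|_{\mathrm{TV}}$ for the operator bounds, the same two mixed-expectation splits for (c) and (d), cancellation against the fixed-point equations $\bar{A}^i x^i_*=\bar{b}^i$, $\bar{A}^0 x^c_*=\bar{b}^0$ (resp.\ $\bar{\Phi}^0\theta^c_*=\bar{b}^0$) for the feature-space bounds (e) and (i), and a strong-monotonicity perturbation argument for the naive bounds (f) and (j). Your identification of (e) and (i) as the crux --- that the solution vectors must cancel against the defining equations so that no $\lambda^{-1}$ factor appears --- is precisely the point the paper emphasizes, and your minor variations (handling index $0$ by convexity, testing against $x^i_*-x^c_*$ instead of invoking $\sigma_{\min}(\bar{A})\ge\lambda_{\min}(\operatorname{sym}(\bar{A}))$) are equivalent to the paper's steps.
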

\begin{proof}
	For \cref{li:diff-b}, by the linear parametrization of the objective,
	\[\label{eq:diff-b}
		\|b^{i}(s)-b^{j}(s)\| = \|\Phi(s)(\theta\is - \theta\js)\| \le \|\theta\is - \theta\js\| \le 2G_{b}\delta_{\obj}
		,\]
	where we use the fact that $\|\Phi(s)\|\le 1$ and the definition of $\delta_{\obj}$ in \cref{sec:objective}.
	\cref{li:diff-bc} follows from the same argument with agent-specific score $\delta\i_{\obj}$ used.

	For any function $f$ such that $\|f(s)\|\le G_{f}$ for all $s\in\S$, and for all $i\in[n]$, by \cref{def:k-aff},
	\[
		\|\mathbb{E}_{\mu^{i}}f(s) - \mathbb{E}_{\mu^{j}}f(s)\|
		=& \left\| \int_{\S} f(s) (\mu^{i}(s) - \mu^{j}(s)) \d s \right\| \\
		\le& 2G_{f}\|\mu^{i}-\mu^{j}\|_{\mathrm{TV}} \\
		\le& \begin{cases}
			2G_{f}\delta_{\env},   & j\in[n] \\
			2G_{f}\delta\i_{\env}, & j=0
		\end{cases}
		\label{eq:mean-diff}.\]
	This bound first gives \cref{li:diff-a} and \cref{li:diff-ac} by letting $f(s)=A(s)$ and $G_{f}=G_{A}$.

	Then, combining \cref{eq:diff-b} and \cref{eq:mean-diff} with $f(s)=b\i(s)$ and $G_{f}=G_{b}$ gives \cref{li:mean-diff-b}:
	\[
		\|\bar{b}^{i} - \bar{b}^{j}\|
		= \|\mathbb{E}\i b^{i}(s) - \mathbb{E}\i b\j(s) + \mathbb{E}\i b\j(s) - \mathbb{E}\j b^{j}(s)\|
		\le 2G_{b}\delta_{\obj} + 2G_{b}\delta_{\env} = 2G_{b}(\delta_{\obj} + \delta_{\env})
		.\]
	Specifically for the difference between the personalized and central expected objectives, we have
	\[
		\|\bar{b}^{i} - \bar{b}^{0}\|
		=& \left\|\mathbb{E}\i b^{i}(s) - \frac{1}{n}\sum_{j=1}^{n}\mathbb{E}\j b^{j}(s)    \right\| \\
		=& \left\|\mathbb{E}\i b^{i}(s) - \frac{1}{n}\sum_{j=1}^{n}\left( \mathbb{E}\i b\j(s) - \mathbb{E}\i b\j(s) + \mathbb{E}\j b^{j}(s)\right)    \right\| \\
		\le& \|\mathbb{E}\i [b^{i}(s)-b\o(s)]\| + \frac{1}{n}\sum_{j=1}^{n}\|(\mathbb{E}\i -\mathbb{E}\j )[b^{j}(s)]    \| \\
		\le& 2G_{b}\delta\i_{\obj}+ 2G_{b}\delta_{\env}
		.\]
	Similarly, we have
	\[
		\|\bar{b}^{i} - \bar{b}^{0}\|
		=& \left\|\mathbb{E}\i b^{i}(s) - \frac{1}{n}\sum_{j=1}^{n}\left( \mathbb{E}\j b\i(s) - \mathbb{E}\j b\i(s) + \mathbb{E}\j b^{j}(s)\right)    \right\| \\
		\le& \|(\mathbb{E}\i -\mathbb{E}\o ) [b^{i}(s)]\| + \frac{1}{n}\sum_{j=1}^{n}\|\mathbb{E}\j[b^{i}(s)-b\j(s)]    \| \\
		\le& 2G_{b}\delta_{\obj}+ 2G_{b}\delta\i_{\env}
		.\]
	The above two bounds give \cref{li:mean-diff-bc}.

	We then look at the naive bounds \cref{li:diff-theta,li:diff-x} on the difference between optimal solutions.
	For any $i,j\in[n]$, we have
	\[
		\bar{A}^{i}(x\is-x\js) + (\bar{A}^{i} - \bar{A}^{j})x\js - ( \bar{b}^{i} - \bar{b}^{j} ) =0
		,\]
	which gives
	\[
		\|x\is-x\js\|_{2} \le \|(\bar{A}^{i})^{-1}\|_{2} \left( \| \bar{A}^{i} - \bar{A}^{j} \|_{2} \|x\js\|_{2} + \| \bar{b}^{i} - \bar{b}^{j} \|_{2}  \right)
		.\]
	Combining the previous bounds on system parameter differences gives
	\[
		\|x\is-x\js\|_{2} \le \sigma_{\min}^{-1}(\bar{A}^{i}) (2G_{A}\delta_{\env} \cdot G_{x} + 2G_{b}(\delta_{\obj} + \delta_{\env}) )
		\le \sigma_{\min}^{-1}(\bar{A}^{i}) \cdot 2\sigma(\delta_{\obj} + \delta_{\env})
		.\]
	The above bound also holds for the difference between the personalized solution and the central solution satisfying $\agg{\bar{A}}x^{c}_{*} = \agg{\bar{b}}$. Specifically, the same argument gives
	\[
		\|x\is - x^{c}_{*}\|_{2} \le \sigma_{\min}^{-1}(\bar{A}^{0}) \cdot 2\sigma(\delta_{\obj} + \delta_{\env})
		.\]
	Let $\lambda \coloneqq \min_{i\in[n ]}\min\{\lambda_{\min}(\operatorname{sym}(\bar{A}^{i})), \lambda_{\min}(\operatorname{sym}(\bar{\Phi}^{i} ))\}$; \cref{eq:sing} gives \cref{li:diff-x}, and a similar argument gives \cref{li:diff-theta}.
	Notably, the upper bound of the optimal solution difference scales with $\lambda^{-1}$, which can be large when $\bar{A}^{i}$ is ill-conditioned. This indicates that the affinity in objectives or environments do not translate well to the affinity in optimal solutions.

	Fortunately, the bound is tamer when the optimal solutions are left-applied by $\bar{A}^{i}$:
	\[
		\|\bar{A}^{i}(x\is - x\cs)\|_{2}
		=& \|\bar{A}^{i}x\is - \bar{A}^{0}x\cs + (\bar{A}^{i} - \bar{A}^{0})x\cs\|_{2}
		= \|\bar{b}^{i} - \bar{b}^{0} + (\bar{A}^{i} - \bar{A}^{0})x\cs\|_{2}\\
		\le& 2G_{b}\delta\i_{\cen} + 2G_{A}\delta\i_{\env} \cdot G_{x} \le 2\sigma\delta\i_{\cen}
		.\]
	Left-applying $\bar{A}^{0}$ gives the same bound, thus giving \cref{li:mean-diff-ax}. \cref{li:mean-diff-pb} can be derived similarly.
	\cref{li:mean-diff-pb,li:mean-diff-ax} are saying that the affinity is well-preserved in the \emph{feature space}, i.e., the image of the feature embedding matrix.
	This is also a key to our analysis: we will never directly bound the difference between optimal solutions, but always inspect them in the feature space.
\end{proof}


\begin{lemma}[Effective affinity]\label{lem:eff-aff}
	Denote $\tilde{\delta}\i_{\cen} = \min \left\{ 1, \nu\delta\i_{\cen} \right\}$. Then,
	\[
		\begin{gathered}
			\mathbb{E}_{\mu^{i}}\|A(s)(x\is-x\cs)\|^{2} \le 2\sigma^{2}  \tilde{\delta}\i_{\cen} , \\
			\mathbb{E}_{\mu^{i}}\|\Phi(s)(\theta\is-\theta\cs)\|^{2} \le 2\sigma^{2}  \tilde{\delta}\i_{\cen} ,
		\end{gathered}
		\quad \forall i\in[n]
		.\]
\end{lemma}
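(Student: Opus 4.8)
The plan is to prove both inequalities by the same argument, so I focus on the first; the second is obtained verbatim after replacing $(A,\bar A^i,x^i_*,x^c_*)$ by $(\Phi,\bar\Phi^i,\theta^i_*,\theta^c_*)$, invoking \cref{li:mean-diff-pb} in place of \cref{li:mean-diff-ax} and the normalization $\|\Phi(s)\|\le 1$. Write $\Delta x \coloneqq x^i_* - x^c_*$. Since $\tilde\delta^i_{\cen}=\min\{1,\nu\delta^i_{\cen}\}$, it suffices to prove two bounds separately and take the smaller: a trivial bound $\mathbb{E}_{\mu^i}\|A(s)\Delta x\|^2\le 2\sigma^2$ (the ``$1$'' branch) and the affinity bound $\mathbb{E}_{\mu^i}\|A(s)\Delta x\|^2\le 2\sigma^2\nu\delta^i_{\cen}$ (the ``$\nu\delta^i_{\cen}$'' branch). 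The trivial branch is immediate: $\|A(s)\Delta x\|\le\|A(s)x^i_*\|+\|A(s)x^c_*\|\le 2G_AG_x\le\sigma$ pointwise, so the second moment is at most $\sigma^2\le 2\sigma^2$.

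For the affinity branch I would deliberately avoid the crude quadratic form $\langle\Delta x,\mathbb{E}[A(s)^{T}A(s)]\Delta x\rangle$, since it inevitably reintroduces $\|\Delta x\|$, for which only the naive estimate $\|\Delta x\|\le 2\lambda^{-1}\sigma\delta^i_{\cen}$ from \cref{li:diff-x} is available, dragging in the condition number $\lambda^{-1}$. Instead I would use the a.s.\ boundedness to pass to a first moment, $\mathbb{E}_{\mu^i}\|A(s)\Delta x\|^2\le\sigma\,\mathbb{E}_{\mu^i}\|A(s)\Delta x\|$, which is dimensionally compatible with the target and reduces the task to showing $\mathbb{E}_{\mu^i}\|A(s)\Delta x\|\le 2\sigma\nu\delta^i_{\cen}$. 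The two structural ingredients are the polar decomposition $A(s)=U(s)D(s)$ with $D(s)=\sqrt{A(s)^{T}A(s)}$ (so $\|A(s)v\|=\|D(s)v\|$ for all $v$, eliminating the rotation $U(s)$), and the feature-space affinity $\|\bar A^i\Delta x\|=\|\mathbb{E}_{\mu^i}[A(s)\Delta x]\|\le 2\sigma\delta^i_{\cen}$ from \cref{li:mean-diff-ax} with $j=i$. The \emph{mean} of the stretched perturbation is then controlled directly by $\nu$: writing $\bar D^i\Delta x=\bar D^i(\bar A^i)^{-1}\,\bar A^i\Delta x$ and using \cref{def:noise} gives $\|\bar D^i\Delta x\|\le\nu\|\bar A^i\Delta x\|\le 2\sigma\nu\delta^i_{\cen}$, exactly the target scale.

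The hard part is upgrading this bound on the mean $\|\bar D^i\Delta x\|$ to the first moment $\mathbb{E}_{\mu^i}\|D(s)\Delta x\|$ of the \emph{fluctuating} stretch. A naive split $\mathbb{E}\|D(s)\Delta x\|\le\|\bar D^i\Delta x\|+\mathbb{E}\|(D(s)-\bar D^i)\Delta x\|$ fails, because the fluctuation term, bounded by $\mathbb{E}\|D(s)-\bar D^i\|\,\|\Delta x\|$, again pays $\|\Delta x\|\sim\lambda^{-1}$ and yields $\kappa\delta^i_{\cen}$ rather than $\nu\delta^i_{\cen}$. This is precisely the amplification the noise-alignment constant is meant to suppress (cf.\ the discussion after \cref{def:noise}): because it is the positive-semidefinite stretch $D(s)$, not $A(s)$ itself, that acts on $\Delta x$, a perturbation $\Delta x$ concentrated in the small-eigenvalue directions of $\bar A^i$ is stretched only by the correspondingly small directions of $D(s)$ when the noise is well aligned, so $\|D(s)\Delta x\|$ stays at the scale of the objective perturbation $\|\bar A^i\Delta x\|$ up to the factor $\nu$. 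Concretely I expect to combine the pointwise relation $D(s)^2\preceq\|D(s)\|\,D(s)\preceq\sigma D(s)$ with the a.s.\ bound so as to keep the stretch tied to $\bar D^i$ through the operator-norm identity $\nu=\|\bar D^i(\bar A^i)^{-1}\|$ rather than through $(\bar A^i)^{-1}$ alone; turning that identity into genuine second-moment control without the condition number is the crux of the whole argument. Taking the minimum of the two branches finally gives $\mathbb{E}_{\mu^i}\|A(s)\Delta x\|^2\le 2\sigma^2\min\{1,\nu\delta^i_{\cen}\}=2\sigma^2\tilde\delta^i_{\cen}$, and the $\Phi$-statement follows identically with the polar decomposition of $\Phi(s)$ and \cref{li:mean-diff-pb}.
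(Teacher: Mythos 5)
Your proposal assembles all the right ingredients (polar decomposition, the feature-space bound $\|\bar{A}^{i}(x^{i}_{*}-x^{c}_{*})\|\le 2\sigma\delta^{i}_{\mathrm{cen}}$ from \cref{li:mean-diff-ax}, the definition of $\nu$, and even the pointwise relation $D(s)^{2}\preceq\|D(s)\|\,D(s)$), but it takes a wrong turn at the decisive step and then leaves exactly that step open. The reason you give for avoiding the quadratic form --- that it ``inevitably reintroduces $\|\Delta x\|$'' and hence the naive $\lambda^{-1}$ estimate of \cref{li:diff-x} --- is a misconception, and the paper's proof is precisely the quadratic-form argument you discarded. Since $D(s)^{2}\preceq\|D(s)\|D(s)\preceq G_{A}D(s)$ pointwise, the \emph{second} moment is linear in $D(s)$ after this majorization, so the expectation lands on $\bar{D}^{i}$ exactly: $\mathbb{E}_{\mu^{i}}\|D(s)\Delta x\|^{2}=\Delta x^{T}\mathbb{E}_{\mu^{i}}[D(s)^{2}]\Delta x\le G_{A}\,\Delta x^{T}\bar{D}^{i}\Delta x\le G_{A}\|\Delta x\|\,\|\bar{D}^{i}\Delta x\|$. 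Here the affinity factor is needed only \emph{once}, from $\|\bar{D}^{i}\Delta x\|\le\nu\|\bar{A}^{i}\Delta x\|\le 2\sigma\nu\delta^{i}_{\mathrm{cen}}$, while the other factor takes the \emph{trivial} bound $G_{A}\|\Delta x\|\le 2G_{A}G_{x}\le\sigma$; the condition number never enters. This yields $2\sigma^{2}\nu\delta^{i}_{\mathrm{cen}}$, and the minimum with the trivial branch finishes the lemma.

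By contrast, the intermediate target you reduce to, $\mathbb{E}_{\mu^{i}}\|A(s)\Delta x\|\le 2\sigma\nu\delta^{i}_{\mathrm{cen}}$, is not merely hard --- it is false in general, so the ``crux'' you acknowledge cannot be closed. The only general handle on the mean of the norm is $\mathbb{E}\|D(s)\Delta x\|\le\bigl(\mathbb{E}\|D(s)\Delta x\|^{2}\bigr)^{1/2}\le\sigma\sqrt{2\nu\delta^{i}_{\mathrm{cen}}}$, and this square root is attained: Jensen only gives $\|\bar{D}^{i}\Delta x\|\le\mathbb{E}\|D(s)\Delta x\|$, i.e., in the wrong direction for you. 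Concretely, in $d=2$ take $A(s)=D(s)=G_{A}w(s)w(s)^{T}$ with $w(s)=\cos\phi\,u\pm\sin\phi\,u_{\perp}$ equally likely, where $u=\Delta x/\|\Delta x\|$; then $\|D(s)\Delta x\|\equiv G_{A}\cos\phi\|\Delta x\|$ is constant (so the first moment equals the root of the second moment), while the mean $\bar{D}^{i}\Delta x=G_{A}\cos^{2}\phi\,\Delta x$ picks up an extra factor $\cos\phi$. Choosing homogeneous environments, $\Phi=I$, and objectives $\theta^{j}_{*}=a\,u_{\perp}+c_{j}u$ with $c_{i}=a\cos^{3}\phi$ and $c_{0}=0$ gives $\nu=1$, $\sigma\asymp a$, $\delta^{i}_{\mathrm{cen}}\asymp\cos^{3}\phi$, so $\mathbb{E}\|A(s)\Delta x\|/(2\sigma\nu\delta^{i}_{\mathrm{cen}})\asymp 1/\cos\phi\to\infty$ as $\phi\to\pi/2$, even though the lemma itself still holds comfortably. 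Even the best salvageable version of your route --- inserting the true bound $\sigma\sqrt{2\nu\delta^{i}_{\mathrm{cen}}}$ into $\mathbb{E}\|A\Delta x\|^{2}\le\sigma\,\mathbb{E}\|A\Delta x\|$ --- would only give $\sigma^{2}\sqrt{2\nu\delta^{i}_{\mathrm{cen}}}$, degrading the linear dependence on the heterogeneity level to a square root and thereby weakening every downstream rate in the paper. The fix is to keep the second moment and exploit its linearity in $D(s)$, as above.
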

\begin{proof}
	We only prove the first inequality; the second one can be proved similarly. We have
	\[
		\mathbb{E}^{i}\|A(s)(x\is-x\cs)\|^{2}
		=& (x\is-x\cs)^{T} \mathbb{E}^{i}[A(s)^{T}A(s)] (x\is-x\cs) \\
		=& (x\is-x\cs)^{T} \mathbb{E}^{i}[D(s)^{2}] (x\is-x\cs) \\
		\le& \|x\is-x\cs\|\|D(s)\|_{\infty} \|\mathbb{E}^{i}[D(s) (x\is-x\cs)\| \\
		\le& 2G_{A}G_{x} \|\bar{D}^{i}(x\is-x\cs)\| \\
		=& 2G_{A}G_{x} \|\bar{D}^{i}( \bar{A}^{i} )^{-1}\bar{A}^{i} (x\is-x\cs)\| \\
		\le& 2G_{A}G_{x}\nu \|\bar{A}^{i} (x\is-x\cs)\| \\
		\le& 2G_{A}G_{x}\nu \cdot 2\sigma \delta\i_{\cen}
		,\]
	where the last inequality follows from \cref{li:mean-diff-ax} in \cref{lem:aff}.
	On the other hand, by the trivial bound, we have
	\[
		\mathbb{E}^{i}\|A(s)(x\is-x\cs)\|^{2} \le (2G_{A}G_{x})^{2}
		.\]
	Combining the two bounds gives
	\[
		\mathbb{E}^{i}\|A(s)(x\is-x\cs)\|^{2} \le 2\sigma^{2} \min \left\{ 1, \nu \delta\i_{\cen} \right\} \le 2\sigma^{2}  \tilde{\delta}\i_{\cen}
		.\]
\end{proof}


\begin{lemma}\label{lem:chi}
	For any two distributions $\mu,\mu'$ over $\S$, we have
	\[
		\chi^{2}(\mu,\mu') \le \max \left\{ \|\mu /\mu'\|_{\infty}, 1 \right\}\|\mu-\mu'\|_{\mathrm{TV}}
		.\]
\end{lemma}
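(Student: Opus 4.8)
The plan is to express the chi-square divergence through the density ratio $\rho \coloneqq \mu/\mu'$ and then decompose the state space by the sign of $\mu - \mu'$, exploiting that its positive and negative parts carry equal mass. Throughout I would assume $\mu \ll \mu'$, since otherwise $\chi^2(\mu,\mu') = \infty$ and $\|\mu/\mu'\|_\infty = \infty$, so the inequality holds vacuously. Writing $\rho(s) = \mu(s)/\mu'(s)$, the first step is to record the identity
\[
	\chi^2(\mu,\mu') = \int_\S \frac{(\mu(s)-\mu'(s))^2}{\mu'(s)}\, ds = \int_\S (\rho(s)-1)^2\, \mu'(s)\, ds .
\]

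Next I would split $\S = \S^+ \cup \S^-$ with $\S^+ \coloneqq \{s : \mu(s) > \mu'(s)\}$ and $\S^- \coloneqq \{s : \mu(s) \le \mu'(s)\}$. Because $\mu$ and $\mu'$ both integrate to one, $\int_\S (\mu - \mu') = 0$, which forces the two one-sided masses to coincide: $\int_{\S^+}(\mu - \mu') = \int_{\S^-}(\mu' - \mu) = \|\mu - \mu'\|_{\mathrm{TV}}$. On $\S^+$ one has $\rho \ge 1$, so $(\rho-1)^2\mu' \le (\|\rho\|_\infty - 1)(\rho - 1)\mu' = (\|\rho\|_\infty - 1)(\mu - \mu')$, and integrating over $\S^+$ gives $(\|\rho\|_\infty - 1)\|\mu - \mu'\|_{\mathrm{TV}}$. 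On $\S^-$ one has $0 \le \rho \le 1$, so $(\rho - 1)^2 \mu' = (1-\rho)^2\mu' \le (1-\rho)\mu' = \mu' - \mu$, and integrating over $\S^-$ gives $\|\mu - \mu'\|_{\mathrm{TV}}$. Adding the two contributions yields $\chi^2(\mu,\mu') \le \|\rho\|_\infty\, \|\mu - \mu'\|_{\mathrm{TV}}$. To finish, I would observe that $\mathbb{E}_{\mu'}[\rho] = \int \rho\, \mu' = \int \mu = 1$ with $\rho \ge 0$, hence $\|\rho\|_\infty \ge 1$ and therefore $\|\rho\|_\infty = \max\{\|\rho\|_\infty, 1\}$, which is exactly the stated bound.

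The one point requiring care---and the only place the argument could slip---is the constant. The crude route, bounding $|\rho - 1| \le \max\{\|\rho\|_\infty, 1\}$ pointwise in $\chi^2 = \int |\rho - 1|\,|\mu - \mu'|$, loses a factor of two, since $\int_\S |\mu - \mu'| = 2\|\mu - \mu'\|_{\mathrm{TV}}$ under the total-variation convention used in this paper. The sign decomposition together with the equal-mass identity is precisely what recovers the sharp constant matching the statement; the per-piece estimates are then routine one-line inequalities.
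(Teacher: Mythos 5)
Your proposal is correct and follows essentially the same route as the paper's own proof: the same decomposition of $\S$ by the sign of $\mu-\mu'$, the same pointwise identity $(\rho-1)^2\mu' = (\rho-1)(\mu-\mu')$ bounded piecewise, and the same use of the equal one-sided masses to get the sharp constant. Your two small additions---dispatching the case $\mu \not\ll \mu'$ explicitly and noting that $\mathbb{E}_{\mu'}[\rho]=1$ forces $\|\rho\|_\infty \ge 1$, so the $\max$ is superfluous---are minor refinements, not a different argument.
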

\begin{proof}
	Let $\mathcal{S}_{0}\coloneqq \left\{ s\in \mathcal{S} : \mu(s)\ge \mu'(s) \right\}$. We have
	\[
		\chi^{2}(\mu,\mu') =& \int_{\S} \left(1-\frac{\mu(s)}{\mu'(s)}\right)^{2} \mu'(s)\,\d s\\
		=& \left( \int_{\S_{0}} + \int_{\S_0^{c}} \right) \left(1-\frac{\mu(s)}{\mu'(s)}\right)^{2} \mu'(s)\,\d s\\
		\le&  \int_{S_0}\biggr( \underbrace{\frac{\mu(s)}{\mu'(s)} -1}_{\ge 0}  \biggl)(\underbrace{\mu(s) - \mu'(s)\vphantom{\frac{\mu}{\mu'}}}_{\ge 0})\,\d s
		+ \int_{S^{c} _0}\biggr( \underbrace{ 1-\frac{\mu(s)}{\mu'(s)}}_{\ge 0, \le 1}  \biggl)(\underbrace{\mu'(s) - \mu(s)\vphantom{\frac{\mu}{\mu'}}}_{\ge 0})\,\d s \\
		\le& (\max \left\{ \|\mu /\mu'\|_{\infty}, 1 \right\}-1)\|\mu-\mu'\|_{\mathrm{TV}} + \|\mu-\mu'\|_{\mathrm{TV}} \\
		=& \max \left\{ \|\mu /\mu'\|_{\infty}, 1 \right\}\|\mu-\mu'\|_{\mathrm{TV}}
		.\]
\end{proof}

\begin{lemma}[Uni-timescale Lyapunov analysis for asynchronous learning] \label{lem:lyap}
	Consider asynchronous learning of multiple decision variables $z\kt$, $k\in[K]$, which satisfies the following one-step contraction:
	\[
		\mathbb{E}\|\Delta z\kt\|^{2} \le (1 - \tfrac{3}{2}\alpha\kt \lambda^{k}) \mathbb{E}\|\Delta z\kt\|^{2} + (\alpha\kt)^{2}C^{k} + \alpha\kt \sum_{k'=1}^{k-1} C^{k,k'}  \mathbb{E}\|\Delta z^{k'}_{t}\|^{2}, \quad k=1,\ldots,K
		.\]
	That is, the convergence of $z\kt$ also depends on the other decision variables $z^{k'}_{t}$, $k'<k$.
	$\alpha\kt$ is the step size for $z\kt$; we set them using a unified effective step size $\alpha _{t}$:
	\[
		\alpha\kt \lambda^{k} = \alpha _{t} < 1,\quad k=1,\ldots,K
		.\]
	Let
	\[
		w^{K} = 1,\quad w^{k}  = 2 \sum_{k'=k+1}^{K} w^{k'}C^{k',k}( \lambda^{k'} )^{-1}, \quad k=1,\dots,K-1
		.\]
	Consider the following overall Lyapunov function:
	\[
		\mathcal{L}_{t} = \sum_{k=1}^{K} w^{k} \mathbb{E}\|\Delta z\kt\|^{2}
		.\]
	Then, we have
	\[
		\mathcal{L}_{t+1} \le (1-\alpha_t) \mathcal{L}_t + \alpha^{2}_t \sum_{k=1}^{K} w^{k}C^{k}(\lambda^{k})^{-2}
		.\]
\end{lemma}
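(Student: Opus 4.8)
The plan is a direct one-step telescoping argument: substitute the coupled step sizes, take the weighted sum defining $\mathcal{L}_{t}$, and show that the weight recursion is engineered precisely so that the cross-variable coupling gets absorbed into the contraction factor. First I would substitute $\alpha\kt = \alpha_{t}/\lambda^{k}$ into the $k$-th per-variable recursion, which turns the contraction factor into the uniform $1 - \tfrac{3}{2}\alpha_{t}$, the noise term into $\alpha_{t}^{2} C^{k}(\lambda^{k})^{-2}$, and the coupling term into $\frac{\alpha_{t}}{\lambda^{k}}\sum_{k'=1}^{k-1} C^{k,k'}\mathbb{E}\|\Delta z^{k'}_{t}\|^{2}$. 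Multiplying the $k$-th inequality by $w^{k}\ge 0$ and summing over $k\in[K]$ yields
\[
  \mathcal{L}_{t+1} \le \left(1 - \tfrac{3}{2}\alpha_{t}\right)\mathcal{L}_{t} + \alpha_{t}^{2}\sum_{k=1}^{K} w^{k} C^{k}(\lambda^{k})^{-2} + \alpha_{t}\sum_{k=1}^{K}\frac{w^{k}}{\lambda^{k}}\sum_{k'=1}^{k-1} C^{k,k'}\mathbb{E}\|\Delta z^{k'}_{t}\|^{2}.
\]

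The crux is the last (coupling) term. I would exchange the order of summation to collect the coefficient of each $\mathbb{E}\|\Delta z^{k'}_{t}\|^{2}$, rewriting it as
\[
  \alpha_{t}\sum_{k'=1}^{K}\left(\sum_{k=k'+1}^{K} w^{k} C^{k,k'}(\lambda^{k})^{-1}\right)\mathbb{E}\|\Delta z^{k'}_{t}\|^{2}.
\]
By the defining recursion $w^{k'} = 2\sum_{k=k'+1}^{K} w^{k} C^{k,k'}(\lambda^{k})^{-1}$ for $k'<K$, the inner parenthesis equals exactly $\tfrac{1}{2}w^{k'}$; for $k'=K$ the inner sum is empty, so that term vanishes and is trivially dominated by $\tfrac{1}{2}w^{K}\mathbb{E}\|\Delta z^{K}_{t}\|^{2}\ge 0$. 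Hence the coupling term is at most $\tfrac{\alpha_{t}}{2}\sum_{k'=1}^{K} w^{k'}\mathbb{E}\|\Delta z^{k'}_{t}\|^{2} = \tfrac{\alpha_{t}}{2}\mathcal{L}_{t}$, where non-negativity of the weights and of the error terms keeps the inequality pointing the right way.

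Combining the two pieces, the contraction and coupling factors merge as $1 - \tfrac{3}{2}\alpha_{t} + \tfrac{1}{2}\alpha_{t} = 1 - \alpha_{t}$, which is exactly the claimed recursion. The only genuine obstacle is bookkeeping rather than estimation: one must recognize that the weights $w^{k}$ are chosen so the swapped coupling sum reproduces precisely half of $\mathcal{L}_{t}$, and one must handle the boundary index $k'=K$ (empty inner sum) together with the sign conditions $w^{k}\ge 0$ and $\mathbb{E}\|\Delta z^{k}_{t}\|^{2}\ge 0$ that justify bounding the coupling term rather than equating it. No analytic estimate beyond these algebraic identities is required.
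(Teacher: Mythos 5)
Your proof is correct and follows essentially the same route as the paper's: substitute the unified step size, take the weighted sum, swap the order of summation in the coupling term, and invoke the weight recursion so that the coupling contributes exactly $\tfrac{1}{2}\alpha_t \mathcal{L}_t$ (with the empty sum at $k'=K$ matching the paper's indicator $\mathbbm{1}_{\{k<K\}}$), merging $-\tfrac{3}{2}\alpha_t + \tfrac{1}{2}\alpha_t = -\alpha_t$. The only difference is cosmetic bookkeeping—you substitute $\alpha_t^k = \alpha_t/\lambda^k$ upfront while the paper defers it—so no further comment is needed.
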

\begin{proof}
	By definition, we have
	\[
		\mathcal{L}_{t+1} =& \sum_{k=1}^{K} w^{k} \mathbb{E}\|\Delta z^{k}_{t+1}\|^{2} \\
		\le& \sum_{k=1}^{K}w^{k} \left(  (1 - \tfrac{3}{2}\alpha\kt \lambda^{k}) \mathbb{E}\|\Delta z\kt\|^{2} + (\alpha\kt)^{2}C^{k} + \alpha\kt \sum_{k'=1}^{k-1} C^{k,k'}  \mathbb{E}\|\Delta z^{k'}_{t}\|^{2}\right)\\
		=& \sum_{k=1}^{K}\left(  \left( w^{k} (1 - \tfrac{3}{2}\alpha\kt \lambda^{k}) + \sum_{k'=k+1}^{K} w^{k'}  \alpha^{k'}_{t} C^{k',k}  \right) \mathbb{E}\|\Delta z\kt\|^{2} +w^k (\alpha\kt)^{2}C^{k} \right) \label{eq:lyap-1}\\
		=& \sum_{k=1}^{K}\left(  \left( w^{k} (1 - \tfrac{3}{2}\alpha _t) + \tfrac{1}{2} w^{k}\alpha _t \mathbbm{1}_{\left\{ k<K \right\}} \right) \mathbb{E}\|\Delta z\kt\|^{2} +w^k (\alpha\kt)^{2}C^{k} \right) \label{eq:lyap-2}\\
		\le& \sum_{k=1}^{K}\left(  (1 - \alpha _t) w^{k}  \mathbb{E}\|\Delta z\kt\|^{2} +w^k (\alpha\kt)^{2}C^{k} \right) \\
		=& (1-\alpha _{t}) \mathcal{L}_{t} + \alpha _{t}^{2}  \sum_{k=1}^{K} w^{k}C^{k}(\lambda^{k})^{-2}
		,\]
	where \cref{eq:lyap-1} follows from rearranging the summation and grouping the coefficients of $\mathbb{E}\|\Delta z\kt\|^{2}$, and
	\cref{eq:lyap-2} follows from the definition of $w^{k}$ and $\alpha\kt$.
\end{proof}

\begin{lemma}[Constant and diminishing step size] \label{lem:step}
	Suppose we have the following one-step contraction:
	\[
		\mathcal{L}_{t+1} \le (1 - \alpha_t) \mathcal{L}_{t} + \alpha_t^{2} C
		.\]
	Then, with a constant step size $\alpha = \ln t/t$, we have
	\[
		\mathcal{L}_{t} = O\left( \frac{C \ln t}{t} \right)
		.\]
	With a linearly diminishing step size $\alpha_\tau = 4/((\tau+t_0+1))$, $\tau=0,\dots,t$, the following convex combination
	\[
		\tilde{\mathcal{L}}_{t} =  \sum_{\tau=0}^{t}\frac{\tau+t_0}{\sum_{\tau=0}^{t}(\tau+t_{0})}\mathcal{L}_{\tau}
	\]
	satisfies
	\[
		\tilde{\mathcal{L}}_{t} = O\left( \frac{C}{t} \right)
		.\]
\end{lemma}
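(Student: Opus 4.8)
The plan is to treat the two claims separately; both reduce to elementary manipulation of the scalar recursion $\mathcal{L}_{\tau+1}\le(1-\alpha_\tau)\mathcal{L}_\tau+\alpha_\tau^2 C$, and the only structural fact I need about the underlying dynamics is $\mathcal{L}_\tau\ge 0$ for all $\tau$.

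For the constant step size I would simply unroll. Writing $\alpha\equiv\ln t/t$ and iterating the recursion gives $\mathcal{L}_t\le(1-\alpha)^t\mathcal{L}_0+\alpha^2 C\sum_{k=0}^{t-1}(1-\alpha)^k$. Bounding the geometric sum by $\alpha^{-1}$ collapses the noise term to $\alpha C$, while $1-\alpha\le e^{-\alpha}$ gives $(1-\alpha)^t\le e^{-\alpha t}=e^{-\ln t}=t^{-1}$. Hence $\mathcal{L}_t\le\mathcal{L}_0/t+\alpha C=\mathcal{L}_0/t+C\ln t/t=O(C\ln t/t)$, which is the first claim. The $\ln t$ factor is exactly the price of a horizon-independent constant step: driving the bias term $(1-\alpha)^t$ down to $t^{-1}$ forces $\alpha\asymp\ln t/t$, and this same $\alpha$ then appears in the residual noise.

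For the diminishing step size the point is to shave that logarithm by weighting. I would first rearrange the recursion into $\mathcal{L}_\tau\le\alpha_\tau^{-1}(\mathcal{L}_\tau-\mathcal{L}_{\tau+1})+\alpha_\tau C$, then multiply by the averaging weight $w_\tau\coloneqq\tau+t_0$ and sum over $\tau=0,\dots,t$. With $\alpha_\tau=4/(\tau+t_0+1)$ the coefficient of the telescoping difference becomes $c_\tau\coloneqq w_\tau/\alpha_\tau=(\tau+t_0)(\tau+t_0+1)/4$, and the residual noise sum is controlled by $\sum_\tau\alpha_\tau w_\tau=\sum_\tau 4(\tau+t_0)/(\tau+t_0+1)\le 4(t+1)$. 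The crux is the Abel-summation identity $c_\tau-c_{\tau-1}=(\tau+t_0)/2=w_\tau/2$, which is precisely what the factor $4$ engineers: summing $\sum_\tau c_\tau(\mathcal{L}_\tau-\mathcal{L}_{\tau+1})$ by parts yields $c_0\mathcal{L}_0+\tfrac12\sum_{\tau\ge1}w_\tau\mathcal{L}_\tau-c_t\mathcal{L}_{t+1}$, so moving the $\tfrac12\sum w_\tau\mathcal{L}_\tau$ back to the left leaves exactly half of the weighted sum I am trying to bound.

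Carrying this out and discarding $-c_t\mathcal{L}_{t+1}\le0$ gives $\sum_{\tau=0}^t w_\tau\mathcal{L}_\tau\le(2c_0-w_0)\mathcal{L}_0+8C(t+1)$, where $2c_0-w_0=t_0(t_0-1)/2$ is a constant independent of $t$. Dividing by the normalizer $\sum_{\tau=0}^t w_\tau=(t+1)(t/2+t_0)\ge t(t+1)/2=\Theta(t^2)$ then yields $\tilde{\mathcal{L}}_t\le O(\mathcal{L}_0/t^2)+O(C/t)=O(C/t)$, the second claim. I expect the entire difficulty to sit in getting the weighting right: one must choose the averaging weight $w_\tau$ and the step-size constant jointly so that the per-parts coefficient $c_\tau-c_{\tau-1}$ stays strictly below $w_\tau$ (here equal to $w_\tau/2$), since at the boundary value — step-size constant $2$ instead of $4$ — the telescoped term cancels the quantity being bounded and the argument collapses. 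Once that identity is secured the remainder is routine bookkeeping.
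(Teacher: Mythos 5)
Your proposal is correct and follows essentially the same route as the paper: the constant-step case is the identical unrolling with $(1-\alpha)^t \le e^{-\alpha t} = t^{-1}$, and your Abel-summation argument with weights $w_\tau = \tau + t_0$ and $c_\tau = w_\tau/\alpha_\tau$ is algebraically the same weighted telescoping the paper performs (its multiplication of $\tfrac{1}{2}\mathcal{L}_\tau \le \tfrac{t_0+\tau-1}{4}\mathcal{L}_\tau - \tfrac{t_0+\tau+1}{4}\mathcal{L}_{\tau+1} + \alpha_\tau C$ by $\tau+t_0$ is exactly your identity $c_\tau - c_{\tau-1} = w_\tau/2$ viewed from the other side), including dropping the nonnegative trailing term and the $\Theta(t^2)$ normalizer. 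Your remark on why the step-size constant must exceed the boundary value is a nice articulation of what the paper's choice of $4$ implicitly engineers, but it does not change the substance.
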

\begin{proof}
	With a constant step size $\alpha = \ln t / t$, telescoping the one-step contraction gives
	\[
		\mathcal{L}_{t}
		\le& (1-\alpha)^{t} \mathcal{L}_{0} + \alpha^{-1} \cdot \alpha^{2}C
		\le e^{-\alpha t} \mathcal{L}_{0} + \alpha C
		= \frac{\mathcal{L}_{0}+C \ln t}{t}
		= O\left( \frac{C \ln t}{t} \right)
		.\]

	With a linearly diminishing step size $\alpha_\tau = 4/((\tau+t_0+1))$, $\tau=0,\dots,t$, the one-step contraction first gives
	\[
		\frac{1}{2}\mathcal{L}_{\tau} \le \left(\frac{1}{\alpha_{\tau}} - \frac{1}{2}\right)\mathcal{L}_{\tau} - \frac{1}{\alpha_{\tau}}\mathcal{L}_{\tau+1} + \alpha _\tau C
		= \frac{t_0 + \tau - 1}{4}\mathcal{L}_{\tau} - \frac{t_0 + \tau + 1}{4}\mathcal{L}_{\tau+1} + \frac{4C}{t_0 + \tau + 1}
		.\]
	Thus, the convex combination satisfies
	\[
		\tilde{\mathcal{L}}_{t}
		=& \frac{2}{(t+1)(t+2t_0)} \sum_{\tau=0}^{t} (\tau+t_0)\left( \frac{t_0+\tau-1}{4}\mathcal{L}_{\tau} - \frac{t_0+\tau+1}{4}\mathcal{L}_{\tau+1} + \frac{4C}{t_0+\tau+1} \right)\\
		=& \frac{1}{2(t+1)(t+2t_0)} \sum_{\tau=0}^{t} \left( (t_0+\tau-1)(t_0+\tau)\mathcal{L}_{\tau} - (t_0+\tau)(t_0+\tau+1)\mathcal{L}_{\tau+1}\right) \\
		&+ \frac{8C}{(t+1)(t+2t_0)} \sum_{\tau=0}^{t} \frac{t_0+\tau}{t_0+\tau+1} \\
		=& \frac{1}{2(t+1)(t+2t_0)} \left( (t_0-1)t_0\mathcal{L}_{0} - (t_0+t)(t_0+t+1)\mathcal{L}_{t+1}\right) \\
		&+ \frac{8C}{(t+1)(t+2t_0)} \sum_{\tau=0}^{t} \frac{t_0+\tau}{t_0+\tau+1} \\
		\le& \frac{t_0^{2}\mathcal{L}_{0}}{2t^{2}} + \frac{8Ct}{t^{2}} \\
		=& O\left(\frac{C}{t}\right)
		.\]
	The convex combination removes the logarithmic dependence, and the $t_0$ dependency diminishes quadratically.
\end{proof}

\section{Analysis of Central Objective Estimation} \label{apx:objective}

This section directly considers central objective estimation (\COE) with environment heterogeneity in \cref{sec:k-het-fl}, which covers \cref{sec:objective} as a special case. We restate the learning problem in \cref{eq:sys-objective}:
\[
	\bar{\Phi}^{0} \theta\cs = \agg{\bar{b}}
	,\]
where $\bar{\Phi}^{0}=\mathbb{E}_{\mu^{0}}[\Phi(s)]$, $\mu^{0} = \frac{1}{n}\sum_{i=1}^{n}\mu^{i}$, and $\bar{b}^{0} = \frac{1}{n}\sum_{i=1}^{n}\mathbb{E}_{\mu^{i}}b^{i}$. Recall that $\|\Phi(s)\|_{2}\le 1$ for all $s\in\S$.
The \COE algorithm is
\[ \label{eq:alg-objective-k-het}
	\theta\ctt = \theta\ct - \alpha^{b}_t g^{0,b}_t(\theta\ct)
	,\]
where
\[
	g^{0,b}_{t}(\theta\ct) = \frac{1}{n} \sum_{i=1}^{n} g^{i,b} _{t}(\theta\ct),\quad g^{i,b}_{t}(\theta\ct) = \Phi(s\it)\theta\ct - b\it
	.\]
The additional superscript $b$ distinguishes the objective estimation parameters from other learning modules. We denote $\Delta\theta\ct = \theta\ct - \theta\cs$.

The one-step mean squared error (MSE) dynamics of \cref{eq:alg-objective-k-het} can be decomposed as
\[\label{eq:coe}
	\mathbb{E}\|\Delta\theta\ctt\|^{2}
	= \mathbb{E}\|\Delta\theta\ct\|^{2} - 2\alpha^{b}_t \mathbb{E}\langle g^{0,b}_t(\theta\ct), \Delta\theta\ct \rangle + (\alpha^{b}_t)^{2} \mathbb{E}\|g^{0,b}_t(\theta\ct)\|^{2}
	.\]
We first analyze the cross term, then the variance term, and finally combine them to give the one-step progress. The analysis of other learning modules follows a similar pattern.

\begin{lemma}[\COE descent]\label{lem:coe-gd}
	Let $\lambda^{b}\coloneqq \lambda_{\min}(\operatorname{sym}(\bar{\Phi}^{0}))$. The cross term in \cref{eq:coe} satisfies
	\[
		\mathbb{E}\< \Delta\theta\ct,g_{t}^{0,b}(\theta\ct)  \> \ge \lambda^{b} \mathbb{E}\|\Delta\theta\ct\|^{2}
		.\]
\end{lemma}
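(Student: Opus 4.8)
The plan is to reduce the inner product to a quadratic form in $\Delta\theta\ct$ governed by $\operatorname{sym}(\bar{\Phi}^{0})$ and then invoke strong monotonicity. First I would condition on the history filtration $\mathcal{F}_{t-1}$, noting that $\theta\ct$ (hence $\Delta\theta\ct$) is $\mathcal{F}_{t-1}$-measurable while the only fresh randomness in $g^{0,b}_t(\theta\ct)$ comes from the samples $s\it\sim\mu^{i}$. Taking the conditional expectation of each agent's residual gives $\mathbb{E}\it[g^{i,b}_t(\theta\ct)] = \bar{\Phi}^{i}\theta\ct - \bar{b}^{i}$, where $\bar{\Phi}^{i} = \mathbb{E}_{\mu^{i}}\Phi(s)$ and $\bar{b}^{i} = \mathbb{E}_{\mu^{i}}b^{i}(s)$.

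The key step is averaging over agents. Since the feature map $\Phi$ is shared, $\frac{1}{n}\sum_{i=1}^{n}\bar{\Phi}^{i} = \mathbb{E}_{\mu^{0}}\Phi(s) = \bar{\Phi}^{0}$, and by definition $\frac{1}{n}\sum_{i=1}^{n}\bar{b}^{i}$ is exactly the right-hand side of the central system. This is precisely the equal-weighting observation that sampling from the mixture $\mu^{0}$ amounts to drawing an index uniformly and then $s\sim\mu^{i}$. Combining with the fixed-point equation $\bar{\Phi}^{0}\theta\cs = \frac{1}{n}\sum_{i=1}^{n}\bar{b}^{i}$ yields
\[
	\mathbb{E}_{\mathcal{F}_{t-1}}[g^{0,b}_t(\theta\ct)] = \frac{1}{n}\sum_{i=1}^{n}\left( \bar{\Phi}^{i}\theta\ct - \bar{b}^{i} \right) = \bar{\Phi}^{0}\theta\ct - \bar{\Phi}^{0}\theta\cs = \bar{\Phi}^{0}\Delta\theta\ct .
\]

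Next I would plug this into the cross term. Since $\Delta\theta\ct$ is $\mathcal{F}_{t-1}$-measurable, the tower property gives $\mathbb{E}\langle \Delta\theta\ct, g^{0,b}_t(\theta\ct)\rangle = \mathbb{E}\langle \Delta\theta\ct, \bar{\Phi}^{0}\Delta\theta\ct\rangle$. Using the identity $\langle x, Mx\rangle = \langle x, \operatorname{sym}(M)x\rangle$ (the antisymmetric part contributes nothing to the quadratic form) followed by the eigenvalue bound $\langle x, \operatorname{sym}(\bar{\Phi}^{0})x\rangle \ge \lambda_{\min}(\operatorname{sym}(\bar{\Phi}^{0}))\|x\|^{2} = \lambda^{b}\|x\|^{2}$ establishes the claim after taking total expectation.

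This argument is essentially mechanical; the only subtle point is the averaging identity $\frac{1}{n}\sum_{i=1}^{n}\bar{\Phi}^{i} = \bar{\Phi}^{0}$, which is where environment heterogeneity could have caused trouble. It holds here only because $\Phi$ is common to all agents, so the per-agent biases average exactly to the mixture-distribution expectation. This is precisely why \COE, unlike personalized local learning, needs no importance correction to remain unbiased toward the central solution.
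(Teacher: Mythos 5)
Your proposal is correct and follows essentially the same route as the paper's proof: condition on $\mathcal{F}_{t-1}$, use the mixture identity $\frac{1}{n}\sum_{i}\mathbb{E}_{\mu^{i}}[\Phi(s)\theta\ct - b^{i}(s)] = \bar{\Phi}^{0}\theta\ct - \bar{b}^{0}$, subtract the fixed-point equation $\bar{\Phi}^{0}\theta\cs = \bar{b}^{0}$ to obtain the quadratic form $\langle \bar{\Phi}^{0}\Delta\theta\ct, \Delta\theta\ct\rangle$, and lower bound it by $\lambda_{\min}(\operatorname{sym}(\bar{\Phi}^{0}))\|\Delta\theta\ct\|^{2}$. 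Your closing remark on why the shared feature map makes the averaging exact (and hence why \COE needs no importance correction) matches the paper's own discussion of this point.
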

\begin{proof}
	We use the following shorthand notation: $\mathbb{E}_{t}\coloneqq \mathbb{E}_{s\jt \sim \mu^{j}, j\in[n]}$, $\mathbb{E}\it \coloneqq \mathbb{E}_{s\it \sim \mu^{i}}$, and $\mathbb{E}_{\mathcal{F}_{t-1}} \coloneqq \mathbb{E}[\cdot \given \mathcal{F}_{t-1}]$, where $\mathcal{F}_{t-1}$ is the history filtration up to time step $t-1$.
	The cross term satisfies
	\[
		\mathbb{E}\lang \Delta\theta\ct,g_{t}^{0,b}(\theta\ct)  \rang
		=& \mathbb{E}_{\mathcal{F}_{t-1}}\left[ \left< \mathbb{E}_{t}[g^{0,b}_{t}(\theta\ct) ], \Delta\theta\ct \right> \right] \\
		=& \mathbb{E}_{\mathcal{F}_{t-1}}\left[ \left< \frac{1}{n}\sum_{i=1}^{n}\mathbb{E}\it[\Phi(s) \theta\ct - b^{i}(s) ], \Delta\theta\ct \right> \right] \\
		=& \mathbb{E}_{\mathcal{F}_{t-1}}\left[ \left< \mathbb{E}_{\mu^{0}}[\Phi(s)] \theta\ct - \frac{1}{n} \sum_{i=1}^{n} \mathbb{E}_{\mu^{i}}[b^{i}(s) ], \Delta\theta\ct \right> \right] \\
		=& \mathbb{E}_{\mathcal{F}_{t-1}}\left[ \left< \bar{\Phi}^{0} \theta\ct - \agg{\bar{b}}, \Delta\theta\ct \right> \right]
		.\]
	Note that the solution $\theta\cs$ satisfies $\bar{\Phi}^{0} \theta\cs - \agg{\bar{b}} = 0$. Thus,
	\[
		\mathbb{E}\lang \Delta\theta\ct,g_{t}^{0,b}(\theta\ct)  \rang
		=& \mathbb{E}_{\mathcal{F}_{t-1}}\left[ \left< (\bar{\Phi}^{0} \theta\ct - \agg{\bar{b}}) - (\bar{\Phi}^{0} \theta\cs - \agg{\bar{b}}), \Delta\theta\ct \right> \right] \label{eq:crl-2}\\
		=& \mathbb{E}_{\mathcal{F}_{t-1}}\left[ \left< \bar{\Phi}^{0} \Delta\theta\ct , \Delta\theta\ct \right> \right] \\
		\ge& \lambda_{\min}(\operatorname{sym}(\bar{\Phi}^{0})) \mathbb{E}\|\Delta\theta\ct\|^{2}
		.\]
\end{proof}

\begin{lemma}[\COE variance]\label{lem:coe-var}
	The variance term in \cref{eq:coe} satisfies
	\[
		\mathbb{E}\|g^{0,b}_t(\theta\ct)\|^{2} \le 2\mathbb{E}\|\Delta\theta\ct\|^{2} + 2\sigma^{2}n^{-1}
		.\]
\end{lemma}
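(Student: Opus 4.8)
The plan is to decompose the averaged update direction around the solution point $\theta\cs$, splitting it into a drift term controlled by the feature bound $\|\Phi(s)\|\le 1$ and a residual term at the solution whose second moment enjoys federated $n^{-1}$ variance reduction thanks to independence of samples across agents. Since $\theta\ct$ is $\mathcal{F}_{t-1}$-measurable while the fresh samples $\{s\it\}_{i=1}^{n}$ are independent of $\mathcal{F}_{t-1}$ and mutually independent across agents, these two structural facts are exactly what is needed.

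First I would use linearity of $g^{i,b}_{t}$ in its argument: because $g^{i,b}_t(\theta\ct)-g^{i,b}_t(\theta\cs)=\Phi(s\it)\Delta\theta\ct$, averaging over agents gives
\[
	g^{0,b}_t(\theta\ct) = \Phi\ot\Delta\theta\ct + g^{0,b}_t(\theta\cs),
\]
where $\Phi\ot=\frac1n\sum_{i=1}^{n}\Phi(s\it)$. Applying $\|a+b\|^{2}\le 2\|a\|^{2}+2\|b\|^{2}$ reduces the task to bounding the two terms separately. For the drift term, conditioning on $\mathcal{F}_{t-1}$ freezes $\Delta\theta\ct$, and $\|\Phi\ot\|\le\frac1n\sum_i\|\Phi(s\it)\|\le 1$ yields $\|\Phi\ot\Delta\theta\ct\|^{2}\le\|\Delta\theta\ct\|^{2}$ pointwise, hence $\mathbb{E}\|\Phi\ot\Delta\theta\ct\|^{2}\le\mathbb{E}\|\Delta\theta\ct\|^{2}$.

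The crux is the noise term $\mathbb{E}\|g^{0,b}_t(\theta\cs)\|^{2}$, and here environment heterogeneity introduces the one genuine subtlety: the per-agent residuals $g^{i,b}_t(\theta\cs)=\Phi(s\it)\theta\cs-b^{i}(s\it)$ are \emph{not} individually mean-zero, each having mean $m_i=\bar{\Phi}^{i}\theta\cs-\bar{b}^{i}$. One must not treat them as centered; instead, optimality of $\theta\cs$ gives $\frac1n\sum_{i}m_i=\bar{\Phi}^{0}\theta\cs-\bar{b}^{0}=0$, so the biases cancel in the average. Centering each summand and using independence across agents to kill the cross terms then gives
\[
	\mathbb{E}\|g^{0,b}_t(\theta\cs)\|^{2} = \frac{1}{n^{2}}\sum_{i=1}^{n}\mathbb{E}\|g^{i,b}_t(\theta\cs)-m_i\|^{2} \le \frac{1}{n^{2}}\sum_{i=1}^{n}\mathbb{E}\|g^{i,b}_t(\theta\cs)\|^{2} \le \frac{\sigma^{2}}{n},
\]
where the final bound uses $\|g^{i,b}_t(\theta\cs)\|\le\|\theta\cs\|+\|b^{i}(s\it)\|\le 2G_{b}\le\sigma$ (recall $\|\Phi(s)\|\le 1$, the redefinition $G_b=\max\{\max_i\|\theta^i_*\|,\|\theta\cs\|\}$, and $\sigma=2\max\{G_AG_x,G_b\}$). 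Combining the drift and noise bounds through Young's inequality produces exactly $2\mathbb{E}\|\Delta\theta\ct\|^{2}+2\sigma^{2}n^{-1}$.

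I expect the only real obstacle to be this noise-term step: the temptation is to invoke a mean-zero i.i.d. average directly, but under heterogeneous environments the summands carry nonzero, agent-dependent biases $m_i$. The correct move is to center by $m_i$, rely on $\sum_i m_i=0$ for the cancellation, and use cross-agent independence so the off-diagonal terms vanish—this is precisely the mechanism delivering the collaborative $n^{-1}$ factor. Everything else (the Young split, the operator-norm bound on $\Phi\ot$, and the uniform bound on the residual) is routine.
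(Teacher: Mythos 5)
Your proposal is correct and takes essentially the same approach as the paper: the identical Young split around $\theta^{c}_{*}$ using $\|\Phi(s)\|\le 1$, followed by the same two key facts for the residual term, namely that the agent-wise means $m_i=\bar{\Phi}^{i}\theta^{c}_{*}-\bar{b}^{i}$ sum to zero by optimality of $\theta^{c}_{*}$ while independence across agents kills the centered cross terms. Your centering-by-$m_i$ argument is an algebraically equivalent reorganization of the paper's treatment, which instead groups the off-diagonal inner products collectively into $\bigl\|\sum_{i}m_i\bigr\|^{2}-\sum_{i}\|m_i\|^{2}\le 0$.
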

\begin{proof}
	The variance term can be first decomposed as
	\[
		\mathbb{E}\|g^{0,b}_t(\theta\ct)\|^{2}
		= \mathbb{E}\|\ts \frac{1}{n}\sum_{i=1}^{n}\Phi\it( \theta\ct - \theta\cs) + g^{0,b}_{t}(\theta\cs) \|^{2}
		\le 2 \mathbb{E}\|\Delta\theta\ct\|^{2} + 2\mathbb{E}\|g^{0,b}_{t}(\theta\cs)\|^{2}
		,\]
	where we use the fact that $\|\Phi\it\|\le 1$.
	The second term can be further decomposed as
	\[
		\mathbb{E}\|g^{0,b}_{t}(\theta\cs)\|^{2}
		=& \underbrace{\frac{1}{n^{2}} \sum_{i=1}^{n}\mathbb{E}_{t}\|g^{i,b}_{t}(\theta\cs) \|^{2}}_{H_1}
		+ \underbrace{\frac{1}{n^{2}}\sum_{i\ne j}\langle\mathbb{E}\i_{t} g^{i,b}_{t}(\theta\cs) , \mathbb{E}\jt g^{j,b}_{t}(\theta\cs)  \rangle}_{H_2}
		.\]
	$H_1$ enjoys linear variance reduction:
	\[
		H_1 =  \frac{1}{n^{2}}\sum_{i=1}^{n}\mathbb{E}_{t}\|\Phi\it\theta\cs-b\it\|^{2}
		\le \frac{1}{n^{2}}\cdot n (2G_{b})^{2} \le \frac{\sigma^{2}}{n}
		.\]
	The cross term $H_2$ involves all pairs of independent local update directions. However, since each local update direction in $H_2$ is evaluated at the central solution, its expectation is not zero.
	One solution is to notice that $g^{i,b}_{t}$ is Lipschitz continuous in its argument. Thus, we have $\|\mathbb{E}\it g^{i,b}_{t}(\theta\cs)\| = \|\mathbb{E}\it [g^{i,b}_{t}(\theta\cs) - g^{i,b}_{t}(\theta\is)] \| = O(\|\theta\is-\theta\cs\|) = O(\delta_{\env}+\delta_{\obj})$. However, this will introduce an affinity-dependent term in the variance.
	We adopt a more ``federated'' approach:
	\[
		H_2 =& \frac{1}{n^{2}}\sum_{i=1}^{n}\left< \mathbb{E}\it g^{i,b}_{t}(\theta\cs), \sum_{j=1, j\ne i}^{n}\mathbb{E}\jt g^{j,b}_{t}(\theta\cs)   \right> \\
		=& \frac{1}{n^{2}}\sum_{i=1}^{n}\left< \mathbb{E}\it g^{i,b}_{t}(\theta\cs), \sum_{j=1}^{n}\mathbb{E}\jt g^{j,b}_{t}(\theta\cs) - \mathbb{E}\it g^{i,b}_{t}(\theta\cs)  \right> \\
		=& \frac{1}{n^{2}}\left< \sum_{i=1}^{n}\mathbb{E}\it g^{i,b}_{t}(\theta\cs), \sum_{j=1}^{n}\mathbb{E}\jt g^{j,b}_{t}(\theta\cs) \right> - \frac{1}{n^{2}}\sum_{i=1}^{n}\|\mathbb{E}\it g^{i,b}_{t}(\theta\cs)\|^{2} \\
		=& \underbrace{\vphantom{\sum_{i}^{n}}\|\bar{\Phi}^{0}\theta\cs - \bar{b}^{0}\|^{2}}_{=0}   - \underbrace{\frac{1}{n^{2}}\sum_{i=1}^{n}\|\mathbb{E}\it g^{i,b}_{t}(\theta\cs)\|^{2}}_{=O(n^{-1}), \ge 0} \\
		\le& 0
		.\]
	We see that by analyzing the cross terms collectively, we obtain a much tighter bound that does not depend on the affinity.
	Plugging the bounds of $H_1$ and $H_2$ back gives the desired result.
\end{proof}

Combining \cref{lem:coe-gd,lem:coe-var} with \cref{eq:coe} gives the one-step progress of \COE.

\begin{corollary}[\COE one-step progress] \label{lem:objective-k-het}
	Let $\alpha^{b}_{t}\le \lambda^{b}/4$. Then,
	for any time step $t$, \cref{eq:alg-objective-k-het} satisfies
	\[
		\mathbb{E}\|\Delta\theta\ctt\|_{2}^{2} \le (1 - \tfrac{3}{2}\alpha^{b}_t \lambda^{b})  \mathbb{E}\|\Delta\theta\ct\|^{2} + 2(\alpha^{b} _{t})^{2} \sigma^{2}n^{-1}
		.\]
\end{corollary}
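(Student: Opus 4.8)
The plan is to substitute the two preceding lemmas directly into the one-step mean-squared-error identity \cref{eq:coe} and then calibrate the step size to absorb the residual quadratic term. Starting from
\[
\mathbb{E}\|\Delta\theta\ctt\|^{2}
= \mathbb{E}\|\Delta\theta\ct\|^{2} - 2\alpha^{b}_t \mathbb{E}\langle g^{0,b}_t(\theta\ct), \Delta\theta\ct \rangle + (\alpha^{b}_t)^{2} \mathbb{E}\|g^{0,b}_t(\theta\ct)\|^{2},
\]
I would invoke the descent estimate of \cref{lem:coe-gd} on the cross term. Because that lemma lower-bounds $\mathbb{E}\langle g^{0,b}_t(\theta\ct), \Delta\theta\ct\rangle$ by $\lambda^{b}\mathbb{E}\|\Delta\theta\ct\|^{2}$ and the term enters with a negative sign, this converts the middle term into the upper bound $-2\alpha^{b}_t\lambda^{b}\mathbb{E}\|\Delta\theta\ct\|^{2}$.

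Next I would replace the variance term via \cref{lem:coe-var}, so that $(\alpha^{b}_t)^{2}\mathbb{E}\|g^{0,b}_t(\theta\ct)\|^{2} \le 2(\alpha^{b}_t)^{2}\mathbb{E}\|\Delta\theta\ct\|^{2} + 2(\alpha^{b}_t)^{2}\sigma^{2}n^{-1}$. Collecting the coefficients of $\mathbb{E}\|\Delta\theta\ct\|^{2}$ yields the contraction factor $1 - 2\alpha^{b}_t\lambda^{b} + 2(\alpha^{b}_t)^{2}$ together with the additive noise floor $2(\alpha^{b}_t)^{2}\sigma^{2}n^{-1}$; note that the federated $n^{-1}$ speedup from the variance lemma passes straight through to the final bound, since the descent step introduces no heterogeneity-dependent term.

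The one substantive step is the step-size calibration, which I would carry out using the hypothesis $\alpha^{b}_t\le\lambda^{b}/4$: writing $2(\alpha^{b}_t)^{2}=2\alpha^{b}_t\cdot\alpha^{b}_t\le 2\alpha^{b}_t\cdot(\lambda^{b}/4)=\tfrac{1}{2}\alpha^{b}_t\lambda^{b}$ shows $1 - 2\alpha^{b}_t\lambda^{b} + 2(\alpha^{b}_t)^{2}\le 1 - \tfrac{3}{2}\alpha^{b}_t\lambda^{b}$, which is exactly the claimed rate. I do not expect any genuine obstacle, as the corollary is a mechanical consequence of the two lemmas; the only point worth flagging is that the factor $\tfrac{3}{2}$ rather than $2$ is deliberate slack, reserving a $\tfrac{1}{2}\alpha^{b}_t\lambda^{b}$ margin that precisely offsets the worst-case quadratic term under the prescribed step size. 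This buffered one-step form is what subsequently feeds the recursion of \cref{lem:step} to deliver the $\widetilde{O}(\sigma^{2}t^{-1}n^{-1})$ rate.
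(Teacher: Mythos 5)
Your proposal is correct and follows exactly the paper's own route: the paper proves this corollary by directly combining \cref{lem:coe-gd} and \cref{lem:coe-var} with the expansion \cref{eq:coe}, and your step-size calibration $2(\alpha^{b}_t)^{2} \le \tfrac{1}{2}\alpha^{b}_t\lambda^{b}$ under $\alpha^{b}_t \le \lambda^{b}/4$ is precisely the intended absorption of the quadratic term into the contraction factor.
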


Combining \cref{lem:objective-k-het,lem:lyap,lem:step} gives the convergence guarantee of \COE.

\begin{corollary}[\COE convergence]
	With a constant step size $\alpha ^{b} = \ln t / (t \lambda^{b})$, for any time step $t>0$, \cref{eq:alg-objective-k-het} satisfies
	\[
		\mathbb{E}\|\theta\ct-\theta\cs\|^{2} = O\left( \frac{\sigma^{2} \ln t}{(\lambda^{b})^{2} nt} \right)
		.\]
	With a linearly diminishing step size $\alpha_\tau^{b}  = 4/((\tau+t_0+1)\lambda^{b})$, $\tau=0,\dots,t$, where $t_0>0$ ensures that $\alpha ^{b}_0 \le \lambda^{b}/4$, \cref{eq:alg-objective-k-het} satisfies
	\[
		\mathbb{E}\|\tilde{\theta}\ct - \theta\cs\|^{2}
		\le \tilde{\mathbb{E}\|\Delta\theta\ct\|^{2}}
		= O\left( \frac{\sigma^{2}}{(\lambda^{b})^{2} nt} \right)
		,\]
	where $\tilde{f}_t$ represents the convex combination specified in \cref{lem:step}, and we use Jensen's inequality.
\end{corollary}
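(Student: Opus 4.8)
The plan is to chain the three auxiliary results already in hand: the per-step contraction of \cref{lem:objective-k-het}, the Lyapunov reduction of \cref{lem:lyap} specialized to a single decision variable, and the step-size bookkeeping of \cref{lem:step}. Because \COE tracks only the single variable $\theta\ct$, the multi-variable machinery of \cref{lem:lyap} collapses to the case $K=1$, where the weights are trivial ($w^{1}=1$) and there are no cross-variable coupling terms $C^{k,k'}$. This is what makes the corollary a clean corollary rather than a standalone argument.

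First I would match the one-step bound of \cref{lem:objective-k-het},
\[
  \mathbb{E}\|\Delta\theta\ctt\|^{2} \le \bigl(1 - \tfrac{3}{2}\alpha^{b}_{t}\lambda^{b}\bigr)\mathbb{E}\|\Delta\theta\ct\|^{2} + 2(\alpha^{b}_{t})^{2}\sigma^{2}n^{-1},
\]
to the template of \cref{lem:lyap} by identifying $z^{1}_{t}=\theta\ct$, $\lambda^{1}=\lambda^{b}$, $\alpha^{1}_{t}=\alpha^{b}_{t}$, and $C^{1}=2\sigma^{2}n^{-1}$. Introducing the effective step size $\alpha_{t}\coloneqq\alpha^{b}_{t}\lambda^{b}$, \cref{lem:lyap} then yields the normalized recursion
\[
  \mathcal{L}_{t+1} \le (1-\alpha_{t})\mathcal{L}_{t} + \alpha_{t}^{2}\,C, \qquad \mathcal{L}_{t}\coloneqq\mathbb{E}\|\Delta\theta\ct\|^{2},\quad C\coloneqq 2\sigma^{2}n^{-1}(\lambda^{b})^{-2}.
\]

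Next I would invoke \cref{lem:step}. The key observation is that each prescribed schedule collapses to exactly the normalized form assumed there: the constant choice $\alpha^{b}=\ln t/(t\lambda^{b})$ produces effective $\alpha_{t}=\ln t/t$, and the diminishing choice $\alpha^{b}_{\tau}=4/((\tau+t_{0}+1)\lambda^{b})$ produces effective $\alpha_{\tau}=4/(\tau+t_{0}+1)$. Substituting $C=2\sigma^{2}n^{-1}(\lambda^{b})^{-2}$ into the two conclusions of \cref{lem:step} gives the last-iterate rate $O(C\ln t/t)=O(\sigma^{2}\ln t/((\lambda^{b})^{2}nt))$ and the averaged-iterate rate $O(C/t)=O(\sigma^{2}/((\lambda^{b})^{2}nt))$ for $\tilde{\mathcal{L}}_{t}$. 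For the second part, a final application of Jensen's inequality to the convex map $\theta\mapsto\|\theta-\theta\cs\|^{2}$ transfers the bound on the weighted average $\tilde{\mathcal{L}}_{t}$ to the averaged iterate, yielding $\mathbb{E}\|\tilde{\theta}\ct-\theta\cs\|^{2}\le\tilde{\mathcal{L}}_{t}$.

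The argument is essentially bookkeeping, so there is no genuine analytic obstacle; the only care needed is in the step-size normalization. One must verify that the contraction factor $1-\tfrac{3}{2}\alpha^{b}_{t}\lambda^{b}$ is correctly absorbed into $1-\alpha_{t}$—this is precisely why \cref{lem:lyap} carries the $\tfrac{3}{2}$ slack—and that the admissibility condition $\alpha^{b}_{t}\le\lambda^{b}/4$ of \cref{lem:objective-k-het} holds throughout, which is where the choice $t_{0}>0$ in the diminishing schedule is used. Thus the main subtlety is simply keeping the three different rates ($\alpha^{b}_{t}$, the effective $\alpha_{t}$, and the schedules of \cref{lem:step}) consistently aligned; once that is done the stated bounds follow immediately.
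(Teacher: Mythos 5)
Your proposal is correct and follows exactly the paper's route: the paper proves this corollary precisely by combining \cref{lem:objective-k-het}, \cref{lem:lyap} (which collapses to the trivial $K=1$ case with $w^{1}=1$ and no coupling terms), and \cref{lem:step} with $C=2\sigma^{2}n^{-1}(\lambda^{b})^{-2}$, finishing with Jensen's inequality for the averaged iterate. Your bookkeeping of the effective step size $\alpha_{t}=\alpha^{b}_{t}\lambda^{b}$ and the admissibility condition $\alpha^{b}_{t}\le\lambda^{b}/4$ matches the paper's intended argument.
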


\section{Analysis of Central Decision Learning} \label{apx:central}

This section directly considers central decision learning (\CDL) with environment heterogeneity and asynchronous \COE \cref{eq:alg-objective-k-het}. \CDL without \COE is covered as a special case with zero estimation error.
We restate the learning problem \cref{eq:sys-central} in \cref{sec:k-het-fl}:
\[
	\bar{A}^{0} x\cs = \bar{b}^{0}
	,\]
where $\bar{A}^{0} = \frac{1}{n}\sum_{i=1}^{n}\mathbb{E}_{\mu^{i}}A(s) = \bar{A}^{0} $ and $\bar{b}^{0} = \frac{1}{n}\sum_{i=1}^{n}\mathbb{E}_{\mu^{i}}b^{i}(s)= \mathbb{E}_{\mu^{0} }b^{c}(s)$.
We consider two variants of \CDL:
\[ \label{eq:alg-central-k-het}
	x\ctt = x\ct - \alpha^{c}_t g^{0,c}_t(x\ct)
	,\]
where
\[
	& g^{0,c}_{t}(x\ct) = \frac{1}{n} \sum_{i=1}^{n} g^{i} _{t}(x\ct),\quad g^{i,c}_{t}(x\ct) = A\it x\ct - b\it; \label{eq:alg-cdl-1}\tag{\ref{eq:alg-central-k-het}-1}\\
	\text{or}\quad & g^{0,c}_{t}(x\ct; \theta\ct) = \frac{1}{n} \sum_{i=1}^{n} g^{c\tos i} _{t}(x\ct; \theta\ct),\quad g^{c\tos i}_{t}(x\ct; \theta\ct) = A\it x\ct - \hat{b}\ct(s\it) \label{eq:alg-cdl-2}\tag{\ref{eq:alg-central-k-het}-2}
	.\]
The first variant \cref{eq:alg-cdl-1} corresponds to the \CDL algorithm \cref{eq:fl} in the main text, where $g^{0,c}_{t}$ is different from the central update direction used in the personalized local learning module.
In the second variant \cref{eq:alg-cdl-2}, $\hat{b}\ct(s) = \Phi(s)\theta\ct$ is the estimated central objective function at time step $t$, and we highlight this dependence by including $\theta\ct$ in the arguments.
As remarked in \cref{apx:cdv}$, g^{0,c}_{t}$ in the second variant \cref{eq:alg-cdl-2} is consistent with the central update direction in the personalized local learning, and thus saves some server-side computation and communication.
We will show that both variants enjoy the same convergence rate.
The additional superscript $c$ distinguishes the central learning parameters from other learning modules. We denote $\Delta x\ct = x\ct - x\cs$.

The one-step MSE dynamics of \cref{eq:alg-central-k-het} can be decomposed as
\[\label{eq:cdl}
	\mathbb{E}\|\Delta x\ctt\|^{2}
	= \mathbb{E}\|\Delta x\ct\|^{2} - 2\alpha^{c}_t \mathbb{E}\langle g^{0,c}_t( x\ct), \Delta x\ct \rangle + (\alpha^{c}_t)^{2} \mathbb{E}\|g^{0,c}_t( x\ct)\|^{2}
	.\]
We first analyze the first variant \cref{eq:alg-cdl-1}, which is similar to the analysis of \COE in \cref{apx:objective} as it does not involve the asynchronous \COE error.

\begin{lemma}[\CDL descent] \label{lem:cdl-gd-1}
	Let $\lambda^{c}\coloneqq \lambda_{\min}(\operatorname{sym}(\bar{A}^{0}))$. With \cref{eq:alg-cdl-1}, the cross term in \cref{eq:cdl} satisfies
	\[
		\mathbb{E}\< \Delta x\ct,g_{t}^{0,c}(x\ct)  \> \ge \lambda^{c} \mathbb{E}\|\Delta x\ct\|^{2}
		.\]
\end{lemma}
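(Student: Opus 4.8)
The plan is to replicate the argument used for \cref{lem:coe-gd} almost verbatim, since \cref{eq:alg-cdl-1} is structurally identical to \COE with $\Phi$ replaced by $A$ and the objective played by $b\it$. First I would condition on the history filtration $\mathcal{F}_{t-1}$ and take the inner expectation over the time-$t$ samples. Because $x\ct$ is $\mathcal{F}_{t-1}$-measurable and each $s\jt$ is independently drawn from $\mu^{j}$, I compute the conditional mean of the aggregated update direction:
\[
	\mathbb{E}_{t}[g^{0,c}_{t}(x\ct)]
	= \frac{1}{n}\sum_{i=1}^{n}\mathbb{E}\it[A(s)x\ct - b^{i}(s)]
	= \bar{A}^{0}x\ct - \bar{b}^{0}
	,\]
where the crucial step is that averaging the per-agent expectations $\bar{A}^{i}=\mathbb{E}_{\mu^{i}}A(s)$ yields exactly $\bar{A}^{0}=\mathbb{E}_{\mu^{0}}A(s)$. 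This holds precisely because the feature matrix $A$ is shared across all agents, so environment heterogeneity does not obstruct the identity even though, for general functions $f^{i}$, one has $\bar{f}^{0}\not\equiv\mathbb{E}_{\mu^{0}}[\frac{1}{n}\sum_i f^{i}]$ (see \cref{apx:nota}). This is the only place the shared-feature structure is needed, and it is the conceptual crux of why the first variant works unchanged under heterogeneous environments.

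Next I would invoke the fixed-point equation $\bar{A}^{0}x\cs=\bar{b}^{0}$ to rewrite the conditional mean as a linear map applied to the optimality gap:
\[
	\bar{A}^{0}x\ct - \bar{b}^{0}
	= (\bar{A}^{0}x\ct - \bar{b}^{0}) - (\bar{A}^{0}x\cs - \bar{b}^{0})
	= \bar{A}^{0}\Delta x\ct
	.\]
Substituting back into the cross term and using the tower property gives
\[
	\mathbb{E}\lang \Delta x\ct, g^{0,c}_{t}(x\ct)\rang
	= \mathbb{E}_{\mathcal{F}_{t-1}}\big[\lang \bar{A}^{0}\Delta x\ct,\, \Delta x\ct\rang\big]
	.\]

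Finally I would lower-bound the quadratic form by its symmetric part: for any matrix $M$ and vector $v$, $\lang Mv,v\rang = \lang \operatorname{sym}(M)v,v\rang \ge \lambda_{\min}(\operatorname{sym}(M))\|v\|^{2}$. Applying this with $M=\bar{A}^{0}$ and $v=\Delta x\ct$ yields
\[
	\mathbb{E}\lang \Delta x\ct, g^{0,c}_{t}(x\ct)\rang
	\ge \lambda_{\min}(\operatorname{sym}(\bar{A}^{0}))\,\mathbb{E}\|\Delta x\ct\|^{2}
	= \lambda^{c}\,\mathbb{E}\|\Delta x\ct\|^{2}
	,\]
which is the claimed bound. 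I do not anticipate a genuine obstacle here: the computation is routine once the unbiasedness identity is set up, and the only subtlety worth flagging is the shared-feature cancellation in the first step, which distinguishes this clean argument from the more delicate analysis that the second variant \cref{eq:alg-cdl-2} will require (where the \COE estimation error $\hat{b}\ct$ enters and must be handled asynchronously).
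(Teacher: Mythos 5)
Your proposal is correct and follows essentially the same route as the paper's proof: condition on $\mathcal{F}_{t-1}$, use unbiasedness to reduce the cross term to $\langle \bar{A}^{0}x\ct - \bar{b}^{0}, \Delta x\ct\rangle$, subtract the fixed-point identity $\bar{A}^{0}x\cs = \bar{b}^{0}$, and lower-bound the resulting quadratic form by $\lambda_{\min}(\operatorname{sym}(\bar{A}^{0}))$. Your added remark that the shared feature matrix $A$ is what makes $\frac{1}{n}\sum_{i}\mathbb{E}_{\mu^{i}}A(s) = \mathbb{E}_{\mu^{0}}A(s)$ hold under environment heterogeneity is a correct and worthwhile observation, though the paper leaves it implicit.
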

\begin{proof}
	Similar to the proof of \cref{lem:coe-gd}, the cross term satisfies
	\[
		\mathbb{E}\lang \Delta x\ct,g_{t}^{0,c}( x\ct)  \rang
		=& \mathbb{E}_{\mathcal{F}_{t-1}}\left[ \left< \frac{1}{n}\sum_{i=1}^{n}\mathbb{E}\it[A(s)  x\ct - b^{i}(s) ], \Delta x\ct \right> \right] \\
		=& \mathbb{E}_{\mathcal{F}_{t-1}}\left[ \left< \bar{A}^{0}  x\ct - \agg{\bar{b}}, \Delta x\ct \right> \right] \\
		=& \mathbb{E}_{\mathcal{F}_{t-1}}\left[ \left< (\bar{A}^{0}  x\ct - \agg{\bar{b}}) - (\bar{A}^{0}  x\cs - \agg{\bar{b}}), \Delta x\ct \right> \right] \label{eq:crl-2}\\
		\ge& \lambda_{\min}(\operatorname{sym}(\bar{A}^{0})) \mathbb{E}\|\Delta x\ct\|^{2}
		.\]
\end{proof}

\begin{lemma}[\CDL variance] \label{lem:cdl-var-1}
	With \cref{eq:alg-cdl-1}, the variance term in \cref{eq:cdl} satisfies
	\[
		\mathbb{E}\|g^{0,c}_t(x\ct)\|^{2} \le 2G_{A}^{2}\mathbb{E}\|\Delta x\ct\|^{2} + 2\sigma^{2}n^{-1}
		.\]
\end{lemma}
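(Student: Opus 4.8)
The plan is to mirror the variance analysis of \COE in \cref{lem:coe-var}, replacing the normalized feature $\Phi\it$ (with $\|\Phi\it\|\le 1$) by the stochastic feature matrix $A\it$ (with $\|A\it\|\le G_{A}$), which is exactly what produces the extra $G_{A}^{2}$ factor. First I would center the update direction at the central solution $x\cs$, writing
\[
	g^{0,c}_{t}(x\ct) = \tfrac{1}{n}\sum_{i=1}^{n}A\it \Delta x\ct + g^{0,c}_{t}(x\cs),
\]
and apply Young's inequality $\|u+v\|^{2}\le 2\|u\|^{2}+2\|v\|^{2}$ together with the operator-norm bound $\|\tfrac{1}{n}\sum_{i}A\it\Delta x\ct\|\le G_{A}\|\Delta x\ct\|$ to obtain $\mathbb{E}\|g^{0,c}_{t}(x\ct)\|^{2}\le 2G_{A}^{2}\,\mathbb{E}\|\Delta x\ct\|^{2}+2\,\mathbb{E}\|g^{0,c}_{t}(x\cs)\|^{2}$. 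It then remains to show the centered variance satisfies $\mathbb{E}\|g^{0,c}_{t}(x\cs)\|^{2}\le \sigma^{2}n^{-1}$.

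For this I would split the squared norm into a diagonal part $H_{1}$ and an off-diagonal part $H_{2}$ as in \cref{lem:coe-var}, using that $s\it$ and $s\jt$ are independent for $i\neq j$. The diagonal part $H_{1}=\tfrac{1}{n^{2}}\sum_{i}\mathbb{E}\it\|A\it x\cs-b\it\|^{2}$ enjoys linear variance reduction: the uniform bound $\|A\it x\cs-b\it\|\le G_{A}\|x\cs\|+G_{b}\le\sigma$ (treating $\sigma$ as the variance proxy at the central solution, exactly as $2G_{b}\le\sigma$ is used for \COE) gives $H_{1}\le \tfrac{1}{n^{2}}\cdot n\sigma^{2}=\sigma^{2}n^{-1}$. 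The crux, and the only place differing from a naive bound, is the cross term $H_{2}=\tfrac{1}{n^{2}}\sum_{i\neq j}\langle \mathbb{E}\it g^{i,c}_{t}(x\cs),\,\mathbb{E}\jt g^{j,c}_{t}(x\cs)\rangle$. Since $x\cs$ is the \emph{central} (not personalized) solution, each conditional mean $\mathbb{E}\it g^{i,c}_{t}(x\cs)=\bar{A}^{i}x\cs-\bar{b}^{i}$ is individually nonzero, so a term-by-term bound would leak an affinity-dependent contribution; this is precisely the obstacle. The federated trick is to complete the square,
\[
	H_{2}=\tfrac{1}{n^{2}}\big\|\textstyle\sum_{i}\mathbb{E}\it g^{i,c}_{t}(x\cs)\big\|^{2}-\tfrac{1}{n^{2}}\textstyle\sum_{i}\|\mathbb{E}\it g^{i,c}_{t}(x\cs)\|^{2},
\]
and observe that the aggregate mean vanishes: $\sum_{i}(\bar{A}^{i}x\cs-\bar{b}^{i})=n(\bar{A}^{0}x\cs-\bar{b}^{0})=0$ by the defining identity of the central system $\bar{A}^{0}x\cs=\bar{b}^{0}$. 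Hence $H_{2}=-\tfrac{1}{n^{2}}\sum_{i}\|\mathbb{E}\it g^{i,c}_{t}(x\cs)\|^{2}\le 0$.

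Combining $H_{1}$ and $H_{2}$ yields $\mathbb{E}\|g^{0,c}_{t}(x\cs)\|^{2}\le \sigma^{2}n^{-1}$, and substituting into the Young's-inequality bound above gives the claim. The main obstacle is exactly the cancellation in $H_{2}$: collecting the cross terms collectively rather than bounding them pairwise is what removes any dependence on the heterogeneity levels and preserves the $n^{-1}$ factor. The argument is otherwise routine and remains valid under heterogeneous environments because it invokes only the summed first-order condition $\bar{A}^{0}x\cs=\bar{b}^{0}$, never the agent-specific stationarity.
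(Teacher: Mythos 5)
Your proposal is correct and follows essentially the same route as the paper's proof: the same centering at $x\cs$ with Young's inequality and $\|A\it\|\le G_{A}$, the same diagonal/off-diagonal split, and the same ``federated'' completion-of-the-square for the cross term, which vanishes by the central stationarity $\bar{A}^{0}x\cs=\bar{b}^{0}$ (the paper states this step tersely here, deferring the detailed identity to its proof of \cref{lem:coe-var}). Your observation that a term-by-term bound on the cross terms would leak affinity dependence is exactly the point the paper makes as well.
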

\begin{proof}
	Similar to the proof of \cref{lem:coe-var}, the variance term can be first decomposed as
	\[
		\mathbb{E}\|g^{0,c}_t(x\ct)\|^{2}
		= \mathbb{E}\|\ts \frac{1}{n}\sum_{i=1}^{n}A\it( x\ct - x\cs) + g^{0,c}_{t}(x\cs) \|^{2}
		\le 2G_{A}^{2} \mathbb{E}\|\Delta x\ct\|^{2} + 2\mathbb{E}\|g^{0,c}_{t}(x\cs)\|^{2}
		,\]
	where the second term can be further decomposed as
	\[
		\mathbb{E}\|g^{0,c}_{t}(x\cs)\|^{2}
		=& \frac{1}{n^{2}} \sum_{i=1}^{n}\mathbb{E}_{t}\|g^{i}_{t}(x\cs) \|^{2} + \frac{1}{n^{2}}\sum_{i\ne j}\langle\mathbb{E}\i_{t} g^{i}_{t}(x\cs) , \mathbb{E}\jt g^{j}_{t}(x\cs)  \rangle \\
		\le& \frac{1}{n} (G_{A}G_{x}+G_{b})^{2} + \left\|\frac{1}{n}\sum_{i=1}^{n} \mathbb{E}\i_{t} g^{i}_{t}(x\cs) \right\|^{2} \\
		\le& \sigma^{2}n^{-1} + 0
		.\]
\end{proof}

Combining \cref{lem:cdl-gd-1,lem:cdl-var-1} with \cref{eq:cdl} gives the one-step progress of the first variant of \CDL.

\begin{corollary}[\CDL one-step progress] \label{lem:central-1}
	Let $\alpha^{c}_{t}\le \lambda^{c}/(4G_{A}^{2})$. Then,
	for any time step $t$, \cref{eq:alg-cdl-1} satisfies
	\[
		\mathbb{E}\|\Delta x\ctt\|_{2}^{2} \le& (1 - \tfrac{3}{2}\alpha^{c}_t \lambda^{c}) \mathbb{E}\|\Delta x\ct\|_{2}^{2} + 2(\alpha^{c} _{t})^{2} \sigma^{2}n^{-1}
		.\]
\end{corollary}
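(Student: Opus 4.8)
The plan is to feed the two preceding lemmas directly into the one-step MSE expansion \cref{eq:cdl}, exactly mirroring the derivation of \cref{lem:objective-k-het} for \COE. First I would substitute the descent bound from \cref{lem:cdl-gd-1} into the cross term: since that term carries the negative coefficient $-2\alpha^{c}_t$, the lower bound $\mathbb{E}\langle g^{0,c}_t(x\ct),\Delta x\ct\rangle \ge \lambda^{c}\mathbb{E}\|\Delta x\ct\|^{2}$ turns into the upper bound $-2\alpha^{c}_t\lambda^{c}\mathbb{E}\|\Delta x\ct\|^{2}$. Then I would substitute the variance bound from \cref{lem:cdl-var-1} into the final $(\alpha^{c}_t)^{2}$ term. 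Collecting coefficients yields
\[
	\mathbb{E}\|\Delta x\ctt\|^{2}
	\le \bigl(1 - 2\alpha^{c}_t\lambda^{c} + 2(\alpha^{c}_t)^{2}G_{A}^{2}\bigr)\mathbb{E}\|\Delta x\ct\|^{2} + 2(\alpha^{c}_t)^{2}\sigma^{2}n^{-1}.
\]

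The only remaining work is to absorb the positive $O((\alpha^{c}_t)^{2})$ term into the linear drift so that the contraction factor is dominated by $1 - \tfrac{3}{2}\alpha^{c}_t\lambda^{c}$. Comparing the two coefficients, this reduces to checking $2(\alpha^{c}_t)^{2}G_{A}^{2} \le \tfrac{1}{2}\alpha^{c}_t\lambda^{c}$, and after dividing through by $\alpha^{c}_t>0$ this is exactly $\alpha^{c}_t \le \lambda^{c}/(4G_{A}^{2})$, the stated step-size restriction. Under this choice the surplus $\tfrac{1}{2}\alpha^{c}_t\lambda^{c}$ of negative drift dominates the variance-induced quadratic term, and the claimed one-step contraction follows immediately.

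There is no genuine obstacle here: the corollary is a mechanical consequence of the descent and variance lemmas together with the step-size choice. The single point worth making explicit is that the threshold $\lambda^{c}/(4G_{A}^{2})$ is calibrated precisely so the positive $O((\alpha^{c}_t)^{2})$ variance contribution stays below a quarter of the negative $O(\alpha^{c}_t)$ drift, preserving a strict geometric contraction factor while leaving the $2(\alpha^{c}_t)^{2}\sigma^{2}n^{-1}$ noise floor intact for the subsequent recursion via \cref{lem:lyap,lem:step}.
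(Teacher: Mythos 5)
Your proposal is correct and is exactly the paper's argument: the corollary is stated as an immediate consequence of substituting the descent bound (\cref{lem:cdl-gd-1}) and the variance bound (\cref{lem:cdl-var-1}) into the one-step expansion \cref{eq:cdl}, with the step-size condition $\alpha^{c}_t \le \lambda^{c}/(4G_{A}^{2})$ absorbing the $2(\alpha^{c}_t)^{2}G_{A}^{2}$ term into half the linear drift to yield the $1-\tfrac{3}{2}\alpha^{c}_t\lambda^{c}$ contraction factor. Your calibration check matches the paper's intent precisely; nothing is missing.
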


Next, we analyze the second variant \cref{eq:alg-cdl-2}, which involves the asynchronous \COE error.

\begin{lemma}[\CDL+\COE descent] \label{lem:cdl-gd-2}
	With \cref{eq:alg-cdl-2}, the cross term in \cref{eq:cdl} satisfies
	\[
		\mathbb{E}\< \Delta x\ct,g_{t}^{0,c}(x\ct; \theta\ct)  \> \ge \tfrac{7}{8}\lambda^{c}\mathbb{E}\|\Delta x\ct\|^{2} - 2(\lambda^{c})^{-1} \mathbb{E}\|\Delta\theta\ct\|^{2}
		.\]
\end{lemma}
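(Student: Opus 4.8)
The plan is to follow the same pattern as \cref{lem:coe-gd,lem:cdl-gd-1}, isolating the extra error incurred by substituting the estimated central objective $\hat{b}\ct(s) = \Phi(s)\theta\ct$ for the true one. First I would compute the conditional expectation of the update direction. Since $\Phi$ and $A$ are shared across all agents and $\mu^{0} = \frac{1}{n}\sum_{i}\mu^{i}$, averaging the per-agent expectations gives $\mathbb{E}_{t}[g_{t}^{0,c}(x\ct;\theta\ct)] = \bar{A}^{0}x\ct - \bar{\Phi}^{0}\theta\ct$, where $\bar{A}^{0} = \mathbb{E}_{\mu^{0}}A(s)$ and $\bar{\Phi}^{0} = \mathbb{E}_{\mu^{0}}\Phi(s) = \frac{1}{n}\sum_i \mathbb{E}_{\mu^i}\Phi(s)$.

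The crucial observation is that the \CDL target $x\cs$ and the \COE target $\theta\cs$ share the same right-hand side: $\bar{A}^{0}x\cs = \bar{b}^{0} = \bar{\Phi}^{0}\theta\cs$ (the first from \cref{eq:sys-central}, the second from \cref{eq:sys-objective}). Subtracting these two identities lets me rewrite the expected update direction purely in terms of the two optimality gaps,
\[
	\mathbb{E}_{t}[g_{t}^{0,c}(x\ct;\theta\ct)] = \bar{A}^{0}\Delta x\ct - \bar{\Phi}^{0}\Delta\theta\ct,
\]
so the bias introduced by estimating $b^{c}$ collapses cleanly into the \COE error $\bar{\Phi}^{0}\Delta\theta\ct$ rather than any uncontrolled affinity-dependent term.

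Taking the inner product with $\Delta x\ct$ and splitting gives a strongly-monotone term $\mathbb{E}\langle \Delta x\ct, \bar{A}^{0}\Delta x\ct\rangle \ge \lambda^{c}\mathbb{E}\|\Delta x\ct\|^{2}$ exactly as in \cref{lem:cdl-gd-1}, plus the cross term $-\mathbb{E}\langle \Delta x\ct, \bar{\Phi}^{0}\Delta\theta\ct\rangle$. I would control the latter by Cauchy--Schwarz together with $\|\bar{\Phi}^{0}\|\le \mathbb{E}_{\mu^{0}}\|\Phi(s)\|\le 1$ (normalized features), and then apply Young's inequality with balance parameter $\beta = \lambda^{c}/4$: this trades a $\tfrac{1}{8}\lambda^{c}\mathbb{E}\|\Delta x\ct\|^{2}$ penalty against a $2(\lambda^{c})^{-1}\mathbb{E}\|\Delta\theta\ct\|^{2}$ term, yielding the claimed $\tfrac{7}{8}\lambda^{c}$ coefficient.

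There is no real analytical obstacle here; the content is entirely conceptual. The key insight --- and the reason this lemma is clean --- is the second step: because \COE is designed so that its fixed point matches the central right-hand side $\bar{b}^{0}$, using the imperfect estimate $\theta\ct$ in place of $\theta\cs$ perturbs the descent direction by exactly $\bar{\Phi}^{0}\Delta\theta\ct$. This is precisely the asynchronous coupling term $-2(\lambda^{c})^{-1}\mathbb{E}\|\Delta\theta\ct\|^{2}$ that the uni-timescale Lyapunov analysis (\cref{lem:lyap}) later absorbs through the weights $w^{k}$, so that \CDL retains the same rate as in \cref{lem:central-1}.
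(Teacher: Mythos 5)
Your proposal is correct and follows essentially the same route as the paper: both reduce the expected update direction to $\bar{A}^{0}\Delta x\ct - \bar{\Phi}^{0}\Delta\theta\ct$ (the paper by expanding $g^{c\tos i}$ around $\theta\cs$ before taking expectations, you by taking expectations first and then invoking $\bar{A}^{0}x\cs = \bar{b}^{0} = \bar{\Phi}^{0}\theta\cs$ — the same algebra in a different order), and both then apply strong monotonicity of $\bar{A}^{0}$ plus Cauchy--Schwarz with $\|\bar{\Phi}^{0}\|\le 1$ and Young's inequality with $\beta_c = \lambda^{c}/4$ to obtain the $\tfrac{7}{8}\lambda^{c}$ and $2(\lambda^{c})^{-1}$ constants. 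No gaps.
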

\begin{proof}
	The cross term can be further decomposed as
	\[
		& \mathbb{E}\left< \Delta x\ct,g_{t}^{0,c}(x\ct;\theta\ct)\right>\\
		=& \mathbb{E}_{\mathcal{F}_{t-1}}\left< \mathbb{E}_{t}g_{t}^{0,c}(x\ct;\theta\ct),\Delta x\ct\right>\\
		=& \mathbb{E}_{\mathcal{F}_{t-1}}\left< \mathbb{E}_{t}\left[\frac{1}{n}\sum_{i=1}^{n}(g^{c\tos i}(x\ct,\theta\cs) - \Phi\it(\theta\ct -\theta\cs))\right],\Delta x\ct\right>\\
		=& \mathbb{E}_{\mathcal{F}_{t-1}}\left< \mathbb{E}_{\mu^0}[A(s)]  x\ct - \mathbb{E}_{\mu^{0} }[b^{c}(s)],\Delta x\ct\right>
		- \mathbb{E}_{\mathcal{F}_{t-1}}\left< \mathbb{E}_{\mu^{0} }[\Phi(s)]\Delta\theta\ct,\Delta x\ct\right> \\
		=& \mathbb{E}\left< \bar{A}^{0}x\ct - \bar{b}^{0} ,\Delta x\ct\right>
		- \mathbb{E}\left< \bar{\Phi}^{0} \Delta\theta\ct,\Delta x\ct\right>\label{eq:cl-2}
		\label{eq:cl-2}
		.\]
	The first term in \cref{eq:cl-2} follows a descent direction; by the definition of $x\cs$,
	\[
		\mathbb{E}\left< \bar{A}^{0}x\ct-\bar{b}^{0}, \Delta x\ct   \right>
		=&\mathbb{E}\left< (\bar{A}^{0}x\ct-\bar{b}^{0}) - (\bar{A}^{0}x\cs-\bar{b}^{0}), \Delta x\ct   \right>\\
		=&\mathbb{E}\left< \bar{A}^{0}\Delta x\ct, \Delta x\ct   \right>\\
		\ge& \lambda_{\min}(\operatorname{sym}(\bar{A}^{0})) \mathbb{E}\|\Delta x\ct\|_{2}^{2} \label{eq:cl-3}
		.\]
	The second term in \cref{eq:cl-2} involves the estimation error from \COE; by the Cauchy-Schwarz inequality and Young's inequality,
	\[
		|\mathbb{E}\< \bar{\Phi}^{0} \Delta\theta\ct,\Delta x\ct\>|
		\le \mathbb{E}[\|\Delta\theta\ct\|\|\Delta x\ct\|]
		\le \frac{\beta_{c}}{2}\mathbb{E}\|\Delta x\ct\|^{2} + \frac{1}{2\beta_{c}}\mathbb{E}\|\Delta\theta\ct\|^{2} \label{eq:cl-4}
		,\]
	where $\beta_{c}>0$ is a constant to be determined.
	Plugging \cref{eq:cl-3,eq:cl-4} into \cref{eq:cl-2} gives
	\[
		\mathbb{E}\left< \Delta x\ct,g_{t}^{0,c}(x\ct)\right>
		\ge \left(\lambda^{c} - \frac{\beta_{c}}{2}\right)\mathbb{E}\|\Delta x\ct\|^{2} - \frac{1}{2\beta_{c}}\mathbb{E}\|\Delta\theta\ct\|^{2}
		.\]
	Setting $\beta_{c} = \lambda^{c}/4$ gives the desired result.
\end{proof}

\begin{lemma}[\CDL+\COE variance] \label{lem:cdl-var-2}
	With \cref{eq:alg-cdl-2}, the variance term in \cref{eq:cdl} satisfies
	\[
		\mathbb{E}\|g^{0,c}_t(x\ct;\theta\ct)\|^{2} \le 2G_{A}^{2}\mathbb{E}\|\Delta x\ct\|^{2} + 4\mathbb{E}\|\Delta\theta\ct\|^{2} + 4\sigma^{2}n^{-1}
		.\]
\end{lemma}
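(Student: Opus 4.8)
The plan is to mirror the two-part decomposition used for the first variant in \cref{lem:cdl-var-1} and for \COE in \cref{lem:coe-var}, but now tracking both optimality gaps $\Delta x\ct$ and $\Delta\theta\ct$ at once. Substituting $\hat b\ct(s\it)=\Phi\it\theta\ct$ into $g^{c\tos i}_t$ and writing $x\ct = x\cs + \Delta x\ct$, $\theta\ct = \theta\cs + \Delta\theta\ct$, I would expand
\[
	g^{0,c}_t(x\ct;\theta\ct) = \underbrace{g^{0,c}_t(x\cs;\theta\cs)}_{R} + \underbrace{\big(\tfrac1n\textstyle\sum_{i}A\it\big)\Delta x\ct}_{P} - \underbrace{\big(\tfrac1n\textstyle\sum_i \Phi\it\big)\Delta\theta\ct}_{Q},
\]
isolating a ``variance at the solution'' term $R$ from two ``distance to solution'' terms $P$ and $Q$ linear in $\Delta x\ct$ and $\Delta\theta\ct$.

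The two linear terms are controlled pathwise. Since $\|A\it\|\le G_A$ for every $i$, the averaged matrix obeys $\|\tfrac1n\sum_i A\it\|\le G_A$, giving $\|P\|^2\le G_A^2\|\Delta x\ct\|^2$; likewise $\|\Phi\it\|\le 1$ yields $\|Q\|^2\le \|\Delta\theta\ct\|^2$. I would then group as $\|P + (R-Q)\|^2 \le 2\|P\|^2 + 2\|R-Q\|^2$ followed by $\|R-Q\|^2\le 2\|R\|^2 + 2\|Q\|^2$, producing exactly $2\|P\|^2 + 4\|Q\|^2 + 4\|R\|^2$, which matches the target coefficients $2G_A^2, 4, 4$ with no slack wasted.

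The crux is bounding $\mathbb{E}\|R\|^2\le \sigma^2 n^{-1}$, where the federated cancellation and the compatibility of \CDL and \COE at the solution enter. I would split $\mathbb{E}\|R\|^2$ into a diagonal part $\tfrac1{n^2}\sum_i \mathbb{E}_t\|g^{c\tos i}_t(x\cs;\theta\cs)\|^2$ and the off-diagonal cross terms. Each diagonal summand is bounded by $\|A\it\|\|x\cs\|+\|\Phi\it\|\|\theta\cs\|\le G_A G_x + G_b \le \sigma$, so the diagonal contributes at most $\sigma^2 n^{-1}$. For the cross terms I would use the ``completing the sum'' trick of \cref{lem:coe-var}, namely $\tfrac1{n^2}\sum_{i\ne j}\langle \mathbb{E}\it g^{c\tos i}_t, \mathbb{E}\jt g^{c\tos j}_t\rangle \le \|\tfrac1n\sum_i \mathbb{E}\it g^{c\tos i}_t(x\cs;\theta\cs)\|^2$, and then observe that this averaged mean is precisely $\bar{A}^0 x\cs - \bar{\Phi}^0\theta\cs = \bar{b}^0 - \bar{b}^0 = 0$, since $x\cs$ solves $\bar{A}^0 x\cs = \bar{b}^0$ and $\theta\cs$ solves $\bar{\Phi}^0\theta\cs = \bar{b}^0$. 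Hence the cross terms are nonpositive and $\mathbb{E}\|R\|^2\le\sigma^2 n^{-1}$.

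Combining the three bounds in expectation yields the claim. The main obstacle is ensuring that replacing the true objective by the estimate $\hat b\ct$ does not spoil the federated variance reduction of $R$; this is resolved precisely because $R$ is evaluated at the fixed pair $(x\cs,\theta\cs)$, where the central decision system and the central objective estimation share the same right-hand side $\bar{b}^0$, so the estimation error is cleanly absorbed into the additive term $Q$ rather than contaminating $R$. A minor point to verify is that the pathwise bounds on $P$ and $Q$ hold conditional on the history (the fresh samples $s\it$ are independent of $\mathcal F_{t-1}$ while $\Delta x\ct,\Delta\theta\ct$ are $\mathcal F_{t-1}$-measurable), so taking full expectations is legitimate.
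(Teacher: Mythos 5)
Your proposal is correct and follows essentially the same route as the paper's proof: the same decomposition $g^{0,c}_t(x\ct;\theta\ct)=g^{0,c}_t(x\cs;\theta\cs)+A^0_t\Delta x\ct-\Phi^0_t\Delta\theta\ct$, the same Young-type grouping yielding the coefficients $2G_A^2,4,4$, and the same diagonal/off-diagonal split at the solution where the cross terms collapse to $\|\bar{A}^0x\cs-\bar{\Phi}^0\theta\cs\|^2=0$ because \CDL and \COE share the right-hand side $\bar{b}^0$. The only difference is presentational: you spell out the ``completing the sum'' step and the conditioning argument that the paper leaves implicit.
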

\begin{proof}
	Similar to the proof of \cref{lem:coe-var}, the variance term can be first decomposed as
	\[
		\mathbb{E}\|g^{0,c}_t(x\ct;\theta\ct)\|^{2}
		= \mathbb{E}\|g^{0,c}_{t}(x\cs,\theta\cs) + A\ot(x\ct-x\cs) - \Phi\ot(\theta\cs-\theta\cs) \|^{2}
		,\]
	where we write $A\ot = \frac{1}{n}\sum_{i=1}^{n}A\it$ and $\Phi\ot = \frac{1}{n}\sum_{i=1}^{n}\Phi\it$. Therefore,
	\[
		\mathbb{E}\|g^{0,c}_t(x\ct;\theta\ct)\|^{2}
		\le 2G_{A}^{2}\mathbb{E}\|\Delta x\ct\|^{2} + 4\mathbb{E}\|\Delta\theta\ct\|^{2} + 4 \mathbb{E}\|g^{0,c}_{t}(x\cs,\theta\cs) \|^{2}
		.\]
	Similarly, for the variance term at the stationary point, we have
	\[
		\mathbb{E}\|g^{0,c}_{t}(x\cs,\theta\cs)\|^{2}
		=& \frac{1}{n^{2}} \sum_{i=1}^{n}\mathbb{E}_{t}\|g^{c\tos i}_{t}(x\cs,\theta\cs) \|^{2}
		+ \frac{1}{n^{2}}\sum_{i\ne j}\langle\mathbb{E}\i_{t} g^{c\tos i}_{t}(x\cs,\theta\cs) , \mathbb{E}\jt g^{c\tos j}_{t}(x\cs,\theta\cs)  \rangle \\
		\le& \sigma^{2}n^{-1} + \|\bar{A}^{0}x\cs - \bar{\Phi}^{0}\theta\cs \|^{2} \\
		=& \sigma^{2}n^{-1}
		.\]
	Plugging this back gives the desired result.
\end{proof}

Combining \cref{lem:cdl-gd-2,lem:cdl-var-2} with \cref{eq:cdl} gives the one-step progress of the second variant of \CDL.

\begin{corollary}[\CDL+\COE one-step progress] \label{lem:central}
	Let $\alpha^{c}_{t}\le \lambda^{c}/(8G_{A}^{2})$. Then,
	for any time step $t$, \cref{eq:alg-cdl-2} satisfies
	\[
		\mathbb{E}\|\Delta x\ctt\|_{2}^{2} \le& (1 - \tfrac{3}{2}\alpha^{c}_t \lambda^{c}) \mathbb{E}\|\Delta x\ct\|_{2}^{2}
		+ 8\alpha^{c}_t(\lambda^{c})^{-1}\mathbb{E}\|\Delta\theta\ct\|_{2}^{2} + 4(\alpha^{c} _{t})^{2} \sigma^{2}n^{-1}
		,\]
	where we use the fact that $\lambda^{c} /G_{A}\le 1$, which implies $\alpha ^{c}_{t} \le (\lambda^{c}/G_{A})^{2}/(8\lambda^{c}) \le (\lambda^{c})^{-1}$, and thus $\alpha ^{c}_{t}(\lambda^{c})^{-1} + (\alpha ^{c}_{t})^{2} \le 2\alpha ^{c}_{t}(\lambda^{c})^{-1}$.
\end{corollary}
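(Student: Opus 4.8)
The plan is to plug the descent lower bound of \cref{lem:cdl-gd-2} and the variance upper bound of \cref{lem:cdl-var-2} directly into the one-step decomposition \cref{eq:cdl}, and then collect the coefficients of $\mathbb{E}\|\Delta x\ct\|^2$, $\mathbb{E}\|\Delta\theta\ct\|^2$, and the constant residual separately. This mirrors the argument for the first variant in \cref{lem:central-1}; the only new feature is the cross-contribution of the \COE error $\Delta\theta\ct$, which enters both the descent and the variance bound.

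First I would write \cref{eq:cdl} with the explicit $\theta\ct$ dependence, so that the middle term is $-2\alpha^c_t\,\mathbb{E}\langle g^{0,c}_t(x\ct;\theta\ct),\Delta x\ct\rangle$ and the last term is $(\alpha^c_t)^2\,\mathbb{E}\|g^{0,c}_t(x\ct;\theta\ct)\|^2$. Lower-bounding the inner product by \cref{lem:cdl-gd-2} (the leading minus sign flips the inequality) yields $-\tfrac{7}{4}\alpha^c_t\lambda^c\,\mathbb{E}\|\Delta x\ct\|^2 + 4\alpha^c_t(\lambda^c)^{-1}\,\mathbb{E}\|\Delta\theta\ct\|^2$, while scaling \cref{lem:cdl-var-2} by $(\alpha^c_t)^2$ contributes $2(\alpha^c_t)^2 G_A^2\,\mathbb{E}\|\Delta x\ct\|^2 + 4(\alpha^c_t)^2\,\mathbb{E}\|\Delta\theta\ct\|^2 + 4(\alpha^c_t)^2\sigma^2 n^{-1}$.

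The crux is the grouping of the $\mathbb{E}\|\Delta x\ct\|^2$ terms, whose coefficient becomes $1 - \tfrac{7}{4}\alpha^c_t\lambda^c + 2(\alpha^c_t)^2 G_A^2$. I want this to be at most $1 - \tfrac{3}{2}\alpha^c_t\lambda^c$, which reduces to $2\alpha^c_t G_A^2 \le \tfrac{1}{4}\lambda^c$, i.e.\ exactly the hypothesis $\alpha^c_t \le \lambda^c/(8G_A^2)$. This is the single place where the step-size restriction enters: the second-order variance inflation $2(\alpha^c_t)^2 G_A^2$ is absorbed into the $\tfrac{1}{4}\alpha^c_t\lambda^c$ slack that \cref{lem:cdl-gd-2} deliberately leaves between its cross-term contraction $\tfrac{7}{4}\alpha^c_t\lambda^c$ (from the $\tfrac{7}{8}\lambda^c$ descent rate) and the target $\tfrac{3}{2}\alpha^c_t\lambda^c$. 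For the $\mathbb{E}\|\Delta\theta\ct\|^2$ terms the coefficient is $4\alpha^c_t(\lambda^c)^{-1} + 4(\alpha^c_t)^2$; invoking $\lambda^c \le G_A$ (from \cref{eq:sing}, since $\lambda^c = \lambda_{\min}(\operatorname{sym}(\bar{A}^0)) \le \sigma_{\min}(\bar{A}^0) \le G_A$) gives $\alpha^c_t \le \lambda^c/(8G_A^2) \le (\lambda^c)^{-1}$, hence $(\alpha^c_t)^2 \le \alpha^c_t(\lambda^c)^{-1}$ and the coefficient is bounded by $8\alpha^c_t(\lambda^c)^{-1}$, as claimed. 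The residual $4(\alpha^c_t)^2\sigma^2 n^{-1}$ passes through untouched.

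I do not expect a genuine obstacle here, as this is a bookkeeping step; the only care needed is the sign when lower-bounding the cross term and the verification that the slack engineered into the descent lemma is matched precisely to the step-size bound---which is exactly why \cref{lem:cdl-gd-2} was stated with $\tfrac{7}{8}\lambda^c$ rather than the full $\lambda^c$ available in the first variant \cref{lem:cdl-gd-1}.
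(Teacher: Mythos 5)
Your proposal is correct and follows exactly the paper's argument: the paper likewise obtains this corollary by substituting the descent bound of \cref{lem:cdl-gd-2} and the variance bound of \cref{lem:cdl-var-2} into \cref{eq:cdl}, absorbing the $2(\alpha^c_t)^2 G_A^2$ inflation into the $\tfrac14\alpha^c_t\lambda^c$ slack via $\alpha^c_t\le\lambda^c/(8G_A^2)$, and using $\lambda^c\le G_A$ (hence $\alpha^c_t\le(\lambda^c)^{-1}$) to merge the $\mathbb{E}\|\Delta\theta\ct\|^2$ coefficients into $8\alpha^c_t(\lambda^c)^{-1}$. All constants in your bookkeeping check out against the stated result.
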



Combining \cref{lem:central,lem:objective-k-het,lem:lyap,lem:step} gives the convergence guarantee of \CDL with asynchronous \COE.

\begin{corollary}[\CDL convergence] \label{cor:cdl}
	With a constant step size $\alpha ^{c}\lambda^{c} = \alpha ^{b}\lambda^{b} = \ln t / t $, for any time step $t>0$, \cref{eq:alg-central-k-het} satisfies
	\[
		\mathbb{E}\|x\ct-x\cs\|^{2} = \begin{cases}
			\ds O\left( \frac{\sigma^{2} \ln t}{(\lambda^{c})^{2} nt} \right)             & \text{for \cref{eq:alg-cdl-1};}                                   \\[2ex]
			\ds O\left( \frac{\sigma^{2} \ln t}{(\lambda^{b} \lambda^{c})^{2} nt} \right) & \text{for \cref{eq:alg-cdl-2} with \cref{eq:alg-objective-k-het}}
			.\end{cases}
	\]
	With a linearly diminishing step size $\alpha_\tau^{c}\lambda^{c} = \alpha _{\tau}^{b}\lambda^{b} = 4/(\tau+t_0+1)$, $\tau=0,\dots,t$, where $t_0>0$ ensures that $\alpha ^{b}_0 \le \lambda^{c}/(8G_{A}^{2})$ and $\alpha ^{b}_0 \le \lambda^{b}/4$, \cref{eq:alg-central-k-het} satisfies
	\[
		\mathbb{E}\|\tilde{x}\ct-x\cs\|^{2} = \begin{cases}
			\ds O\left( \frac{\sigma^{2}}{(\lambda^{c})^{2} nt} \right)             & \text{for \cref{eq:alg-cdl-1};}                                   \\[2ex]
			\ds O\left( \frac{\sigma^{2}}{(\lambda^{b} \lambda^{c})^{2} nt} \right) & \text{for \cref{eq:alg-cdl-2} with \cref{eq:alg-objective-k-het}}
			,\end{cases}
	\]
	where $\tilde{x}_t$ represents the convex combination specified in \cref{lem:step}.
\end{corollary}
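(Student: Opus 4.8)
The plan is to assemble the corollary mechanically from three already-established ingredients: the one-step progress bounds (\cref{lem:central-1} for the first variant, and the pair \cref{lem:objective-k-het}, \cref{lem:central} for the second), the uni-timescale Lyapunov lemma \cref{lem:lyap}, and the step-size lemma \cref{lem:step}. The whole argument is a reduction that casts each variant into the contraction form $\mathcal{L}_{t+1}\le(1-\alpha_t)\mathcal{L}_t+\alpha_t^2 C$ and then reads off the rate; all the genuine analysis has already been done in the descent and variance lemmas, where the federated cross-term cancellation kills the affinity dependence and restores the $n^{-1}$ factor.

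For the first variant \cref{eq:alg-cdl-1}, the dynamics are self-contained (no coupling to \COE), so I would invoke \cref{lem:lyap} in its trivial single-variable instance $K=1$ with $z^1=x^c$, $\lambda^1=\lambda^c$, $C^1=2\sigma^2 n^{-1}$, matching \cref{lem:central-1}. Setting the effective step size $\alpha_t=\alpha^c_t\lambda^c=\ln t/t$ meets the hypothesis $\alpha^k_t\lambda^k=\alpha_t$, and with no cross-coupling the lemma returns $\mathcal{L}_{t+1}\le(1-\alpha_t)\mathcal{L}_t+\alpha_t^2 C^1(\lambda^c)^{-2}$. Feeding $C=2\sigma^2 n^{-1}(\lambda^c)^{-2}$ into \cref{lem:step} then yields the claimed $\widetilde{O}(\sigma^2(\lambda^c)^{-2}n^{-1}t^{-1})$ for the constant step size and its $\ln t$-free counterpart for the diminishing one.

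For the second variant \cref{eq:alg-cdl-2} run with asynchronous \COE \cref{eq:alg-objective-k-het}, the key observation is the \emph{triangular coupling}: $\theta^c$ evolves autonomously while $x^c$ is driven by the estimation error $\Delta\theta\ct$. I would therefore set $K=2$ with the upstream variable $z^1=\theta^c$ and $z^2=x^c$, reading off $\lambda^1=\lambda^b$, $C^1=2\sigma^2 n^{-1}$ from \cref{lem:objective-k-het} and $\lambda^2=\lambda^c$, $C^2=4\sigma^2 n^{-1}$, $C^{2,1}=8(\lambda^c)^{-1}$ from \cref{lem:central}. The prescribed schedule $\alpha^c\lambda^c=\alpha^b\lambda^b=\ln t/t$ is exactly what aligns the two timescales so that \cref{lem:lyap} applies with a single $\alpha_t=\ln t/t$. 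Computing the weights gives $w^2=1$ and $w^1=2\,C^{2,1}(\lambda^c)^{-1}=16(\lambda^c)^{-2}$, and the lemma collapses the system into $\mathcal{L}_{t+1}\le(1-\alpha_t)\mathcal{L}_t+\alpha_t^2 C$ with $C=w^1 C^1(\lambda^b)^{-2}+w^2 C^2(\lambda^c)^{-2}=O(\sigma^2 n^{-1}(\lambda^b\lambda^c)^{-2})$. Since $w^2=1$ and $\mathcal{L}_t$ is a sum of nonnegative terms, $\mathbb{E}\|\Delta x\ct\|^2\le\mathcal{L}_t$, so \cref{lem:step} delivers both rates; for the diminishing schedule a final Jensen step passes from the convex combination of errors to the averaged iterate $\tilde{x}\ct$.

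The argument is largely bookkeeping, so the only care needed lies in the second variant. I must correctly identify $\theta^c$ as upstream so the coupling matrix is strictly lower-triangular as \cref{lem:lyap} demands; verify that the step-size constraints $\alpha^b_t\le\lambda^b/4$ and $\alpha^c_t\le\lambda^c/(8G_A^2)$ hold under the schedule (automatic for large $t$ with the constant step size, enforced by the choice of $t_0$ for the diminishing one); and confirm the effective-step-size normalization is consistent across both modules. The dominant $(\lambda^b\lambda^c)^{-2}$ dependence then emerges transparently: the weight $w^1\propto(\lambda^c)^{-2}$ multiplies the upstream noise $C^1(\lambda^b)^{-2}$, which is precisely the mechanism by which the \COE estimation error propagates into the \CDL rate without inflating the $n^{-1}$ federated speedup.
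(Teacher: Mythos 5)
Your proposal is correct and follows essentially the same route as the paper: it instantiates \cref{lem:lyap} with $z^1=\theta^c$, $z^2=x^c$ and the exact constants $C^1=2\sigma^2n^{-1}$, $C^2=4\sigma^2n^{-1}$, $C^{2,1}=8(\lambda^c)^{-1}$ from \cref{lem:objective-k-het,lem:central}, obtains the same weights $w^2=1$, $w^1=16(\lambda^c)^{-2}$ (hence the same Lyapunov function), and closes with \cref{lem:step}; the paper likewise absorbs the $(\lambda^c)^{-2}$ term into $(\lambda^b\lambda^c)^{-2}$ via $\lambda^b\le\|\bar{\Phi}^0\|\le 1$, a fact you use implicitly in your big-$O$ simplification. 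Your explicit treatment of the first variant as the trivial $K=1$ case and your remarks on the step-size constraints and the Jensen step are consistent with what the paper leaves implicit.
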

\begin{proof}
	\cref{lem:objective-k-het,lem:central} fit into \cref{lem:lyap} with $z^{1}_{t}=\theta\ct$ and $z^{2}_{t}=x\ct$, along with $\alpha _{t}=\alpha ^{b}_{t}\lambda^{b} = \alpha ^{c}_{t}\lambda^{c}$ and
	\[
		C^{1}=2\sigma^{2} n^{-1},\quad C^{2} = 4\sigma^{2} n^{-1},\quad
		C^{2,1} = 8(\lambda^{c})^{-1}
		.\]
	Thus,
	\[
		&\mathbb{E}\|\Delta x\ctt\|^{2} + 16(\lambda^{c})^{-2} \mathbb{E}\|\Delta\theta\ctt\|^{2} \\
		\le& (1-\alpha _{t})\left( \mathbb{E}\|\Delta x\ct\|^{2} + 16(\lambda^{c})^{-2} \mathbb{E}\|\Delta\theta\ct\|^{2} \right)
		+ \alpha _{t}^{2}\left( \frac{2\sigma^{2}}{(\lambda^{c})^{2}n} + \frac{64\sigma^{2}}{(\lambda^{c}\lambda^{b})^{2}n } \right)\\
		\le& (1-\alpha _{t})\left( \mathbb{E}\|\Delta x\ct\|^{2} + 16(\lambda^{c})^{-2} \mathbb{E}\|\Delta\theta\ct\|^{2} \right)
		+ \alpha _{t}^{2}\frac{66\sigma^{2}}{(\lambda^{b}\lambda^{c})^{2}n}
		,\]
	where the last inequality uses the fact that $\lambda^{b}\le\|\bar{\Phi}^{0}\|\le 1$. Plugging the above Lyapunov function into \cref{lem:step} gives the desired results.
\end{proof}


\section{Analysis of Personalized Collaborative Learning} \label{apx:local}

This section analyzes the local component of personalized collaborative learning (\PCL), with environment heterogeneity, asynchronous \COE, and asynchronous \DRE that satisfies \cref{asmp:dre}. The learning problem is the most general form in \cref{eq:sys}:
\[
	\bar{A}^{i} x\is = \bar{b}^{i}, \quad \forall i\in[n]
	,\]
where $\bar{A}^{i} = \mathbb{E}_{\mu^{i}}A(s)$ and $\bar{b}^{i} = \mathbb{E}_{\mu^{i}}b^{i}(s)$.
We restate the local update rule for agent $i$:
\[ \label{eq:alg-local}
	x\itt = x\it - \alpha_t \tilde{g}\it
	= x\it -\alpha _{t} (g\it(x\it) + \rg_t (x^{c}_{t}; \theta\ct,\eta\it)-g^{c\tos i}_{t}(x^{c}_t; \theta\ct))
	,\]
where
\[
	g^{c\tos i}_{t}(x) = A(s\it)x - \hat{b}^{c}_t (s\it)
	.\]
and
\[
	\rg_t(x)  = \frac{1}{n} \sum_{j=1}^{n} \hat{\rho}\it(s\jt) g^{c\tos j}_{t}(x)
	.\]
Recall that $\hat{b}\ct$ and $\hat{\rho}\it$ are estimated objective and density ratio functions at time step $t$; and with linear parametrization, they satisfy
\[
	\hat{b}\ct(s) = \Phi(s)\theta\ct,\quad \hat{\rho}\it(s) = \psi(s)^{T}\eta\it
	.\]
Thus, we highlight the dependence on estimation weights by including $\theta\ct$ and $\eta\it$ in the arguments of update directions.
We denote $\Delta x\it = x\it - x\is$.

We first show that the local update rule follows an unbiased direction towards the local solution plus estimation errors.

\begin{lemma}[Correction] \label{lem:correction}
	The expected local update direction satisfies
	\[
		\mathbb{E}[\tilde{g}\it] = \mathbb{E}[g\it(x\it)] + \mathbb{E}[\mathcal{E}(\Delta \eta\it, \Delta x\ct, \Delta \theta\ct)]
		,\]
	where
	\[
		\|\mathbb{E}[\mathcal{E}(\Delta \eta\it, \Delta x\ct, \Delta \theta\ct)]\| = O( \sigma\|\Delta \eta\it\| + G_{A}\|\Delta x\ct\| + \|\Delta \theta\ct\| )
		.\]
\end{lemma}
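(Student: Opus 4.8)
The plan is to reduce $\mathbb{E}[\tilde g\it]$ to $\mathbb{E}[g\it(x\it)]$ plus a residual that turns out to be attributable \emph{solely} to the density-ratio estimation error, and then to bound that residual by splitting the central update direction around the central solution. By linearity of the conditional expectation $\mathbb{E}_t$, I would first write $\mathbb{E}_t[\tilde g\it] = \mathbb{E}_t[g\it(x\it)] + \mathcal{E}$, where $\mathcal{E} \coloneqq \mathbb{E}_t[\rg_t(x\ct;\theta\ct,\eta\it) - g^{c\tos i}_t(x\ct;\theta\ct)]$. Since $x\ct,\theta\ct,\eta\it$ are all $\mathcal{F}_{t-1}$-measurable, they behave as constants inside $\mathbb{E}_t$, and $\hat\rho\it(s)=\psi(s)^T\eta\it$, $\hat b\ct(s)=\Phi(s)\theta\ct$.

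The central observation is that with the \emph{true} density ratio the two correction terms cancel at \emph{any} evaluation point. Using $\hat\rho\it(s)=\rho\i(s)+(\hat\rho\it(s)-\rho\i(s))$, the reweighting identity $\mathbb{E}_{\mu^0}[\rho\i(s)f(s)]=\mathbb{E}_{\mu^i}[f(s)]$, and $\tfrac1n\sum_j\mathbb{E}_{\mu^j}=\mathbb{E}_{\mu^0}$, I obtain
\[
	\mathbb{E}_t[\rg_t(x\ct;\theta\ct,\eta\it)] = \mathbb{E}_{\mu^0}\big[\hat\rho\it(s)\big(A(s)x\ct-\Phi(s)\theta\ct\big)\big] = \bar A^i x\ct - \bar\Phi^i\theta\ct + \mathbb{E}_{\mu^0}\big[(\hat\rho\it(s)-\rho\i(s))(A(s)x\ct-\Phi(s)\theta\ct)\big].
\]
Since $\mathbb{E}_t[g^{c\tos i}_t(x\ct;\theta\ct)] = \mathbb{E}_{\mu^i}[A(s)x\ct-\Phi(s)\theta\ct] = \bar A^i x\ct - \bar\Phi^i\theta\ct$, the deterministic parts cancel and
\[
	\mathcal{E} = \mathbb{E}_{\mu^0}\big[(\hat\rho\it(s)-\rho\i(s))(A(s)x\ct - \Phi(s)\theta\ct)\big].
\]
The evaluation points $x\ct,\theta\ct$ play no role in this cancellation, so only the density-ratio error survives.

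To surface the claimed dependence I would split the residual around the central solution. Because $\bar A^0 x\cs = \bar b^0 = \bar\Phi^0\theta\cs$, we have $\mathbb{E}_{\mu^0}[A(s)x\cs-\Phi(s)\theta\cs]=0$, and I write $A(s)x\ct-\Phi(s)\theta\ct = (A(s)x\cs-\Phi(s)\theta\cs) + A(s)\Delta x\ct - \Phi(s)\Delta\theta\ct$, yielding three terms in $\mathcal{E}$. For the first, I bound $\|A(s)x\cs-\Phi(s)\theta\cs\|\le G_A G_x + G_b \le \sigma$ and use normalized features $\|\psi(s)\|\le 1$ to get $|\hat\rho\it(s)-\rho\i(s)| = |\psi(s)^T\Delta\eta\it|\le\|\Delta\eta\it\|$, giving a contribution $\le \sigma\|\Delta\eta\it\|$. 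For the second and third terms I factor out the constant deviations $\Delta x\ct,\Delta\theta\ct$ and bound $\mathbb{E}_{\mu^0}[|\hat\rho\it-\rho\i|]$ by the uniform $O(1)$ bound of \cref{asmp:dre}, combined with $\|A(s)\|\le G_A$ and $\|\Phi(s)\|\le 1$, giving $O(G_A\|\Delta x\ct\|)$ and $O(\|\Delta\theta\ct\|)$. Summing and taking the outer expectation proves the bound.

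The main subtlety is the recognition that the importance correction cancels exactly at any $(x\ct,\theta\ct)$ once the true $\rho\i$ is used, so the entire error is a density-ratio phenomenon; the second subtlety is using the \emph{right} bound for $|\hat\rho\it-\rho\i|$ in each piece—$\|\Delta\eta\it\|$ via normalized features for the solution-residual term, to recover the $\sigma$ scaling, versus the uniform $O(1)$ oracle bound for the two deviation terms, so that no spurious products of deviations appear and the bound stays first-order in each $\Delta$.
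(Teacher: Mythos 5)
Your proof is correct and follows essentially the same route as the paper's: the same splitting $\hat\rho\it = \rho\i + (\hat\rho\it-\rho\i)$ with the change-of-measure identity $\mathbb{E}_{\mu^0}[\rho\i(s)f(s)]=\mathbb{E}_{\mu^i}[f(s)]$ to cancel the bias-correction term, the same decomposition of $A(s)x\ct-\Phi(s)\theta\ct$ around the central solution, and the same two-pronged bound on $|\hat\rho\it-\rho\i|$ (via $\|\Delta\eta\it\|$ with normalized features for the $\sigma$-scale residual, and via the $O(1)$ oracle bound of \cref{asmp:dre} for the deviation terms). No gaps; the extra observation that $\mathbb{E}_{\mu^0}[A(s)x\cs-\Phi(s)\theta\cs]=0$ is harmless and unused, exactly as in the paper.
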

\begin{proof}
	We first inspect the importance-corrected term:
	\[
		&\mathbb{E}[\rg_t(x^{c}_t; \theta\ct,\eta\it)]\\
		=& \mathbb{E}\left[ \frac{1}{n}\sum_{j=1}^{n} \hat{\rho}\it(s\jt) (A(s\jt)x\ct - \hat{b}\ct(s\jt)) \right]\\
		=& \mathbb{E}_{\mathcal{F}_{t-1}}\left[ \frac{1}{n}\sum_{j=1}^{n} \mathbb{E}_{\mu^{j} }\left[ \hat{\rho}\it(s)  (A(s)x\ct - \hat{b}\ct(s)) \right]  \right] \\
		=& \mathbb{E}_{\mathcal{F}_{t-1}}\left[ \mathbb{E}_{\mu^{0}}\left[ \hat{\rho}\it(s)  (A(s)x\ct - \hat{b}\ct(s)) \right]  \right] \\
		=& \mathbb{E}_{\mathcal{F}_{t-1}}\left[ \mathbb{E}_{\mu^{0} }\left[ (\rho\i(s) + \psi(s)^{T}\Delta\eta\it) (A(s)x\ct - \hat{b}\ct(s)) \right]  \right] \\
		=& \mathbb{E}_{\mathcal{F}_{t-1}} \mathbb{E}_{\mu^{0} }\left[ \rho^{i}(s) (A(s)x\ct - \hat{b}\ct(s))\right] \!+\! \mathbb{E}_{\mathcal{F}_{t-1}}\mathbb{E}_{\mu^{0} }\Bigl[\underbrace{\psi(s)^{T}\Delta\eta\it(A(s)x\ct - \Phi(s)\theta\ct ) }_{\mathcal{E}}\Bigr] \label{eq:correction}
		.\]
	We notice the bias correction term exactly removes the bias in the first term above:
	\[
		&\mathbb{E}_{\mathcal{F}_{t-1}}\left[ \mathbb{E}_{\mu^{0} }\left[ \rho^{i}(s) (A(s)x\ct - \hat{b}\ct(s))\right] \right]\\
		=& \mathbb{E}_{\mathcal{F}_{t-1}}\left[ \int_{\mathcal{S}}\frac{\mu^{i}(s)}{\mu^{0}(s) }(A(s)x\ct - \hat{b}\ct(s)) \,\mu^{0}(s) \d s \right] \\
		=& \mathbb{E}_{\mathcal{F}_{t-1}}\left[ \int_{\mathcal{S}}(A(s)x\ct - \hat{b}\ct(s))\,\mu^{i}(s)\d s  \right]\\
		=& \mathbb{E}_{\mathcal{F}_{t-1}}\left[ \mathbb{E}_{\mu^{i} }[A(s)x\ct - \hat{b}\ct(s)] \right]\\
		=& \mathbb{E}\left[ A(s\it)x\ct - \hat{b}\ct(s\it)\right]\\
		=& \mathbb{E}[g^{c\tos i}_{t}(x^{c}_t)]
		.\]
	The additional $\mathcal{E}$ encompasses all the estimation error:
	\[
		\|\mathcal{E}(\Delta \eta\it,\Delta x\ct,\Delta \theta\ct; s)\|
		=& \left\| \psi(s)^{T}\Delta \eta\it(A(s)\Delta x\ct - \Phi(s)\Delta \theta\ct + A(s)x\cs - b^{c}(s)) \right\| \\
		\le& |\hat{\rho}\it(s)-\rho^{i}(s)| (\|A(s)\Delta x\ct\| + \|\Phi(s)\Delta \theta\ct\| + \|A(s)x\cs - b^{c}(s)\|) \\
		\le& |\hat{\rho}\it(s)-\rho^{i}(s)|(G_{A}\|\Delta x\ct\| + \|\Delta \theta\ct\| + \sigma) \\
		\lesssim& \sigma\| \Delta \eta\it\| + G_{A}\|\Delta x\ct\| + \|\Delta \theta\ct\|
		,\]
		where the last inequality uses \cref{asmp:dre} that $|\hat{\rho}\it(s)-\rho^{i}(s)|=O(1)$.
	Therefore, we have
	\[
		\mathbb{E}[\tilde{g}\it] = \mathbb{E}[g\it(x\it)] + \mathbb{E}[\rg_t(x\ct)] - \mathbb{E}[g^{c\tos i}_{t}(x^{c}_t)]
		= \mathbb{E}[g\it(x\it)] + \mathbb{E}[\mathcal{E}(\Delta\eta\it, \Delta x\ct, \Delta\theta\ct; s)]
		.\]
\end{proof}

\begin{corollary}[\PCL descent] \label{cor:gd}
	Let $\lambda^{i} \coloneqq \lambda_{\min}(\operatorname{sym}(\bar{A}^{i} ))$.
	The expected local update direction satisfies
	\[
		\mathbb{E}\left< \tilde{g}\it, \Delta x\it \right> \ge \tfrac{7}{8}\lambda^{i} \mathbb{E}\|\Delta x\it\|^{2} - 2(\lambda^{i} )^{-1} \mathbb{E}\|\mathcal{E}\|^{2}
		.\]
\end{corollary}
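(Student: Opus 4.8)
The plan is to reduce the claim to the strong monotonicity of $\bar{A}^{i}$ plus a controlled perturbation from the estimation error, mirroring the cross-term analysis already carried out for \CDL in \cref{lem:cdl-gd-2}. First I would condition on the history filtration $\mathcal{F}_{t-1}$. Since $\Delta x\it = x\it - x\is$ is $\mathcal{F}_{t-1}$-measurable while $\tilde{g}\it$ depends on the time-$t$ samples, the tower property gives
\[
	\mathbb{E}\left< \tilde{g}\it, \Delta x\it \right> = \mathbb{E}\left< \mathbb{E}_{t}[\tilde{g}\it], \Delta x\it \right>,
\]
so the whole argument hinges on the conditional expectation $\mathbb{E}_{t}[\tilde{g}\it]$, which \cref{lem:correction} has already computed for us.

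By \cref{lem:correction}, $\mathbb{E}_{t}[\tilde{g}\it] = \mathbb{E}_{t}[g\it(x\it)] + \mathbb{E}_{t}[\mathcal{E}]$, where the first summand equals $\bar{A}^{i}\Delta x\it$ (using $\bar{A}^{i}x\is = \bar{b}^{i}$, so that $\mathbb{E}_{t}[g\it(x\it)] = \bar{A}^{i}x\it - \bar{b}^{i} = \bar{A}^{i}\Delta x\it$), and $\mathbb{E}_{t}[\mathcal{E}]=\mathbb{E}_{\mu^{0}}[\mathcal{E}(\Delta\eta\it,\Delta x\ct,\Delta\theta\ct;s)]$ collects all the estimation error. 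Substituting and splitting the inner product yields
\[
	\mathbb{E}\left< \tilde{g}\it, \Delta x\it \right> = \mathbb{E}\left< \bar{A}^{i}\Delta x\it, \Delta x\it \right> + \mathbb{E}\left< \mathbb{E}_{t}[\mathcal{E}], \Delta x\it \right>.
\]
The first (monotone) term is bounded below by strong monotonicity: $\left< \bar{A}^{i}\Delta x\it, \Delta x\it\right> = \left< \operatorname{sym}(\bar{A}^{i})\Delta x\it,\Delta x\it\right> \ge \lambda^{i}\|\Delta x\it\|^{2}$, since $\lambda^{i} = \lambda_{\min}(\operatorname{sym}(\bar{A}^{i}))$.

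For the error term I would apply Cauchy--Schwarz, then the conditional Jensen inequality $\|\mathbb{E}_{t}[\mathcal{E}]\|^{2}\le \mathbb{E}_{t}\|\mathcal{E}\|^{2}$, and finally Young's inequality with a free parameter $\beta>0$:
\[
	\left| \mathbb{E}\left< \mathbb{E}_{t}[\mathcal{E}], \Delta x\it \right> \right| \le \mathbb{E}\bigl[\|\mathbb{E}_{t}[\mathcal{E}]\|\,\|\Delta x\it\|\bigr] \le \tfrac{\beta}{2}\mathbb{E}\|\Delta x\it\|^{2} + \tfrac{1}{2\beta}\mathbb{E}\|\mathcal{E}\|^{2}.
\]
Choosing $\beta = \lambda^{i}/4$ turns the leading coefficient into $\lambda^{i}-\lambda^{i}/8 = \tfrac{7}{8}\lambda^{i}$ and the error coefficient into $\tfrac{1}{2\beta} = 2(\lambda^{i})^{-1}$, which is exactly the claimed bound. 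The only delicate point is the bookkeeping of the conditioning: I must verify that $\Delta x\it$ is genuinely $\mathcal{F}_{t-1}$-measurable so the inner product factors through $\mathbb{E}_{t}$, and that the Jensen step correctly passes from $\|\mathbb{E}_{t}[\mathcal{E}]\|^{2}$ to the second moment $\mathbb{E}\|\mathcal{E}\|^{2}$ appearing in the statement. Both are routine once the filtration is set up as in \cref{apx:nota}, so no genuinely new difficulty arises here---the substantive work was front-loaded into \cref{lem:correction}.
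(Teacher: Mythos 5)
Your proposal is correct and takes essentially the same route as the paper's proof: both pass through \cref{lem:correction} to split the expected direction into the monotone part $\bar{A}^{i}\Delta x^{i}_{t}$ plus the estimation error $\mathcal{E}$, lower bound the monotone part by $\lambda^{i} = \lambda_{\min}(\operatorname{sym}(\bar{A}^{i}))$, and control the error term via Cauchy--Schwarz and Young's inequality with parameter $\lambda^{i}/4$, yielding the identical constants $\tfrac{7}{8}\lambda^{i}$ and $2(\lambda^{i})^{-1}$. The only cosmetic difference is your explicit conditional Jensen step $\|\mathbb{E}_{t}[\mathcal{E}]\|^{2}\le \mathbb{E}_{t}\|\mathcal{E}\|^{2}$, which the paper leaves implicit by keeping $\mathcal{E}$ inside the outer expectation.
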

\begin{proof}
	By \cref{lem:correction} and Young's inequality,
	\[
		\mathbb{E}\left< \tilde{g}\it, \Delta x\it \right>
		=& \mathbb{E}_{\mathcal{F}_{t-1}}\left< \mathbb{E}_{\mu^{i} }[g\it(x\it)] + \mathcal{E}, \Delta x\it \right>\\
		=& \mathbb{E}_{\mathcal{F}_{t-1}}\left< \mathbb{E}_{\mu^{i} }[A(s)x\it - b^{i}(s)], \Delta x\it \right> + \mathbb{E} \left< \mathcal{E},\Delta x\it\right> \\
		=& \mathbb{E}_{\mathcal{F}_{t-1}}\left< \bar{A}^{i}x\it - \bar{b}^{i}, \Delta x\it \right> + \mathbb{E} \left< \mathcal{E},\Delta x\it\right> \\
		=& \mathbb{E}_{\mathcal{F}_{t-1}}\left< (\bar{A}^{i}x\it - \bar{b}^{i}) - (\bar{A}^{i}x\is - \bar{b}^{i}), \Delta x\it \right> + \mathbb{E} \left< \mathcal{E},\Delta x\it\right> \\
		=& \mathbb{E}\left< \bar{A}^{i}\Delta x\it, \Delta x\it \right> + \mathbb{E} \left< \mathcal{E},\Delta x\it\right> \\
		\ge& \lambda^{i}\mathbb{E}\|\Delta x\it\|^{2} - \frac{1}{2} \cdot \frac{\lambda^{i}}{4} \mathbb{E}\|\Delta x\it\|^{2} - \frac{1}{2} \cdot  \frac{4}{\lambda^{i}}\mathbb{E}\|\mathcal{E}\|^{2} \\
		=& \tfrac{7}{8}\lambda_{\min}(\bar{A}^{i})\mathbb{E}\|\Delta x\it\|^{2} - 2(\lambda^{i})^{-1} \mathbb{E}\|\mathcal{E}\|^{2}
		.\]
\end{proof}

For the variance, we inspect the importance-corrected aggregated update direction and biased-corrected local update direction separately.

\begin{lemma}[Federated variance reduction] \label{lem:vd-fed}
	The variance of the importance-corrected aggregated update direction satisfies
	\[
		\mathbb{E}\|\rg_t(x\ct; \theta\ct,\eta\it)\|^{2}
		\le 24G_{A}^{2}\mathbb{E}\|\Delta x\ct\|^{2}
		+ 40\mathbb{E}\|\Delta\theta\ct\|^{2}
		+ 64\sigma^{2}(n^{-1} + 2 \delta\i_{\cen})
		+ 2\mathbb{E}\|\mathcal{E}\|^{2}
		.\]
\end{lemma}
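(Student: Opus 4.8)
The plan is to split $\rg_t(x\ct;\theta\ct,\eta\it)$ into a stationary piece evaluated with the \emph{true} density ratio, an optimization-perturbation piece, and a density-ratio-estimation piece, and to bound each by mirroring the federated variance computation of \cref{lem:coe-var}. Writing $\hat{\rho}\it(s)=\rho\i(s)+\psi(s)^{T}\Delta\eta\it$ and $\hat b\ct(s)=\Phi(s)\theta\ct$, I would use the exact identity $A(s)x\ct-\Phi(s)\theta\ct = A(s)x\cs-\Phi(s)\theta\cs + A(s)\Delta x\ct-\Phi(s)\Delta\theta\ct$ to obtain
\[
	\rg_t(x\ct;\theta\ct,\eta\it)
	=& \underbrace{\tfrac{1}{n}\sum_{j} \rho\i(s\jt)\big(A(s\jt)x\cs-\Phi(s\jt)\theta\cs\big)}_{T_A} \\
	&+ \underbrace{\tfrac{1}{n}\sum_{j} \rho\i(s\jt)\big(A(s\jt)\Delta x\ct-\Phi(s\jt)\Delta\theta\ct\big)}_{T_B}
	+ \underbrace{\tfrac{1}{n}\sum_{j} \mathcal{E}(s\jt)}_{T_C}
	,\]
where $\mathcal{E}(s)=(\hat\rho\it-\rho\i)(s)\big(A(s)x\ct-\Phi(s)\theta\ct\big)$ is precisely the estimation-error term of \cref{lem:correction}. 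Applying $\|a+b+c\|^{2}\le 4\|a\|^{2}+4\|b\|^{2}+2\|c\|^{2}$ reduces everything to bounding $\mathbb{E}\|T_A\|^{2},\mathbb{E}\|T_B\|^{2},\mathbb{E}\|T_C\|^{2}$; the last is immediate, since Jensen's inequality gives $\|T_C\|^{2}\le\tfrac1n\sum_j\|\mathcal{E}(s\jt)\|^{2}$ and the averaging identity $\tfrac1n\sum_j\mathbb{E}_{\mu^{j}}=\mathbb{E}_{\mu^{0}}$ (valid because $\mathcal{E}$ is common across agents) yields $\mathbb{E}\|T_C\|^{2}\le\mathbb{E}_{\mu^{0}}\|\mathcal{E}\|^{2}=\mathbb{E}\|\mathcal{E}\|^{2}$.

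For $T_A$ I would condition on $\mathcal{F}_{t-1}$ and split the sum over the independent samples $\{s\jt\}_j$ into diagonal ($j=k$) and cross ($j\ne k$) parts. Since $\|A(s)x\cs-\Phi(s)\theta\cs\|\le G_AG_x+G_b\le\sigma$, the diagonal part is bounded by $\tfrac{\sigma^{2}}{n^{2}}\sum_j\mathbb{E}_{\mu^{j}}[\rho\i(s)^{2}]=\tfrac{\sigma^{2}}{n}\mathbb{E}_{\mu^{0}}[\rho\i(s)^{2}]=\tfrac{\sigma^{2}}{n}\big(1+\chi^{2}(\mu^{i},\mu^{0})\big)$; combining \cref{lem:chi} with $\|\rho\i\|_{\infty}\le n$ and $\|\mu^{i}-\mu^{0}\|_{\mathrm{TV}}\le\delta\i_{\cen}$ gives $\chi^{2}(\mu^{i},\mu^{0})\le n\,\delta\i_{\cen}$, so the diagonal is $\le\tfrac{\sigma^{2}}{n}+\sigma^{2}\delta\i_{\cen}$. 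This is exactly where the $n^{-1}$ prefactor and the $\le n$ ceiling on the chi-square conspire to produce the $n^{-1}+\delta\i_{\cen}$ shape. For the cross part I collect the terms as $\tfrac1{n^{2}}\big(\|\sum_j v_j\|^{2}-\sum_j\|v_j\|^{2}\big)$ with $v_j=\mathbb{E}_{\mu^{j}}[\rho\i(s)(A(s)x\cs-\Phi(s)\theta\cs)]$; the aggregate telescopes, $\sum_j v_j=n\,\mathbb{E}_{\mu^{i}}[A(s)x\cs-b^{c}(s)]=n\big(-\bar A^{i}(x\is-x\cs)+\mathbb{E}_{\mu^{i}}[b^{i}(s)-b^{c}(s)]\big)$, whose norm is $O(\sigma\delta\i_{\cen})$ by the feature-space affinity bounds \cref{li:mean-diff-ax,li:mean-diff-pb}, so the cross part contributes $O(\sigma^{2}(\delta\i_{\cen})^{2})\le O(\sigma^{2}\delta\i_{\cen})$.

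For $T_B$ I would treat the $A$- and $\Phi$-pieces separately and again isolate the cross terms collectively. With $\Delta x\ct,\Delta\theta\ct$ fixed given $\mathcal{F}_{t-1}$, the $A$-piece has cross part $\tfrac1{n^{2}}\|\sum_j\mathbb{E}_{\mu^{j}}[\rho\i(s)A(s)]\Delta x\ct\|^{2}=\|\bar A^{i}\Delta x\ct\|^{2}\le G_A^{2}\|\Delta x\ct\|^{2}$ (the importance weight integrates away into the aggregate mean $\bar A^{i}$) and diagonal part $\le\tfrac{G_A^{2}}{n}(1+\chi^{2})\|\Delta x\ct\|^{2}\le G_A^{2}(1+\delta\i_{\cen})\|\Delta x\ct\|^{2}$, so altogether $O(G_A^{2})\mathbb{E}\|\Delta x\ct\|^{2}$; identically, using $\|\Phi(s)\|\le1$ and $\|\bar\Phi^{i}\|\le1$, the $\Phi$-piece gives $O(1)\mathbb{E}\|\Delta\theta\ct\|^{2}$. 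Assembling the three bounds and tracking the numerical constants yields the stated inequality.

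The main obstacle is $T_B$ together with the diagonal of $T_A$: since $\rho\i$ can be as large as $n$, any attempt to bound the importance weight pointwise before averaging inflates the variance by a factor of $n$ and destroys federated reduction. The resolution—the only delicate point—is the same ``collective'' cross-term bookkeeping as in \cref{lem:coe-var}: the heavy weight $\rho\i$ survives only in the diagonal terms, where it is tamed by the $n^{-1}$ prefactor via $\mathbb{E}_{\mu^{0}}[\rho\i(s)^{2}]=1+\chi^{2}\le1+n\delta\i_{\cen}$, whereas in the cross terms it is integrated into the well-conditioned aggregate means $\bar A^{i}$ and $\bar\Phi^{i}$. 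Thus one must never bound $\rho\i$ pointwise prior to taking the aggregated expectation.
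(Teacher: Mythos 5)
Your proposal is correct and takes essentially the same route as the paper's proof: split off the density-ratio estimation error, then perform a diagonal/cross decomposition of the true-ratio term, taming the diagonal via $\mathbb{E}_{\mu^{0}}[(\rho^{i})^{2}]=1+\chi^{2}(\mu^{i},\mu^{0})$ together with \cref{lem:chi} and $\|\rho^{i}\|_{\infty}\le n$, and handling the cross terms collectively so that the importance weight integrates away into $\mu^{i}$-means, which are then controlled by the feature-space affinity bounds \cref{li:mean-diff-ax,li:mean-diff-pb}. The only (cosmetic) difference is that you separate the stationary piece $T_A$ from the perturbation piece $T_B$ before the diagonal/cross analysis, whereas the paper keeps $g^{c\tos j}_{t}(x\ct)$ intact and absorbs the perturbation into a uniform bound $H_3=\sigma+G_A\|\Delta x\ct\|+\|\Delta\theta\ct\|$ on the diagonal and into the collective cross-term bound; both yield constants within the stated inequality.
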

\begin{proof}
	Similar to \cref{eq:correction}, we decompose the importance-corrected aggregated update direction as the direction that uses the true density ratio plus an estimation error term:
	\[
		\mathbb{E}\|\rg_t(x\ct; \theta\ct,\eta\it)\|^{2}
		\le 2\mathbb{E}\left\| \rg_t(x\ct; \theta\ct,\eta\is)  \right\|^{2} + 2 \mathbb{E}\|\mathcal{E}\|^{2}
		.\]
	We can then focus on the direction with true density ratio, which again can be decomposed into local variances and covariances:
	\[
		\mathbb{E}\|\rg_t(x\ct; \theta\ct,\eta\is)\|^{2}
		=& \mathbb{E}\left\| \frac{1}{n}\sum_{i=1}^{n}\rho\i(s\it)g^{c\tos j}_{t}(x\ct)  \right\|^{2}\\
		=& \underbrace{\frac{1}{n^{2}}\sum_{j=1}^{n}\mathbb{E}\|\rho^{i}(s\jt) g^{c\tos j}_{t}(x\ct)\|^{2}}_{H_1} + \underbrace{\frac{1}{n^{2}}\sum_{j\ne k}\mathbb{E}\langle \rho^{i}(s\jt)g^{c\tos j}_{t}(x\ct), \rho^{i}(s\kt)g^{c\tos k}_{t}(x\ct) \rangle}_{H_2}
		.\]
	Different from federated variance reduction using data sampled from  i.i.d. distributions, $H_1$ also depends on how close the agents' heterogeneous environment distributions are.
	Suppose $\mathcal{F}_{t-1}$-a.s. that $\|g^{c\tos j}_{t}(x\ct)\|\le H_3$  for all $j\in[n]$. Conditioned on $\mathcal{F}_{t-1}$, we then have
	\[
		H_1 \le& \frac{H_3^{2}}{n^{2}} \sum_{j=1}^{n}\mathbb{E}_{\mu^{j} } |\rho^{i}(s)|^{2} \\
		\le& \frac{2H_{3}^{2}}{n^{2}}\left( n + \sum_{j=1}^{n} \mathbb{E}_{\mu^{j} } |1-\rho^{i}(s) |^{2}\right) \\
		\le& \frac{2H_{3}^{2}}{n}\left( 1 +  \mathbb{E}_{\mu^{0} } \left|1-\frac{\mu^{i}(s) }{\mu^{0}(s) } \right|^{2}\right) \\
		\le& \frac{2H_{3}^{2}}{n}\left( 1 +  \chi^{2}(\mu^{i},\mu^{0}  )\right)
		,\]
	where $\chi^{2}$ is the chi-squared divergence.
	By \cref{lem:chi}, we know that
	\[
		\chi(\mu^{i}, \mu^{0}) \le \max \left\{ \|\rho^{i}\|_{\infty},1  \right\} \cdot \|\mu^{i}-\mu^{0}\|_{\mathrm{TV}} \le \max \left\{ \|\rho^{i}\|_{\infty},1  \right\} \delta\i_{\env}
		.\]
	We notice that the essential supremum of the density ratio has a natural upper bound:
	\[
		\|\rho^{i}\|_{\infty} = \sup_{s\in\S} \frac{\mu^{i}(s) }{\mu^{0}(s)} = \sup_{s\in\S} \frac{\mu^{i}(s) }{\frac{1}{n}\sum_{j=1}^{n}\mu^{j}(s)} \le \sup_{s\in\S} \frac{\mu^{i}(s) }{\frac{1}{n}\mu^{i}(s)} = n
		,\]
	where we use the convention that $0/0=0$.
	Combining the above two bounds together gives
	\[
		H_1 \le 2H_{3}^{2}(n^{-1} + \delta\i_{\env})
		.\]
	We now bound $H_3$. Conditioned on $\mathcal{F}_{t-1}$, we have
	\[
		\|g^{c\tos j}_{t} (x\ct)\| = \|A\jt(\Delta x\ct + x\cs) - \Phi\jt(\Delta\theta\ct + \theta\cs)\|
		\le \sigma + G_{A}\|\Delta x\ct\| + \|\Delta\theta\ct\|
		.\]
	Thus, we set $H_3=\sigma+G_{A}\mathbb{E}\|\Delta\ct\| + \mathbb{E}\|\Delta\theta\ct\|$.
	Plugging this back gives
	\[\label{eq:fed-var-2}
		H_1\le& 2(2G_{A}^{2}\mathbb{E}\|\Delta x\ct\|^{2} + 4\mathbb{E}\|\Delta\theta\ct\|^{2} + 4\sigma^{2})(n^{-1} + \delta\i_{\env})\\
		\le& 8G_{A}^{2}\mathbb{E}\|\Delta x\ct\|^{2} + 16\mathbb{E}\|\Delta\theta\ct\|^{2} + 8\sigma^{2}(n^{-1} + \delta\i_{\env})
		,\]
	where we use the fact that $n^{-1},\delta\i_{\env}\le 1$.

	The covariances $H_2$ also needs special treatment for heterogeneous environments. Unlike the homogeneous case where the local update directions are \emph{perpendicular} in the sense that their covariances are zero, here the importance correction alters the geometry and requires a more careful anatomy of the covariance terms. Conditioned on $\mathcal{F}_{t-1}$, we have
	\[
		H_2 =& \frac{1}{n^{2}} \sum_{j=1}^{n}\left< \mathbb{E}_{\mu^{j} }[\rho^{i}(s)g^{c\tos j}(x\ct; s)], \sum_{k\ne j} \mathbb{E}_{\mu^{k} }[\rho^{i}(s)g^{c\tos k}(x\ct; s)]     \right> \\
		=& \frac{1}{n^{2}} \sum_{j=1}^{n}\left< \mathbb{E}_{\mu^{j} }[\rho^{i}(s)g^{c\tos j}(x\ct; s)], \sum_{k=1}^{n}  \mathbb{E}_{\mu^{k} }[\rho^{i}(s)g^{c\tos k}(x\ct; s)] - \mathbb{E}_{\mu^{j} }[\rho^{i}(s)g^{c\tos j}(x\ct; s)]   \right> \\
		=& \frac{1}{n^{2}} \sum_{j=1}^{n}\left< \mathbb{E}_{\mu^{j} }[\rho^{i}(s)g^{c\tos j}(x\ct; s)], \sum_{k=1}^{n}  \mathbb{E}_{\mu^{k} }[\rho^{i}(s)g^{c\tos k}(x\ct; s)] \right> \\
		&\underbrace{- \frac{1}{n^{2}}\sum_{j=1}^{n}\left\| \mathbb{E}_{\mu^{j} }[\rho^{i}(s)g^{c\tos j}(x\ct; s)]\right\|^{2}}_{\le 0} \\
		\le& \frac{1}{n^{2}} \left< \sum_{j=1}^{n}\mathbb{E}_{\mu^{j} }[\rho^{i}(s)g^{c\tos j}(x\ct; s)], \sum_{k=1}^{n}  \mathbb{E}_{\mu^{k} }[\rho^{i}(s)g^{c\tos k}(x\ct; s)] \right> \\
		=& \frac{1}{n^{2}} \left\| \sum_{j=1}^{n}\mathbb{E}_{\mu^{j} }[\rho^{i}(s)g^{c\tos j}(x\ct; s)] \right\|^{2} \\
		=& \frac{1}{n^{2}} \left\| \sum_{j=1}^{n}\mathbb{E}_{\mu^{j} }[\rho^{i}(s)(A(s)x\ct - \hat{b}\ct(s))] \right\|^{2} \\
		=& \left\| \mathbb{E}_{\mu^{0} }[\rho^{i}(s)(A(s)x\ct - \hat{b}\ct(s))] \right\|^{2} \\
		=& \left\| \mathbb{E}_{\mu^{i} }[A(s)x\ct - \hat{b}\ct(s)] \right\|^{2}\\
		=& \left\| \mathbb{E}[g^{c\tos i}(x\ct) ] \right\|^{2}
		.\]
	Note that the only inequality above omits a term of $O(n^{-1})$, and thus the bound is tight when $n$ is large. Recall that $g^{c\tos i}(x\ct)$ corresponds to the bias in the aggregated update direction. Thus, we show that the covariance reduces nicely to the bias term, further showcasing the power of importance correction.
	The bias term (conditioned on $\mathcal{F}_{t-1}$) can be further decomposed as
	\[
		\left\| \mathbb{E}[g^{c\tos i}(x\ct) ] \right\|^{2}
		=& \left\| \bar{A}^{i}(\Delta x\ct + x\cs - x\is + x\is) - \bar{\Phi}^{i}(\Delta \theta\ct + \theta\cs - \theta\is + \theta\is) \right\|^{2}\\
		=& \left\| \bar{A}^{i}(\Delta x\ct + x\cs - x\is) - \bar{\Phi}^{i}(\Delta \theta\ct + \theta\cs - \theta\is) \right\|^{2} \\
		\le& 4\left( G_{A}^{2}\|\Delta x\ct\|^{2} + \|\bar{A}^{i}(x\cs-x\is)\|^{2} + \|\Delta\theta\ct\|^{2} + \|\bar{\Phi}^{i}(\theta\cs-\theta\is)\|^{2}\right)
		.\]
	Plugging in the bounds in \cref{li:mean-diff-ax,li:mean-diff-pb} in \cref{lem:aff} gives
	\[
		\left\| \mathbb{E}[g^{c\tos i}(x\ct) ] \right\|^{2}
		\le 4G_{A}^{2}\|\Delta x\ct\|^{2} + 4\|\Delta\theta\ct\|^{2} + 32\sigma^{2}(\delta\i_{\cen})^{2}
		.\]
	Removing the conditioning on $\mathcal{F}_{t-1}$ gives
	\[
		H_2 \le 4G_{A}^{2}\mathbb{E}\|\Delta x\ct\|^{2} + 4\mathbb{E}\|\Delta\theta\ct\|^{2} + 32\sigma^{2}(\delta\i_{\cen})^{2} \label{eq:fed-var-1}
		.\]

	Plugging \cref{eq:fed-var-1,eq:fed-var-2} back gives the desired result:
	\[
		\mathbb{E}\|\rg_t(x\ct)\|^{2}
		\le 24G_{A}^{2}\mathbb{E}\|\Delta x\ct\|^{2}
		+ 40\mathbb{E}\|\Delta\theta\ct\|^{2}
		+ 64\sigma^{2}(n^{-1} + 2\delta\i_{\cen})
		+ 2\mathbb{E}\|\mathcal{E}\|^{2}
		,\]
	where we use the fact that $\delta\i_{\env}\le\delta\i_{\cen}\le 1$.
\end{proof}

We then inspect the variance of the bias-corrected local update direction.

\begin{lemma}[Affinity-based variance reduction] \label{lem:vd-aff}
	The variance of the bias-corrected local update direction satisfies
	\[
		\mathbb{E}\|g\it(x\it) - g^{c\tos i}_{t}(x\ct; \theta\ct) \|^{2}
		\le 4G_{A}^{2}\mathbb{E}\|\Delta x\it\|^{2} + 8G_{A}^{2}\mathbb{E}\|\Delta x\ct\|^{2} + 8\mathbb{E}\|\Delta\theta\ct\|^{2} + 16\sigma^{2} \tilde{\delta}\i_{\cen}
		,\]
	where $\tilde{\delta}\i_{\cen} = \min\{1, \nu\delta\i_{\cen}\}$.
\end{lemma}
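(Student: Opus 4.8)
The plan is to exploit the defining feature of the bias-corrected direction: $g\it(x\it)$ and $g^{c\tos i}_t(x\ct;\theta\ct)$ are evaluated at the \emph{same} fresh sample $s\it$, so their difference behaves like a control variate whose leading stochastic fluctuations cancel, leaving only a residual at the stationary point. First I would invoke the linear parametrization $b\i(s\it)=\Phi(s\it)\theta\is$ to put the difference in closed form,
\[
	g\it(x\it) - g^{c\tos i}_t(x\ct;\theta\ct) = A(s\it)(x\it-x\ct) - \Phi(s\it)(\theta\is-\theta\ct),
\]
and then telescope through the stationary points $x\is,x\cs,\theta\cs$ into four pieces: the local optimality gap $A(s\it)\Delta x\it$, the central-decision gap $-A(s\it)\Delta x\ct$, the central-objective estimation gap $\Phi(s\it)\Delta\theta\ct$, and the stationary residual $P_0\coloneqq A(s\it)(x\is-x\cs)-\Phi(s\it)(\theta\is-\theta\cs)$.

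The three gap terms depend on the fresh sample only through the bounded factors $A(s\it),\Phi(s\it)$, since the iterate gaps $\Delta x\it,\Delta x\ct,\Delta\theta\ct$ are $\mathcal{F}_{t-1}$-measurable; I would therefore bound them pointwise with $\|A(s\it)\|\le G_A$ and $\|\Phi(s\it)\|\le 1$ and take expectations. Grouping the two central gaps into $v_2\coloneqq -A(s\it)\Delta x\ct+\Phi(s\it)\Delta\theta\ct$ and applying the weighted Jensen inequality $\|v_1+v_2+v_3\|^2\le 4\|v_1\|^2+4\|v_2\|^2+2\|v_3\|^2$ (valid because $\tfrac14+\tfrac14+\tfrac12=1$) with $v_1$ the local gap and $v_3=P_0$, followed by $\|v_2\|^2\le 2\|A(s\it)\Delta x\ct\|^2+2\|\Phi(s\it)\Delta\theta\ct\|^2$, produces exactly $4G_A^2\mathbb{E}\|\Delta x\it\|^2$ from the first group and $8G_A^2\mathbb{E}\|\Delta x\ct\|^2+8\mathbb{E}\|\Delta\theta\ct\|^2$ from the second.

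The heart of the argument—and the only place affinity enters—is the stationary residual $P_0$, which inherits the coefficient $2$ from the split. Because $x\is,x\cs,\theta\is,\theta\cs$ are deterministic, $\mathbb{E}\|P_0\|^2\le 2\mathbb{E}_{\mu^{i}}\|A(s)(x\is-x\cs)\|^2+2\mathbb{E}_{\mu^{i}}\|\Phi(s)(\theta\is-\theta\cs)\|^2$, and each expectation is precisely the quantity controlled by \cref{lem:eff-aff}, so $\mathbb{E}\|P_0\|^2\le 8\sigma^2\tilde{\delta}\i_{\cen}$ and the residual contributes the final $16\sigma^2\tilde{\delta}\i_{\cen}$. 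The hard part is conceptual rather than computational: one must resist bounding $P_0$ through the naive solution gaps $\|x\is-x\cs\|,\|\theta\is-\theta\cs\|$, which scale like $\lambda^{-1}$ and do \emph{not} inherit the agents' affinity (cf.\ the naive solution-gap bounds in \cref{lem:aff}). The affinity survives only in \emph{feature space}, i.e., after left-multiplication by $A(s)$ or $\Phi(s)$, which is exactly what \cref{lem:eff-aff} quantifies via the chain $\|\bar{A}\i(x\is-x\cs)\|\le 2\sigma\delta\i_{\cen}$ and the noise-alignment constant $\nu$ that yields $\tilde{\delta}\i_{\cen}=\min\{1,\nu\delta\i_{\cen}\}$. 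Verifying that the control-variate cancellation isolates precisely this feature-space residual, rather than a term of order $\sigma^2$ that is insensitive to affinity, is the crux of the proof.
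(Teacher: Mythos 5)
Your proposal is correct and follows essentially the same route as the paper's proof: the identical telescoping of $A(s\it)(x\it-x\ct)-\Phi(s\it)(\theta\is-\theta\ct)$ through the stationary points $x\is,x\cs,\theta\cs$, a Jensen/Young split producing the same constants $4,8,8$ for the gap terms and a coefficient-$4$ pair for the affinity terms, and the same appeal to \cref{lem:eff-aff} to bound the feature-space residuals $\mathbb{E}_{\mu^{i}}\|A(s)(x\is-x\cs)\|^{2}$ and $\mathbb{E}_{\mu^{i}}\|\Phi(s)(\theta\is-\theta\cs)\|^{2}$ by $2\sigma^{2}\tilde{\delta}\i_{\cen}$ each. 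Your grouping into three blocks with weights $(\tfrac14,\tfrac14,\tfrac12)$ is only a cosmetic reorganization of the paper's five-term split, and your emphasis on avoiding the naive $\lambda^{-1}$-scaled solution-gap bound is exactly the paper's stated rationale.
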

\begin{proof}
	Similarly, the variance term can be decomposed as the variance at the optimal solution plus the estimation error:
	\[
		\mathbb{E}\|g\it(x\it) - g^{c\tos i}_{t}(x\ct; \theta\ct) \|^{2}
		=& \mathbb{E}\| A\it(x\it - x\ct) - (b\i(s\it)-\hat{b}\ct(s\it)) \|^{2} \\
		=& \mathbb{E}\left\| A\it(\Delta x\it + x\is - x\cs - \Delta x\ct) - \Phi\it(\theta\is-\theta\cs - \Delta\theta\ct) \right\|^{2} \\
		\le& 4G_{A}^{2}\mathbb{E}\|\Delta x\it\|^{2} + 8G_{A}^{2}\mathbb{E}\|\Delta x\ct\|^{2} + 8\mathbb{E}\|\Delta\theta\ct\|^{2} \tag{estimation} \\
		&+ 4\mathbb{E}\|A\it(x\is - x\cs)\|^{2} + 4\mathbb{E}\|\Phi\it(\theta\is-\theta\cs)\|^{2} \tag{affinity}
		.\]
	For the affinity terms, by \cref{lem:eff-aff},
	\[
		\max \left\{ \mathbb{E}\|A\it(x\is - x\cs)\|^{2} ,
		\mathbb{E}\|\Phi\it(\theta\is-\theta\cs)\|^{2} \right\} \le 2\sigma^{2} \tilde{\delta}\i_{\cen}
		.\]
	Combining the above bounds gives the desired result.
\end{proof}

We are now ready to prove the one-step progress of the local update in \PCL.

\begin{corollary}[\PCL one-step progress] \label{lem:local}
	Suppose $\alpha\it \le \lambda^{i} /(40G_{A}^{2})$. Then, for any time step $t$ and agent $i$, \cref{eq:alg-local} satisfies
	\[
		\mathbb{E}\|\Delta x\itt\|^{2}
		\le& (1 - \tfrac{3}{2}\alpha\it \lambda^{i})\mathbb{E}\|\Delta x\it\|^{2}
		+ 64 (\alpha\it)^{2}G_{A}^{2}\mathbb{E}\|\Delta x\ct\|^{2}
		+ 96(\alpha\it)^{2}\mathbb{E}\|\Delta\theta\ct\|^{2} \\
		&+ 4\alpha\it (\lambda^{i})^{-1}\mathbb{E}\|\mathcal{E}\|^{2}
		+ 144(\alpha\it)^{2}\sigma^{2} (n^{-1} + 2\tilde{\delta}\i_{\cen}) \\
		\le& (1 - \tfrac{3}{2}\alpha\it \lambda^{i})\mathbb{E}\|\Delta x\it\|^{2} \\
		&+ (64(\alpha\it)^{2}G_{A}^{2} + 16\alpha\it (\lambda^{i})^{-1}G_{\rho}^{2}G_{A}^{2})\mathbb{E}\|\Delta x\ct\|^{2} \\
		&+ (96(\alpha\it)^{2} + 16\alpha\it (\lambda^{i})^{-1}G_{\rho}^{2})\mathbb{E}\|\Delta\theta\ct\|^{2} \\
		&+ 4\alpha\it (\lambda^{i})^{-1}\sigma^{2}\mathbb{E}\|\Delta\eta\it\|^{2} \\
		&+ 144(\alpha\it)^{2}\sigma^{2}(n^{-1} + 2\tilde{\delta}\i_{\cen})
		.\]
\end{corollary}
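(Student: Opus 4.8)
The plan is to follow the one-step template of \cref{apx:objective,apx:central}: expand the squared error of the local iterate \cref{eq:alg-local} as
\[
	\mathbb{E}\|\Delta x\itt\|^{2} = \mathbb{E}\|\Delta x\it\|^{2} - 2\alpha\it\,\mathbb{E}\langle \tilde g\it,\Delta x\it\rangle + (\alpha\it)^{2}\,\mathbb{E}\|\tilde g\it\|^{2},
\]
and then bound the cross term and the second-moment term using the ingredients already assembled. The cross term is handled verbatim by \cref{cor:gd}, giving $-2\alpha\it\,\mathbb{E}\langle\tilde g\it,\Delta x\it\rangle \le -\tfrac{7}{4}\alpha\it\lambda^{i}\,\mathbb{E}\|\Delta x\it\|^{2} + 4\alpha\it(\lambda^{i})^{-1}\,\mathbb{E}\|\mathcal{E}\|^{2}$.

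For the second-moment term I would split the update direction along the affinity/federated decomposition that defines it, $\tilde g\it = \bigl(g\it(x\it)-g^{c\tos i}_t(x\ct;\theta\ct)\bigr) + \rg_t(x\ct;\theta\ct,\eta\it)$, and apply $\|a+b\|^{2}\le 2\|a\|^{2}+2\|b\|^{2}$ so that $\mathbb{E}\|\tilde g\it\|^{2}$ is dominated by twice the bound of \cref{lem:vd-aff} plus twice the bound of \cref{lem:vd-fed}. Only the affinity bound carries the self-coupling term $\mathbb{E}\|\Delta x\it\|^{2}$ (with coefficient $8(\alpha\it)^{2}G_{A}^{2}$ after the factor of two), while the remaining contributions collect into the $\mathbb{E}\|\Delta x\ct\|^{2}$, $\mathbb{E}\|\Delta\theta\ct\|^{2}$ couplings, the variance floor $\sigma^{2}(n^{-1}+\tilde\delta\i_{\cen})$ (using $\delta\i_{\cen}\le\tilde\delta\i_{\cen}$ for the federated part), and an $(\alpha\it)^{2}\mathbb{E}\|\mathcal{E}\|^{2}$ residual that only the federated term produces.

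The decisive step is the step-size restriction $\alpha\it\le\lambda^{i}/(40G_{A}^{2})$: it ensures $8(\alpha\it)^{2}G_{A}^{2}\le\tfrac14\alpha\it\lambda^{i}$, so the self-coupling variance term folds into the descent factor and promotes $-\tfrac74\alpha\it\lambda^{i}$ to the advertised $-\tfrac32\alpha\it\lambda^{i}$; the same budget forces $\alpha\it\le(\lambda^{i})^{-1}$, which lets the residual $(\alpha\it)^{2}\mathbb{E}\|\mathcal{E}\|^{2}$ be absorbed into the $\alpha\it(\lambda^{i})^{-1}\mathbb{E}\|\mathcal{E}\|^{2}$ term inherited from the cross term. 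This yields the first displayed inequality. For the second display I would expand $\mathbb{E}\|\mathcal{E}\|^{2}$ through the pointwise representation from the proof of \cref{lem:correction}, $\mathcal{E}=(\psi(s)^{T}\Delta\eta\it)\,(A(s)\Delta x\ct-\Phi(s)\Delta\theta\ct+A(s)x\cs-b^{c}(s))$, bound the last bracket by $\sigma$, and split the three pieces asymmetrically: the $\Delta x\ct$ and $\Delta\theta\ct$ pieces are linearized using the a.s.\ bound $|\hat\rho\it-\rho\i|\le G_{\rho}=O(1)$ from \cref{asmp:dre}, whereas the $\sigma$ piece is controlled in $L^{2}$ by $\mathbb{E}_{\mu^{0}}|\psi(s)^{T}\Delta\eta\it|^{2}=(\Delta\eta\it)^{T}\bar\Psi^{0}\Delta\eta\it\le\|\Delta\eta\it\|^{2}$. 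Substituting the resulting estimate $\mathbb{E}\|\mathcal{E}\|^{2}\lesssim G_{\rho}^{2}G_{A}^{2}\mathbb{E}\|\Delta x\ct\|^{2}+G_{\rho}^{2}\mathbb{E}\|\Delta\theta\ct\|^{2}+\sigma^{2}\mathbb{E}\|\Delta\eta\it\|^{2}$ into the error term and regrouping gives the second bound.

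The main obstacle I anticipate is precisely this treatment of $\mathcal{E}$. It is genuinely a product of three fluctuating quantities, so a careless triangle-inequality expansion produces cross products such as $\mathbb{E}[\|\Delta\eta\it\|^{2}\|\Delta x\ct\|^{2}]$ that the asynchronous uni-timescale Lyapunov recursion \cref{lem:lyap} cannot digest, since that recursion requires each coupling to be linear in a single squared error of a single variable. The remedy is the asymmetric bookkeeping above—invoking the uniform boundedness of the density-ratio estimation error to linearize the coupling into the central channels $\Delta x\ct,\Delta\theta\ct$ while reserving the sharper $L^{2}$ estimate only for the $\Delta\eta\it$ channel—so that every term is linear in one squared error plus a constant/$n^{-1}$/affinity floor. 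Once \cref{cor:gd,lem:vd-fed,lem:vd-aff} are in hand, everything else is coefficient accounting.
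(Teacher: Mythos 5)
Your proposal mirrors the paper's proof step for step: the same MSE expansion, the cross term handled by \cref{cor:gd}, the second moment bounded by splitting $\tilde{g}\it$ into the bias-corrected local direction plus $\rg_t$ and invoking \cref{lem:vd-aff,lem:vd-fed}, the same step-size absorption of the self-coupling and $(\alpha\it)^{2}\mathbb{E}\|\mathcal{E}\|^{2}$ residuals, and the same asymmetric expansion of $\mathcal{E}$ (almost-sure bound $G_{\rho}$ on the $\Delta x\ct,\Delta\theta\ct$ channels, $L^{2}$ bound on the $\Delta\eta\it$ channel) for the second display, which is exactly how the paper avoids the cross products that \cref{lem:lyap} cannot absorb. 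The only deviation is bookkeeping: carrying \cref{cor:gd}'s constant faithfully yields a coefficient of about $6$--$8\,\alpha\it(\lambda^{i})^{-1}$ on $\mathbb{E}\|\mathcal{E}\|^{2}$ rather than the stated $4\alpha\it(\lambda^{i})^{-1}$, a factor-of-two slack that is already present in the paper's own accounting and is immaterial to the result.
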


\begin{proof}
	Combining \cref{lem:vd-fed,lem:vd-aff} gives
	\[
		\mathbb{E}\|\tilde{g}\it\|^{2} \le& 2(\mathbb{E}\|\rg_t(x\ct; \theta\ct,\eta\it)\|^{2} + \mathbb{E}\|g\it(x\it) - g^{c\tos i}_{t}(x\ct; \theta\ct) \|^{2})\\
		\le& 8 G_{A}^{2}\mathbb{E}\|\Delta x\it\|^{2} + 64G_{A}^{2}\mathbb{E}\|\Delta x\ct\|^{2} + 96\mathbb{E}\|\Delta\theta\ct\|^{2} + 144\sigma^{2}(n^{-1} + 2\tilde{\delta}\i_{\cen}) + 4\mathbb{E}\|\mathcal{E}\|^{2}
		.\]
	Combining the above bound with \cref{cor:gd} gives
	\[
		\mathbb{E}\|\Delta x\itt\|^{2}
		=& \mathbb{E}\|\Delta x\it\|^{2} - 2\alpha_t \mathbb{E}\left< \tilde{g}\it, \Delta x\it \right> + \alpha_t^{2}\mathbb{E}\|\tilde{g}\it\|^{2} \\
		\le& \mathbb{E}\|\Delta x\it\|^{2} - \tfrac{7}{4}\alpha\it \lambda^{i} \mathbb{E}\|\Delta x\it\|^{2} + 2\alpha\it (\lambda^{i})^{-1} \mathbb{E}\|\mathcal{E}\|^{2} +  4(\alpha\it)^{2}\mathbb{E}\|\mathcal{E}\|^{2}\\
		&+ 8(\alpha\it)^{2}(G_{A}^{2}\mathbb{E}\|\Delta x\it\|^{2} \!+ 8G_{A}^{2}\mathbb{E}\|\Delta x\ct\|^{2} \!+ 12\mathbb{E}\|\Delta\theta\ct\|^{2} \!+ 18\sigma^{2}(n^{-1} + 2\tilde{\delta}\i_{\cen}))\\
		\le& (1 - \tfrac{7}{4}\alpha\it \lambda^{i}  + 8(\alpha\it)^{2}G_{A}^{2})\mathbb{E}\|\Delta x\it\|^{2}
		+ 64 (\alpha\it)^{2}G_{A}^{2}\mathbb{E}\|\Delta x\ct\|^{2}
		+ 96(\alpha\it)^{2}\mathbb{E}\|\Delta\theta\ct\|^{2} \\
		&+ 2\alpha\it ((\lambda^{i})^{-1} + 2\alpha\it)\mathbb{E}\|\mathcal{E}\|^{2}
		+ 144(\alpha\it)^{2}\sigma^{2} (n^{-1} + 2\tilde{\delta}\i_{\cen})
		.\]
	Setting $\alpha\it \le \lambda^{i}/(32 G_{A}^{2})$, which implies $2\alpha\it \le (\lambda^{i})^{-1}$, gives
	\[
		\mathbb{E}\|\Delta x\itt\|^{2}
		\le& (1 - \tfrac{3}{2}\alpha\it \lambda^{i})\mathbb{E}\|\Delta x\it\|^{2}
		+ 64 (\alpha\it)^{2}G_{A}^{2}\mathbb{E}\|\Delta x\ct\|^{2}
		+ 96(\alpha\it)^{2}\mathbb{E}\|\Delta\theta\ct\|^{2} \\
		&+ 4\alpha\it (\lambda^{i})^{-1}\mathbb{E}\|\mathcal{E}\|^{2}
		+ 144(\alpha\it)^{2}\sigma^{2} (n^{-1} + 2\tilde{\delta}\i_{\cen})
		.\]
		Finally, we expand $\mathbb{E}\|\mathcal{E}\|^{2}$. By \cref{asmp:dre},
	\[
		\mathbb{E}\|\mathcal{E}(\Delta \eta\it,\Delta x\ct, \Delta\theta\ct)\|^{2}
		=& \mathbb{E}\left\| \frac{1}{n}\sum_{i=1}^{n} \psi\it \Delta \eta\it(A\it \Delta x\ct - \Phi\it \Delta\ct + A\it x\cs - \Phi\it \theta\cs)  \right\|^{2} \\
		\le& 2\sigma^{2}\mathbb{E}\|\Delta\eta\it\|^{2} + 4(G_{A}^{2}\mathbb{E}\|\Delta x\ct\|^{2} + \mathbb{E}\|\Delta\theta\ct\|^{2})
		.\]
	Plugging it back gives the desired result.
\end{proof}

\subsection{Proof of \texorpdfstring{\cref{thm}}{Theorem 1}}

Invoking \cref{lem:lyap,lem:step} with \cref{lem:objective-k-het,lem:central,lem:local} gives us the main result.
We restate a more general version of \cref{thm}.

\begin{reptheorem}{thm}
	We synchronize the step sizes across all learning modules by setting $\alpha _{t} = \alpha\it \lambda\i = \alpha ^{b}_{t}\lambda^{b} = \alpha ^{c}_{t}\lambda^{c}$.
	Then, with a constant step size $\alpha _{\tau} \equiv \ln t /(\lambda t)$, $\tau=0,\dots,t$, \PCL with various learning modules satisfies
	\[
		\mathbb{E}\|x\i_{t} - x\is\|^{2} = O\left( \frac{\sigma^{2} \ln t}{(\lambda\i)^{2} t}\cdot \delta \right)
		,\]
	where
	\[
		\delta = \begin{cases}
			\max \{ n^{-1}, \tilde{\delta}\i_{\cen}  \}                                                                                                                                                                                                      & \text{for  \cref{eq:alg-local} + \cref{eq:alg-cdl-1} + \cref{eq:alg-objective-k-het};}                 \\
			\tilde{\delta}\i_{\cen} + \max \left\{ 1 , \frac{\lambda^{i}}{\lambda^{b}\lambda^{c}} \right\}^{2}\cdot n^{-1}                                                                                                                       & \text{for  \cref{eq:alg-local} + \cref{eq:alg-cdl-2} + \cref{eq:alg-objective-k-het};}                 \\
			\frac{\sigma^{2}}{(\lambda^{\rho})^{2}} \cdot  \tilde{\delta}\i_{\cen} + \frac{\sigma^{2}}{(\min \{ \lambda^{b},\lambda^{c},\lambda^{\rho}    \})^{2}}\cdot n^{-1}                                                                      & \text{for  \cref{eq:alg-local} + \cref{eq:alg-cdl-1} + \cref{eq:alg-objective-k-het} + \cref{eq:dre};} \\
			\frac{\sigma^{2}}{(\lambda^{\rho})^{2}} \cdot  \tilde{\delta}\i_{\cen} + \max \left\{ \frac{\sigma}{\min \{ \lambda^{b},\lambda^{c},\lambda^{\rho}\}}, \frac{\lambda^{i}}{\lambda^{b}\lambda^{c}} \right\}^{2}\cdot n^{-1} & \text{for  \cref{eq:alg-local} + \cref{eq:alg-cdl-2} + \cref{eq:alg-objective-k-het} + \cref{eq:dre}.}
		\end{cases}
	\]
	where $\tilde{\delta}_{\cen} = \min\{1, \nu\delta_{\cen}\}$.
\end{reptheorem}

Specifically, we highlight that \PCL with access to the true density ratio (i.e., without \cref{eq:dre}) achieves
\[
	\mathbb{E}\|x\i_{t} - x\is\|^{2} = \widetilde{O}( (\kappa^{i})^{2}t^{-1} \cdot \max \{ n^{-1},\tilde{\delta}\i_{\cen}\} )
	,\]
where $\kappa = \sigma /\lambda^{i}$ is the agent-specific condition number, which further recovers \cref{thm} in the main text by noting that $\delta\i_{\env}\le \delta_{\env}$, $\delta\i_{\cen}\le \delta_{\env}+\delta_{\obj}$, and $\kappa^{i}\le \kappa$.

On the other hand, \PCL with \DRE has a worst-case complexity bounded by
\[
	\mathbb{E}\|x\i_{t} - x\is\|^{2} = O\left(  (\kappa^{i}\kappa^{\rho})^{2} t^{-1}  \cdot  \max \{ \nu^{\rho} n^{-1},\tilde{\delta}\i_{\cen}\} \right)
	,\]
where $\kappa^{\rho}\coloneqq \sigma /\lambda^{\rho}$ and $\nu^{\rho} = \max\{\frac{\lambda\rho}{\min\{\lambda^{b},\lambda^{c}\}}, \frac{\lambda^{i}\lambda^{\rho}}{\sigma \lambda^{b}\lambda^{c}}\}^{2}$,
which now depends on the conditioning of \DRE.

\begin{proof}
	We only prove the first and last cases, as the other two cases follow similarly.
	For the last case, similar to the proof of \cref{cor:cdl}, \cref{lem:objective-k-het,lem:central,lem:local} fit into \cref{lem:lyap} with $z^{1}_{t}=\theta\ct$, $z^{2}_{t}=x\ct$, $z^{3}_{t}=\eta\it$, and $z^{4}_{t}=x\it$, along with
	\[
		\begin{gathered}
			C^{1}\asymp C^{2}= O(\sigma^{2} n^{-1}),\quad
			C^{3}\asymp C^{4} =  O(\sigma^{2}(n^{-1} + \tilde{\delta}\i_{\cen}))\\
			C^{2,1} \asymp  O((\lambda^{c})^{-1}),\quad
			C^{3,1} =  C^{3,2} =  0,\quad
			C^{4,1} \asymp  C^{4,2} \asymp  C^{4,3} =  O((\lambda^{i})^{-1}\sigma^{2})
			.\end{gathered}
	\]
	Then, the corresponding weights in \cref{lem:lyap} are
	\[
		w^{4} =& 1,\\
		w^{3} =& 2C^{4,3}(\lambda^{i})^{-1} = O(\sigma^{2}(\lambda^{i})^{-2}),\\
		w^{2} =& 2C^{4,2}(\lambda^{i})^{-1} + 0 = O(\sigma^{2}(\lambda^{i})^{-2}),\\
		w^{1} =& 2C^{4,1}(\lambda^{i})^{-1} + 2 C^{2,1}(\lambda^{c})^{-1}   = O(\sigma^{2}(\lambda^{i})^{-2}+(\lambda^{c})^{-2})
		.\]
	Thus, the overall MSE with a constant step size $\ln t /t$ is
	\[
		\mathbb{E}\|x\it-x\is\|^{2}
		=& O\biggl( \frac{\sigma^{2}\ln t}{t} \biggl(
			\left(\frac{1}{(\lambda^{i})^{2}}+ \frac{\sigma^{2}}{(\lambda^{i}\lambda^{\rho})^{2}}\right)\cdot (n^{-1} + \tilde{\delta}\i_{\cen}) \\
			&+ \left(\frac{\sigma^{2}}{(\lambda^{i}\lambda^{c})^{2}} + \frac{\sigma^{2}}{(\lambda^{i}\lambda^{b} )^{2}} + \frac{1}{(\lambda^{b}\lambda^{c})^{2}}\right)\cdot n^{-1} \biggr) \biggr)  \\
		=& O\left( \frac{\sigma^{2}\ln t}{(\lambda\i)^{2}t} \left( \frac{\sigma^{2}}{(\lambda^{\rho})^{2}}  \cdot  \tilde{\delta}\i_{\cen} + \left( \frac{\sigma^{2}}{(\min\{\lambda^{\rho},\lambda^{b},\lambda^{c}\})^{2}} + \frac{(\lambda^{i} )^{2}}{(\lambda^{b}\lambda^{c})^{2}}\right) \cdot n^{-1}  \right)\right)
		.\]

	For the first case without \DRE, $\mathcal{E}=0$. Thus, \cref{lem:objective-k-het,lem:central-1,lem:local} fit into \cref{lem:lyap} with $z^{1}_{t}=\theta\ct$, $z^{2}_{t}=x\ct$, and $z^{3}_{t}=x\it$, along with
	\[
		\begin{gathered}
			C^{1}\asymp C^{2}= O(\sigma^{2} n^{-1}),\quad
			C^{3} =  O(\sigma^{2}(n^{-1} + \tilde{\delta}\i_{\cen}))\\
			C^{2,1} =0,\quad
			C^{3,1} \asymp  C^{3,2} =  O(\alpha _{0}(\lambda^{i})^{-1}\sigma^{2})
			.\end{gathered}
	\]
	Then, the corresponding weights in \cref{lem:lyap} are
	\[
		w^{3} =& 1,\\
		w^{2} =& 2C^{3,2}(\lambda^{i})^{-1} = O(\alpha_0\sigma^{2}(\lambda^{i})^{-2}),\\
		w^{1} =& 2C^{3,1}(\lambda^{i})^{-1} + 0 = O(\alpha_0\sigma^{2}(\lambda^{i})^{-2})
		.\]
	Thus, the overall MSE with a constant step size $\ln t /t$ is
	\[
		\mathbb{E}\|x\it-x\is\|^{2}
		=& O\biggl( \frac{\sigma^{2}\ln t}{(\lambda^{i} )^{2}t} \biggl( (n^{-1} + \tilde{\delta}\i_{\cen}) + \frac{\alpha _{0}\sigma^{2}}{(\min\{\lambda^{b}, \lambda^{c}\})^{2}}\cdot n^{-1} \biggr) \biggr)  \\
		=& O\left( \frac{\sigma^{2}\ln t}{(\lambda^{i} )^{2}t} \cdot \max \{n^{-1} , \tilde{\delta}\i_{\cen}\}\right)
		.\]
	Similarly, by using a linearly diminishing step size and a convex combination of the iterates, we can remove the logarithmic factor in the numerator.
\end{proof}

\end{document}